%% file: neurips_2026.tex
\documentclass{article}

\usepackage[preprint]{neurips_2026}
\usepackage{amsfonts}
\usepackage{graphicx}
\usepackage{bm}

\usepackage[utf8]{inputenc} 
\usepackage[T1]{fontenc}    
\usepackage{hyperref}       
\usepackage{url}            
\usepackage{booktabs}       
\usepackage{amsfonts}       
\usepackage{nicefrac}       
\usepackage{microtype}      
\usepackage{xcolor}         
\usepackage{amsmath}
\usepackage{amsthm}
\usepackage{comment}
\usepackage{algorithm}
\usepackage{cleveref}
\usepackage{algorithmic}  
\usepackage{xcolor}
\usepackage{multirow}
\usepackage{booktabs}

\newcommand{\secondbest}{\rlap{\textsuperscript{*}}}

\newtheorem{theorem}{Theorem}[section]
\newtheorem{proposition}{Proposition}[section]
\newtheorem{lemma}{Lemma}[section]
\newtheorem{definition}{Definition}[section]

\DeclareMathOperator{\Law}{Law}
\DeclareMathOperator*{\argmin}{arg\,min}

\DeclareMathOperator{\Log}{Log}

\title{ReSFlow: Rectifying Categorical Flows on Statistical Manifolds for One-Step Generation}

\author{%
  \normalfont
  \begin{tabular*}{\textwidth}[t]{@{\extracolsep{\fill}}ccc@{}}
    \textbf{Xiaohuan Jia}\textsuperscript{1}
    & \textbf{Renzhe Xu}\textsuperscript{2}
    & \textbf{Xiao Wang}\textsuperscript{2}
    \\[0.25em]
    {\normalfont\normalsize\texttt{xiaohuanjia@mail.imu.edu.cn}}
    & {\normalfont\normalsize\texttt{xurenzhe@sufe.edu.cn}}
    & {\normalfont\normalsize\texttt{wangxiao@sufe.edu.cn}}
    \\[1.5em]
    \multicolumn{3}{c}{%
      \begin{tabular*}{0.70\textwidth}{@{\extracolsep{\fill}}cc@{}}
        \textbf{Jiayun Wu}\textsuperscript{3}
        & \textbf{Shaohua Fan}\textsuperscript{1}\thanks{Corresponding author.}
        \\[0.25em]
        {\normalfont\normalsize\texttt{jiayunw@cmu.edu}}
        & {\normalfont\normalsize\texttt{shfan@imu.edu.cn}}
      \end{tabular*}}
    \\[0.65em]
    \multicolumn{3}{c}{\normalfont
      \textsuperscript{1}Inner Mongolia University;\quad
      \textsuperscript{2}Shanghai University of Finance and Economics;}
    \\[0.15em]
    \multicolumn{3}{c}{\normalfont
      \textsuperscript{3}Carnegie Mellon University}
  \end{tabular*}%
}

\begin{document}

\maketitle

\begin{abstract}
Generative modeling for discrete data has been significantly advanced by learning continuous vector fields on the statistical manifolds to transport simple priors to target discrete distributions. However, because these continuous-time flows are not constrained to follow geodesics on the statistical manifold, thereby requiring multi-step numerical integration for simulation and leading to substantial computational costs during inference. In this work, drawing inspiration from the ``straight and fast'' properties of Rectified Flow in continuous Euclidean space, we propose the Rectified Statistical Flow (ReSFlow) model, which enables one-step categorical generation on statistical manifolds by explicitly learning geodesic trajectories that adhere to the intrinsic Riemannian geometry. By rectifying the flow on the statistical manifold, we theoretically guarantee that trajectories between prior and target distributions approximate geodesics. This rectification significantly reduces the transport cost by aligning the learned trajectories directly along geodesics on the statistical manifold, allowing the model to map pure noise to target discrete data in a single step. Through extensive experiments on diverse real-world generative tasks, ranging from image and text to biological domains, we demonstrate ReSFlow's superior one-step generation quality, which truly embodies geodesically optimal and rapid categorical generation. Our code is available at \url{https://github.com/xiaohuanjia/ReSFlow}.
\end{abstract}

\section{Introduction}

Generative modeling has advanced rapidly, with diffusion and flow-based methods achieving state-of-the-art performance across continuous domains such as image synthesis~\cite{dao2023flow,lipman2022flow}, molecule and protein design~\cite{Campbell2024,zaghen2025riemannian,huguet2024sequence}, and sequence generation~\cite{davis2024fisherflowmatchinggenerative,shaul2024flow}. A key idea is to learn continuous-time dynamics, especially through flow matching, which formulates ODEs with continuous vector fields and provides a simulation-free, theoretically grounded training framework~\cite{albergo2023building}. However, extending these methods to discrete data—common in NLP, drug discovery, and protein engineering—remains difficult, since discrete variables do not have the differentiable structure required by continuous flows. Most existing flow-based generative models for discrete data act directly on the native discrete space by optimizing the categorical probabilities of tokens~\citep{gat2024discrete, liu2024discrete, yoo2025redi, stark2024dirichlet}. Rather than operating strictly within this discrete realm, a promising direction maps discrete variables to statistical manifolds, treating each categorical distribution as a point on the probability simplex with Riemannian geometry~\cite{cheng2024categorical}. This allows smooth vector fields and continuous-time flows to transport priors to target discrete distributions, bridging continuous generative modeling and discrete data synthesis.

However, a critical limitation persists: The learned ODE trajectories deviate from geodesics because the conditional vector fields arising from independent couplings of prior and target samples yield a mixture of intersecting geodesics that, once marginalized, produce highly curved aggregate flows not aligned with any single geodesic path~\cite{cheng2025riemannian}. Simulating these tortuous trajectories demands multi-step numerical integration with fine temporal discretization, incurring substantial computational overhead during inference. This stands in stark contrast to the desiderata of modern generative systems, where rapid, ideally single-step, sampling is essential for scalable deployment. The fundamental tension between the geometric complexity of learned flows and the practical need for efficient generation thus constitutes a major bottleneck for categorical generative models.

In the continuous Euclidean setting, an elegant solution to this problem has emerged through the concept of Rectified Flow~\cite{liu2023flow}. Rectified Flow enforces straight-line trajectories between paired noise and data samples, yielding flows that are both geometrically simple and computationally efficient to simulate. The resulting ``straight and fast'' property enables high-quality generation with very few or even a single integration step, dramatically accelerating inference. Naturally, one might ask: \textit{can we bring the same benefits of trajectory rectification to categorical generation on statistical manifolds?}

The answer, however, is far from straightforward. The probability simplex, when equipped with the Fisher-Rao metric~\cite{Rao1992}, forms a Riemannian manifold whose geometry is fundamentally non-Euclidean. Straight paths in ambient Euclidean space do not respect the intrinsic curvature of this manifold—they may leave the simplex, violate probability constraints, or deviate substantially from the true shortest paths. In this curved geometric setting, the natural analogue of a ``straight path'' is a geodesic: the length-minimizing curve that faithfully adheres to the manifold's intrinsic Riemannian structure. Consequently, achieving efficient one-step categorical generation requires not Euclidean straightening, but rather a fundamentally geometric approach that aligns learned trajectories with the geodesics of the underlying statistical manifold.

In this work, we propose Rectified Statistical Flow (ReSFlow), a framework that enables one-step discrete data generation by rectifying flows on statistical manifolds. The core idea is to map data from the probability simplex onto the positive orthant of the hypersphere via their natural diffeomorphism, and then perform flow rectification in this spherical space where geodesics admit closed-form great-circle expressions. By iteratively recoupling the learned ODE marginals and retraining the vector field on the hypersphere, ReSFlow progressively straightens transport paths toward geodesics, reducing both transport cost and path curvature.   Extensive evaluations across vision, NLP, and bioinformatics benchmarks demonstrate that ReSFlow, requiring only a single sampling step, matches or outperforms state-of-the-art methods that rely on hundreds of steps.

\section{Preliminaries}
\paragraph{Notations.}
Let $\mathcal{X} = \{1,\dots,K\}$ be a finite categorical state space and
$\mathcal{P}(\mathcal{X})$ the probability simplex over $\mathcal{X}$,
i.e.\ the set of all categorical distributions.
We use $\mu, \nu \in \mathcal{P}(\mathcal{X})$ to denote individual
categorical distributions, and
$p, q\in \mathcal{P}(\mathcal{P}(\mathcal{X}))$ to denote distributions
over categorical distributions.
We let $p_0, q \in \mathcal{P}(\mathcal{P}(\mathcal{X}))$ denote
the source (prior) and target (data) distributions, respectively.
We write $\Delta^{K-1}$ for the $(K{-}1)$-simplex and
$\mathbb{S}^{K-1}_+$ for the positive orthant of the unit hypersphere in
$\mathbb{R}^K$.
The square-root map $\pi\colon \Delta^{K-1}\to\mathbb{S}^{K-1}_+$,
defined by $\pi(\mu)_i = \sqrt{\mu_i}$, is a diffeomorphism that renders
the Fisher--Rao metric on $\Delta^{K-1}$ isometric to the round metric on
$\mathbb{S}^{K-1}_+$;
we overload notation and write $\pi(p)$ for the pushforward of a
distribution $p \in \mathcal{P}(\mathcal{P}(\mathcal{X}))$ onto
$\mathbb{S}^{K-1}_+$.

\subsection{Statistical Flow Matching}
\label{sec::sfm}
Statistical Flow Matching (SFM)~\citep{cheng2024categorical} performs flow matching for discrete categorical data directly on the statistical manifold induced by the Fisher information metric.
Since the Fisher inner product is ill-defined on the boundary of
$\Delta^{K-1}$, SFM operates on $\mathbb{S}^{K-1}_+$ via the square-root
map~$\pi$ (see Appendix \ref{app:statistical_manifold} for detailed motivations).
Specifically, we independently draw $\mu_0 \sim p_0$ and
$\mu_1 \sim q$ and map them to
$X_0 = \pi(\mu_0)$, $X_1 = \pi(\mu_1)$ on $\mathbb{S}^{K-1}_+$.
SFM constructs a conditional path via geodesic interpolation:
\begin{equation}\label{eq:geodesic}
  X_t := \psi_t(X_0, X_1)
       = \exp_{X_0}\!\bigl(t\,\log_{X_0} X_1\bigr),
  \quad t\in[0,1],
\end{equation}
where $\exp$ and $\log$ are the Riemannian exponential and logarithmic maps
on $\mathbb{S}^{K-1}$.  Letting
$\theta = \arccos(X_0^\top X_1)$ denote the geodesic distance, the
conditional vector field $u_t = \tfrac{\mathrm{d}}{\mathrm{d}t}X_t$ is
\begin{equation}\label{eq:cond-vf}
  u_t^{\,\mathbb{S}}(X_t \mid X_0, X_1)
  = \frac{\theta}{\sin\theta}
    \bigl[-\cos\bigl((1{-}t)\theta\bigr)\,X_0
           +\cos(t\theta)\,X_1\bigr].
\end{equation}
A parameterised velocity field $v_\theta(X_t,t)$ is trained by minimizing:
\begin{equation}\label{eq:sfm-loss}
  \mathcal{L}_{\mathrm{SFM}}
  = \mathbb{E}_{t,\,X_0,\,X_1}
    \bigl[\lVert u_t^{\,\mathbb{S}}(X_t\mid X_0,X_1)
                 - v_\theta(X_t,t)\rVert_2^2\bigr],
\end{equation}
where $t\sim U[0,1]$, $X_0\sim\pi(p_0)$, $X_1\sim\pi(q)$.
By exploiting the intrinsic geometry of $\mathbb{S}^{K-1}_+$, SFM ensures
the flow remains on the simplex and provides a principled optimal transport
path, avoiding quantisation issues prevalent in Euclidean treatments of
categorical data.

\subsection{Rectified Flow}\label{sec:rf}
Rectified Flow~\citep{liu2023flow} learns straight ODE trajectories that
transport a source distribution $\rho_0$ to a target distribution $\rho_1$,
with an iterative \emph{reflow} procedure that progressively straightens
the paths to enable few-step or even one-step generation.
Originally formulated in $\mathbb{R}^d$, we present it here in our unified
notation; in Section~\ref{sec::method} we extend it to
$\mathbb{S}^{K-1}_+$.
Given a coupling $(\mu_0, \mu_1)$ with $\mu_0 \sim p_0$ and
$\mu_1 \sim q$, Rectified Flow defines the linear interpolation path
$X_t = t\,\mu_1 + (1{-}t)\,\mu_0$ for $t\in[0,1]$ and learns a velocity
model $v_\theta$ by minimizing
\begin{equation}\label{eq:rf-obj}
  \hat\theta
  = \argmin_\theta\;
    \mathbb{E}\bigl[\lVert \mu_1 - \mu_0
                     - v_\theta(X_t,\,t)\rVert_2^2\bigr],
  \quad t\sim\mathrm{Uniform}([0,1]).
\end{equation}
Once trained, new samples are generated by solving the ODE
$\mathrm{d}Z_t = v_{\hat\theta}(Z_t,t)\,\mathrm{d}t$ forward from
$Z_0 \sim p_0$ to obtain $Z_1$, yielding a full trajectory
$\mathbf{Z}=\{Z_t : t\in[0,1]\}$.
A distinguishing feature of Rectified Flow is the \emph{reflow} operation.
Starting from an initial independent coupling
$(Z_0^0, Z_1^0) = (\mu_0, \mu_1)$, the $k$-th reflow step applies the
above procedure to the induced coupling $(Z_0^k, Z_1^k)$, whose endpoints
are obtained by simulating the previously learned flow: $(Z_0^{k+1}, Z_1^{k+1}) = \texttt{RectFlow}\bigl((Z_0^k, Z_1^k)\bigr).$

Each reflow iteration reduces transport cost and trajectory curvature,
bringing the induced coupling closer to a deterministic map. In the limit,
$v_\theta(Z_t,t)$ is nearly constant along each path and the ODE can be
solved in a single Euler step.
Although reflow successfully straightens trajectories in Euclidean space, it
remains unexplored how to extend it to non-Euclidean manifolds such as
$\mathbb{S}^{K-1}_+$, where straight lines must be replaced by geodesics. More discussion on related work is in Appendix \ref{app::Related Work}.

\section{Method}
\label{sec::method}
As discussed above, the extension of rectified flow to statistical manifolds poses additional challenges, as the model needs to be fully aware of the intrinsic geometry induced by the Fisher information metric to properly define and straighten the ODE trajectories. In particular, the velocity field estimation and the evolution of marginal distributions—both central to the rectified flow framework—must be reformulated to respect the curvature and geometric structure of the statistical manifold. To address these challenges, we propose the Rectified Categorical Flow model, which reflows and straightens the ODE trajectories along geodesics on a mapped hyperspherical manifold.
\begin{figure*}[t]
  \centering
  \includegraphics[width=0.8\textwidth]{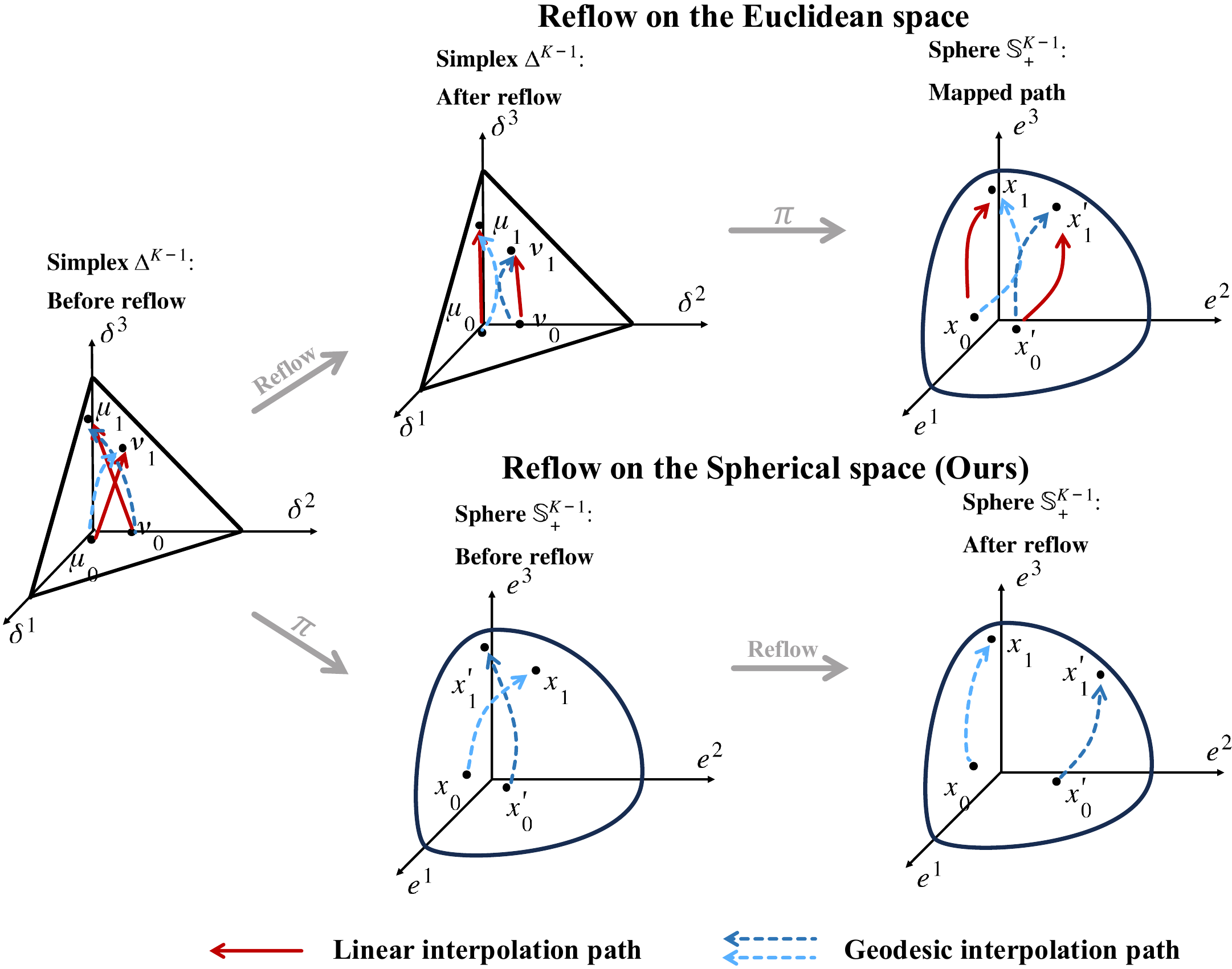}
  \caption{\textbf{Overview of Spherical Rectified Flow.}
  \textit{Top:} Standard reflow straightens linear paths on the simplex, but their corresponding spherical geodesics still intersect, preventing one-step sampling.
\textit{Bottom:} We perform reflow directly on the sphere $\mathbb{S}^{K-1}_+$. This straightens the geodesics themselves to be non-intersecting, ensuring faithful one-step generation.}
  \label{fig:ReSFlow_overview}
  \vspace{-10pt}
\end{figure*}

\subsection{Why Euclidean Reflow Fails on the Statistical Manifold}\label{sec:why-fail}

Before presenting our approach, we highlight why a naïve application of
Euclidean rectified flow to categorical distributions is fundamentally
inadequate. Recall that standard rectified flow defines straight-line
interpolations $X_t = t\,\mu_1 + (1{-}t)\,\mu_0$ in $\mathbb{R}^d$ and
trains the velocity field to match the constant direction
$\mu_1 - \mu_0$. Reflow guarantees that iterative re-coupling yields non-crossing straight paths, enabling single-step generation.  However, when the data live on the probability
simplex $\Delta^{K-1}$ equipped with the Fisher--Rao metric, this
guarantee breaks down:

\begin{proposition}[Non-preservation of non-crossing under Fisher--Rao geodesics]\label{prop:crossing}
Let\, $\Delta^{K-1} = \{\mu\in\mathbb{R}^K : \mu_i > 0,\;\sum_i \mu_i = 1\}$
be the open probability simplex with Fisher--Rao metric
$g_{ij}(\mu) = \delta_{ij}/\mu_i$.
For two pairs $(\mu_0, \mu_1)$ and $(\nu_0, \nu_1)$ in $\Delta^{K-1}$, let
\begin{equation}
  \ell_\mu(t) = (1-t)\,\mu_0 + t\,\mu_1, \quad
  \ell_\nu(t) = (1-t)\,\nu_0 + t\,\nu_1, \quad t\in[0,1]
\end{equation}
be the linear interpolations, and $\gamma_\mu(t)$, $\gamma_\nu(t)$ the
Fisher--Rao geodesics.  Then for $K \ge 3$,
\begin{equation}
  \ell_\mu([0,1]) \cap \ell_\nu([0,1]) = \emptyset
  \quad\not\Rightarrow\quad
  \gamma_\mu([0,1]) \cap \gamma_\nu([0,1]) = \emptyset.
\end{equation}
\end{proposition}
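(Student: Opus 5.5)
The plan is to prove the statement with an explicit counterexample. It is enough to exhibit, for some $K\ge 3$, two pairs whose chords $\ell_\mu,\ell_\nu$ are disjoint while the geodesic arcs $\gamma_\mu,\gamma_\nu$ meet. We work on $\mathbb{S}^{K-1}_+$ through the square-root map $\pi$. Because $\pi$ is an isometry onto the round metric, $\gamma_\mu(t)=\pi^{-1}\bigl(\exp_{\pi(\mu_0)}(t\log_{\pi(\mu_0)}\pi(\mu_1))\bigr)$, i.e. $\gamma_\mu$ is the image under coordinatewise squaring of a great-circle arc. The idea is that $\gamma_\mu$ bows away from the chord $\ell_\mu$. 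A second pair whose endpoints lie in a tiny neighborhood of a point of $\gamma_\mu\setminus\ell_\mu$ then has its own chord far from $\ell_\mu$, but its geodesic can be arranged to pass through that point.

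\textbf{Step 1 (a geodesic leaves its chord).} Take $K=3$, $\mu_0=(0.8,0.1,0.1)$ and $\mu_1=(0.1,0.8,0.1)$. On the sphere, $\pi(\mu_0)\approx(0.894,0.316,0.316)$ and $\pi(\mu_1)\approx(0.316,0.894,0.316)$. The geodesic midpoint is their normalized sum. Squaring back gives $p:=\gamma_\mu(1/2)\approx(0.44,0.44,0.12)$. This differs from every point of $\ell_\mu$, since all points of $\ell_\mu$ have third coordinate exactly $0.1$. Hence $p\in\gamma_\mu([0,1])$ and $d:=\operatorname{dist}(p,\ell_\mu([0,1]))>0$, because $\ell_\mu([0,1])$ is compact.

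\textbf{Step 2 (a short transversal pair through $p$).} Let $\sigma$ be any unit-speed Fisher--Rao geodesic through $p$, for instance a great circle through $\pi(p)$ transversal to $\pi(\gamma_\mu)$, pulled back. Set $\nu_0=\sigma(-\varepsilon)$ and $\nu_1=\sigma(\varepsilon)$. For $\varepsilon$ small, the minimizing geodesic $\gamma_\nu$ between $\nu_0$ and $\nu_1$ is exactly $\sigma|_{[-\varepsilon,\varepsilon]}$ reparametrized, because short geodesics on the sphere are unique minimizers. Thus $p=\gamma_\nu(1/2)\in\gamma_\mu([0,1])\cap\gamma_\nu([0,1])$. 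In fact any geodesic through $p$ works, even $\gamma_\mu$ itself; transversality only makes the example more convincing.

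\textbf{Step 3 (the chords are disjoint).} By continuity of $\sigma$, choose $\varepsilon$ so that $\nu_0$ and $\nu_1$ lie in the open Euclidean ball $B(p,d/2)$. Balls are convex, so $\ell_\nu([0,1])\subset B(p,d/2)$. This ball is disjoint from $\ell_\mu([0,1])$ by the definition of $d$. This gives $\ell_\mu\cap\ell_\nu=\emptyset$ while $\gamma_\mu\cap\gamma_\nu\neq\emptyset$.

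\textbf{Step 4 ($K>3$).} Embed the construction by letting the extra coordinates be a small constant $\eta>0$ and rescaling the first three coordinates by $1-(K-3)\eta$. A cleaner alternative is to rerun Steps 1--3 verbatim with $\mu_0=(0.8,0.1,0.1/(K-2),\dots)$ and the analogous $\mu_1$. In that version only Step 1's numeric check changes, and the same third-coordinate argument applies.

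The main obstacle is Step 1, certifying rigorously that $\gamma_\mu\neq\ell_\mu$. I would make this exact by writing the midpoint in closed form, $\bigl((a+b)/\|a+b\|\bigr)^{2}$ coordinatewise, with $a=\pi(\mu_0)$ and $b=\pi(\mu_1)$. Its third coordinate is $4(0.1)/\|a+b\|^2$, and $\|a+b\|^2=2+2a^\top b<4$ because $\mu_0\neq\mu_1$. That coordinate is therefore strictly greater than $0.1$, which avoids any numerics.
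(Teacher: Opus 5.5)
Your proof is correct. It uses the same mechanism the paper invokes: under $\pi$ the Fisher--Rao geodesic is a squared slerp that bows off its chord toward the interior. Where the paper only gives a sketch, you supply the missing argument. The paper merely asserts that this bowing ``creates intersections even when the straight paths are separated,'' without exhibiting any configuration, and you make both halves rigorous.

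First, you certify the bowing exactly. A coordinate value $c$ shared by $\mu_0\neq\mu_1$ becomes $4c/\|a+b\|^2>c$ at the geodesic midpoint. This also shows where $K\ge 3$ is needed: for $K=2$ the geodesic and the chord coincide as sets, so the implication actually holds.

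Second, your localization step places a short second pair inside a convex ball around $p\in\gamma_\mu([0,1])\setminus\ell_\mu([0,1])$ that misses $\ell_\mu([0,1])$. This shows the proposition, as literally stated, reduces to the existence of one geodesic that leaves its chord. Indeed the degenerate choice $\nu_0=\nu_1=p$ already suffices. Your transversal version is therefore the one that actually conveys the intended ``crossing'' message, and for $K=3$ it is even stable under small perturbations of all four endpoints.

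The paper's picture suggests a more global phenomenon relevant to reflow: long, well-separated chords whose geodesics cross. Your localization does not address that, but the statement does not require it.

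A cosmetic point: with $g_{ij}=\delta_{ij}/\mu_i$, the map $\pi$ is an isometry only up to the constant factor $2$ (cf.\ $d_{\text{cat}}=2\,d_{\mathbb{S}}$ in the appendix). This is harmless, since it does not change geodesic images.
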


\begin{proof}[Proof sketch]
Under the square-root embedding
$\pi\colon \mu \mapsto (\sqrt{\mu_1},\dots,\sqrt{\mu_K}) \in
\mathbb{S}^{K-1}_+$, Fisher--Rao geodesics become great-circle arcs:
$\pi(\gamma_\mu(t))
= \frac{\sin((1-t)\beta_\mu)}{\sin\beta_\mu}\,\pi(\mu_0)
 +\frac{\sin(t\,\beta_\mu)}{\sin\beta_\mu}\,\pi(\mu_1)$,
where $\beta_\mu = \arccos(\sum_i \sqrt{\mu_{0,i}\,\mu_{1,i}})$.
The nonlinear map $\pi$ does not preserve the non-crossing property:
geodesics curve toward the interior of $\Delta^{K-1}$ relative to linear
paths, creating intersections even when the straight paths are separated.\hfill
\end{proof}

Proposition~\ref{prop:crossing} reveals that Euclidean reflow straightens
paths in $\mathbb{R}^K$ so that linear interpolation path no longer cross, yet the
induced geodesics on the statistical manifold can still intersect
(Fig.~\ref{fig:ReSFlow_overview}, top).  This motivates our approach: we perform
reflow directly on $\mathbb{S}^{K-1}_+$, straightening the geodesic paths
themselves to obtain non-crossing trajectories that enable single-step
sampling (Fig.~\ref{fig:ReSFlow_overview}, bottom).

\subsection{Geodesic Interpolation on the Hypersphere}

Leveraging the square-root diffeomorphism $\pi: \Delta^{K-1} \to \mathbb{S}^{K-1}_+$, we lift all operations to the positive orthant of the unit hypersphere, where the Fisher-Rao geometry becomes isometric to the standard spherical geometry. On $\mathbb{S}^{K-1}_+$, geodesics are great-circle arcs, and the geodesic interpolation between a coupling $(X_0, X_1)$ on  sphere  $\mathbb{S}^{K-1}_+$ is given by the spherical linear interpolation:
\begin{equation}
X_t = \text{slerp}(X_0, X_1; t) = \frac{\sin((1-t)\theta)}{\sin(\theta)} X_0 + \frac{\sin(t\theta)}{\sin(\theta)} X_1, \quad t \in [0,1], 
\end{equation}
where $\theta = \arccos(X_0^\top X_1)$ is the geodesic distance between $X_0$ and $X_1$. This interpolation has several desirable properties: (i) $X_t$ remains on $\mathbb{S}^{K-1}_+$ for all $t \in [0,1]$ by construction, automatically satisfying the simplex constraints after the inverse map $\pi^{-1}$; (ii) the curve $t \mapsto X_t$ is a constant-speed geodesic with $\|\dot{X}_t\| = \theta$ for all $t$; and (iii) the tangent vector along this path has the closed-form expression given in \Cref{eq:cond-vf}.

Our key observation is that, just as Euclidean rectified flow uses linear interpolation $tX_1 + (1-t)X_0$ as its ``straight'' path, the natural equivalent on the hypersphere is the geodesic \textit{slerp}. Consequently, a flow with geodesic integral curves maintains a constant velocity magnitude, enabling exact simulation in a single Euler step.

\subsection{Rectified Categorical Flow via Spherical Reflow}
\label{sec:spherical_reflow}

We now present the core of our method: the iterative reflow procedure on $\mathbb{S}_+^{K-1}$ that progressively straightens the learned ODE trajectories toward geodesics. The procedure consists of two alternating stages, followed by an efficient inference phase.

\textbf{Stage 1: Initial Flow Training ($k=0$).} We begin with an independent coupling $(X_0, X_1) \sim \pi(p_0) \times \pi(q)$. We train a velocity field $v_\theta$ by minimizing the spherical flow matching objective:
\begin{equation}
    \label{eq:ReSFlow_initial}
    \mathcal{L}_{\text{ReSFlow}}^{(0)} = \mathbb{E}_{t, X_0, X_1} \left[ \left\| v_\theta(X_t, t) - u_t^S(X_t \mid X_0, X_1) \right\|_2^2 \right],
\end{equation}
where $X_t = \mathrm{slerp}(X_0, X_1; t)$ is the geodesic interpolant. Crucially, $v_\theta(X_t, t)$ is constrained to the tangent space $\mathcal{T}_{X_t}\mathbb{S}^{K-1}$ to ensure the flow remains on the manifold (details in Appendix~\ref{sec::Tangent space projection}). 

At this stage, the marginal velocity field $v_\theta$ approximates the conditional expectation $\mathbb{E}[u_t^S \mid X_t]$. Because the independent coupling produces a mixture of intersecting geodesics, averaging their tangent vectors at intersection points yields a direction that deviates from any single geodesic. Consequently, the resulting ODE trajectories are curved, requiring many steps to simulate accurately.

\textbf{Stage 2: Trajectory-Induced Recoupling (Reflow).} To straighten the trajectories, we exploit the uniqueness of ODE solutions. Given the trained model $v_{\hat{\theta}}^{(k)}$, we simulate the ODE forward on $\mathbb{S}_+^{K-1}$:
\begin{equation}
    \label{eq:ReSFlow_ode_sim}
    \frac{dX_t}{dt} = v_{\hat{\theta}}^{(k)}(X_t, t), \quad X_0 \sim \pi(p_0), \quad t \in [0, 1].
\end{equation}
This simulation produces new endpoint pairs $(Z_0^{k+1}, Z_1^{k+1})$. We then train the $(k+1)$-th rectified flow using the geodesic interpolation between these recoupled pairs:
\begin{equation}
    \label{eq:ReSFlow_reflow}
    \mathcal{L}_{\text{ReSFlow}}^{(k+1)} = \mathbb{E}_{t, Z_0^{k+1}, Z_1^{k+1}} \left[ \left\| v_\theta^{(k+1)}(Z_t^{k+1}, t) - u_t^S(Z_t^{k+1} \mid Z_0^{k+1}, Z_1^{k+1}) \right\|_2^2 \right].
\end{equation}

The deterministic ODE defines an injective transport map. Because this map is continuous, nearby points in the source distribution are mapped to nearby points in the target. Thus, the geodesics connecting the recoupled pairs tend to run parallel rather than intersect. With strictly fewer crossing geodesics, the conditional expectation at each point is dominated by a single geodesic direction, making the updated velocity field closer to a pure geodesic field. The complete reflow iteration on $\mathbb{S}_+^{K-1}$ can be summarized as: $(Z_0^{k+1}, Z_1^{k+1}) = \texttt{RectFlow}_{\mathbb{S}}\bigl((Z_0^k, Z_1^k)\bigr).$

\textbf{Inference: Efficient One-Step Generation.} 
Once the iterative reflow procedure converges, the learned vector field $v_\theta$ effectively models non-intersecting, constant-speed geodesics. Given a prior sample $X_0 \sim \pi(p_0)$, we can project the initial velocity onto the manifold to generate the target data sample $X_1$ in a single step via the spherical exponential map:
\begin{equation}
    \label{eq:one_step_gen}
    \hat{X_1} = \mathrm{Exp}_{X_0}(v_\theta(X_0, 0)) = X_0 \cos(\|v_0\|_2) + \frac{v_0}{\|v_0\|_2} \sin(\|v_0\|_2),
\end{equation}
where $v_0 = v_\theta(X_0, 0)$. This formulation bridges continuous-time ODEs with discrete geometric operations, unlocking highly accurate one-step generation for categorical data. The final discrete sample is obtained by applying the inverse square-root map: $\hat{p} = \pi^{-1}(\hat{X}_1) = (\hat{X}_{1,1}^2, \ldots, \hat{X}_{1,K}^2) \in \Delta^{K-1}$, followed by taking the argmax (or sampling from) this categorical distribution. We summarize the complete Rectified Categorical Flow pipeline in Algorithm~\ref{alg:ReSFlow}.

\subsection{Theoretical Guarantees of Spherical Reflow}
\label{sec:theory}

To ground our geometric intuition, we establish formal theoretical guarantees for the spherical reflow process based on the framework of non-Euclidean probability transport. Detailed proofs for all statements are provided in Appendix~\ref{app:spherical-proofs}.

First, a valid generative model must ensure that modifying the trajectories does not alter the target data distribution. We confirm that spherical reflow strictly preserves the marginal distributions.

\begin{theorem}[Spherical Marginal Preservation; Informal; See~\Cref{thm:app-marginal-preservation}]
\label{thm:marginal}
Let $X_t$ be the spherical geodesic interpolation process. If the flow is rectifiable, its induced spherical rectified flow $Z_t$ preserves the marginals: $\mathrm{Law}(Z_t) = \mathrm{Law}(X_t)$ for all $t \in [0, 1]$.
\end{theorem}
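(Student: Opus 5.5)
The plan is to transplant the argument of Liu et al.\ for Euclidean rectified flow (marginal preservation via the continuity equation) to $\mathbb{S}^{K-1}_+$, working throughout in the ambient space $\mathbb{R}^K$ with tangent-valued vector fields. First I fix the setting. Take a coupling $(X_0,X_1)$ on $\mathbb{S}^{K-1}_+$ with $X_0\neq -X_1$; this is automatic in the positive orthant, since there $X_0^\top X_1\ge 0$ and hence $\theta\le\pi/2$. Let $X_t=\mathrm{slerp}(X_0,X_1;t)$ and let $\dot X_t=u_t^{\mathbb{S}}(X_t\mid X_0,X_1)$ be given by \Cref{eq:cond-vf}. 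Define the marginal velocity $v^X(x,t)=\mathbb{E}[\dot X_t\mid X_t=x]$. Because each $\dot X_t\in\mathcal{T}_{X_t}\mathbb{S}^{K-1}$, i.e.\ $X_t^\top\dot X_t=0$, conditioning on $X_t=x$ gives $x^\top v^X(x,t)=0$. So $v^X$ is a tangent field on the sphere, and the rectified flow $\mathrm{d}Z_t=v^X(Z_t,t)\,\mathrm{d}t$ with $Z_0=X_0$ stays on $\mathbb{S}^{K-1}$. It also stays in the closed orthant, because $X_t$ is supported there and $Z_t$ will be shown to have the same law. "Rectifiable" I will interpret, as Liu et al.\ do, as: $v^X$ is locally bounded and the ODE has a unique solution whose law solves the continuity equation uniquely.

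The key step is to show that $\rho_t=\mathrm{Law}(X_t)$ solves the weak continuity equation $\partial_t\rho_t+\nabla\cdot(v^X\rho_t)=0$ on the sphere. Take a test function $h\in C_c^\infty(\mathbb{R}^K)$, which restricts to a smooth function on $\mathbb{S}^{K-1}$. Since $t\mapsto X_t$ is smooth (slerp is smooth in $t$ whenever $\theta\in[0,\pi/2]$, including the limit $\theta\to0$), the chain rule and the tower property give
\[
\frac{\mathrm{d}}{\mathrm{d}t}\mathbb{E}[h(X_t)]=\mathbb{E}[\nabla h(X_t)^\top\dot X_t]=\mathbb{E}[\nabla h(X_t)^\top v^X(X_t,t)].
\]
Only the tangential part $\nabla_{\mathbb{S}}h$ contributes, since $\dot X_t$ is tangent. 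Thus $\rho_t$ is a weak solution driven by $v^X$. Separately, $\mathrm{Law}(Z_t)$ solves the same equation with the same initial law $\rho_0$: for the ODE this is immediate from the same chain-rule computation applied to $h(Z_t)$.

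Uniqueness of solutions to this continuity equation then gives $\mathrm{Law}(Z_t)=\mathrm{Law}(X_t)$ for all $t$. This is the main obstacle. On a compact manifold I would invoke the superposition principle (Ambrosio–Gigli–Savaré, extended to Riemannian manifolds): any weak solution with $\int_0^1\!\int\|v^X\|\,\mathrm{d}\rho_t\,\mathrm{d}t<\infty$ is a mixture of integral curves of $v^X$. The integrability holds because $\|\dot X_t\|=\theta\le\pi/2$, so $v^X$ is bounded. Uniqueness of the ODE from $\rho_0$-a.e.\ initial point then forces the mixture to be $\mathrm{Law}(Z_t)$.

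The interchange of derivative and expectation is justified by the same bound $\|\dot X_t\|\le\pi/2$ together with dominated convergence. If uniqueness of the ODE is taken directly as the definition of rectifiability, this final step reduces to quoting that hypothesis.
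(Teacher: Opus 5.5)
Your proof is correct, but it obtains uniqueness by a different route from the paper's. The first half, showing that $\mathrm{Law}(X_t)$ and $\mathrm{Law}(Z_t)$ both weakly solve the continuity equation driven by $v^X$ with the same initial law, matches Steps 2--3 of the paper. The paper then gives a self-contained duality argument. It fixes $t$ and transports the test function backward along the flow, $h_s=h\circ\Phi_t\circ\Phi_s^{-1}$, which solves $\partial_s h_s=-\langle\nabla_{\mathbb{S}}h_s,v^X\rangle$. Hence $\int h_s\,d\mu_s-\int h_s\,d\nu_s$ is constant in $s$ and vanishes at $s=0$. You instead pass from Eulerian to Lagrangian uniqueness via the superposition principle. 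That needs only three things: narrow continuity of $t\mapsto\rho_t$, an $L^1$ bound on $v^X$ (free here, since $\|\dot X_t\|=\theta\le\pi/2$), and uniqueness of absolutely continuous integral curves from $\rho_0$-a.e.\ starting point.

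What your route buys is that you never differentiate the flow map, so it works under a Liu et al.-style hypothesis (bounded $v^X$ with unique integral curves). The duality route needs $h_s$ to be an admissible $C^1$ time-dependent test function, which requires a differentiable flow. The paper asserts that a merely locally Lipschitz $v^X$ generates a $C^1$ diffeomorphism; strictly, that is too strong, since such flows are in general only bi-Lipschitz, so that step would need, e.g., a mollification argument. The cost of your route is reliance on a heavy black-box theorem.

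Two simplifications are available to you. First, your test functions lie in $C_c^\infty(\mathbb{R}^K)$, so you can extend $v^X$ by zero off the sphere and apply the Euclidean superposition theorem (AGS, Thm.~8.2.1) directly; no Riemannian version is needed. Second, the separate computation for $\mathrm{Law}(Z_t)$ is superfluous in your route: superposition applied to $\rho_t$ alone, plus uniqueness of integral curves, already gives $\rho_t=(\Phi_t)_\#\rho_0=\mathrm{Law}(Z_t)$. Finally, do not build uniqueness of the continuity equation into the definition of rectifiability, since that makes the statement nearly tautological. The paper's hypothesis, $v^X$ continuous and locally Lipschitz in $x$, already yields via Gr\"onwall exactly the pointwise ODE uniqueness your superposition step requires.
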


To formalize trajectory straightness, we define the Spherical Transport Cost for a coupling $(X_0, X_1)$ on $\mathbb{S}_+^{K-1}$ as $\mathbb{E}\!\left[d_{\mathbb{S}}(X_0,X_1)^2\right] = \mathbb{E}[\arccos(X_0^\top X_1)^2]$, which measures the expected squared geodesic distance. Reflow theoretically guarantees the optimization of this transport cost.

\begin{theorem}[Contraction of Spherical Transport Cost; Informal; See~\Cref{thm:app-distance-cost}]
\label{thm:cost_contraction}
Let $(Z_0^k, Z_1^k)$ be the coupling induced by the $k$-th reflow iteration. The spherical transport cost is non-increasing:
\begin{equation}
    \label{eq:cost_contraction}
    \mathbb{E}\!\left[d_{\mathbb{S}}(Z_0,Z_1)^2\right]
    \le
    \mathbb{E}\!\left[d_{\mathbb{S}}(X_0,X_1)^2\right].
\end{equation}
\end{theorem}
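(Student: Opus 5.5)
The argument is the Riemannian analogue of Liu et al.'s convex-cost argument for rectified flow, with the squared geodesic distance playing the role of the Euclidean cost. Two facts carry it. First, $(Z_0,Z_1)$ is obtained by integrating the rectified velocity field $v^X(x,t)=\mathbb{E}[\dot X_t\mid X_t=x]$. Second, by \Cref{thm:marginal}, $\mathrm{Law}(Z_t)=\mathrm{Law}(X_t)$ for all $t$. Throughout, $\dot X_t=u_t^{\mathbb{S}}(X_t\mid X_0,X_1)$ is the slerp velocity, which is tangent at $X_t$ and has norm $\|\dot X_t\|=d_{\mathbb{S}}(X_0,X_1)=\theta$, constant in $t$.

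\emph{Step 1 (cost of $Z$ bounded by its path energy).} The curve $t\mapsto Z_t$ lies on the sphere and joins $Z_0$ to $Z_1$. Its length is therefore at least the geodesic distance. By Cauchy--Schwarz in $t$,
\begin{equation*}
d_{\mathbb{S}}(Z_0,Z_1)^2\le\Bigl(\int_0^1\|\dot Z_t\|\,dt\Bigr)^2\le\int_0^1\|v^X(Z_t,t)\|^2\,dt .
\end{equation*}
The ambient Euclidean norm of a tangent vector equals its Riemannian norm, so no metric correction is needed here.

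\emph{Step 2 (swap in the interpolation process).} Take expectations and use the marginal preservation of \Cref{thm:marginal}:
\begin{equation*}
\mathbb{E}\int_0^1\|v^X(Z_t,t)\|^2dt=\int_0^1\mathbb{E}\|v^X(X_t,t)\|^2dt .
\end{equation*}
Since $v^X(X_t,t)=\mathbb{E}[\dot X_t\mid X_t]$, Jensen's inequality for the convex function $\|\cdot\|^2$, applied to conditional expectations, gives
\begin{equation*}
\|v^X(X_t,t)\|^2\le\mathbb{E}\bigl[\|\dot X_t\|^2\mid X_t\bigr].
\end{equation*}
Jensen is applied in the ambient $\mathbb{R}^K$, where $\|\cdot\|^2$ is convex. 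This is legitimate because every $\dot X_t$ conditioned on $X_t=x$ lies in the same linear subspace $\mathcal{T}_x\mathbb{S}^{K-1}$, so their average is again tangent at $x$.

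\emph{Step 3 (close the chain).} Constant speed gives $\mathbb{E}\|\dot X_t\|^2=\mathbb{E}[\theta^2]=\mathbb{E}[d_{\mathbb{S}}(X_0,X_1)^2]$ for every $t$. Integrating over $t\in[0,1]$ then yields \Cref{eq:cost_contraction}. Iterating over $k$ gives monotonicity along the reflow sequence.

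\emph{Main obstacle.} The main difficulty is the regularity needed to make Step 1 and the use of \Cref{thm:marginal} rigorous. Three things must hold:
\begin{itemize}
\item[(i)] The ODE driven by $v^X$ must be well posed, which is the ``rectifiable'' hypothesis.
\item[(ii)] $Z_t$ must stay in $\mathbb{S}^{K-1}_+$, where all geodesics have length $\theta\le\pi/2$, so $\arccos$ is the true distance and slerp is well defined.
\item[(iii)] The measurable disintegration behind the conditional expectation must exist.
\end{itemize}
For (ii), I would first argue that $\mathbb{S}^{K-1}_+$ is geodesically convex. Every slerp path between points of the closed orthant stays in the orthant, because both slerp coefficients are nonnegative when $\theta\le\pi/2$. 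Hence $X_t$ is supported there. Marginal preservation then forces $Z_t$ to be supported there as well. For (i) and (iii), I would state the rectifiability assumption (local Lipschitz continuity of $v^X$ and integrability of $\theta^2$) explicitly in the formal version.
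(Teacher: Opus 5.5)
Your proof is correct and follows the paper's proof of \Cref{thm:app-distance-cost} essentially step for step. The chain is the same: length bounds distance, marginal preservation replaces $Z_t$ by $X_t$, conditional Jensen applies to $v^X(X_t,t)=\mathbb{E}[\dot X_t\mid X_t]$, and the constant slerp speed gives $\|\dot X_t\|=d_{\mathbb{S}}(X_0,X_1)$. The only differences are that the paper handles every convex nondecreasing $\phi$ (Jensen in time plus convexity of $w\mapsto\phi(\|w\|)$) and works on the full sphere, where $\arccos$ is already the intrinsic distance, so your orthant-convexity discussion in (ii) is harmless but needed only to guarantee $X_0\neq -X_1$.
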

\begin{proof}
Proof sketch in Appendix~\ref{sec::Spherical Trans}: The ODE trajectories have finite length, and the geodesic connecting the recoupled endpoints represents the absolute shortest path on the manifold, enforcing a monotonic decrease in transport cost. 
\end{proof}

Finally, we provide the following theorem shows that iterative spherical rectification progressively straightens trajectories at a guaranteed rate:

\begin{theorem}[Spherical Reflow Straightening Rate; Informal; See~\Cref{thm:app-reflow-rate}]
\label{thm:straightening_rate}
Let $(X_0, X_1)$ be a coupling on $\mathbb{S}_+^{K-1}$ with $\mathbb{E}[d_{\mathbb{S}}(X_0,X_1)^2]<+\infty$. Set $(Z_0^0,Z_1^0)=(X_0,X_1)$ and $Z^{k+1}=\texttt{RectFlow}_{\mathbb{S}}((Z_0^k,Z_1^k))$ for $k=0,\ldots,N_{\mathrm{reflow}}-1$. Then
\begin{equation}
    \sum_{k=0}^{N_{\mathrm{reflow}}-1}\left[S_{\mathbb{S}}(Z^{k+1})+V_{\mathbb{S}}((Z_0^k,Z_1^k))\right]\le\mathbb{E}\!\left[d_{\mathbb{S}}(X_0,X_1)^2\right],
\end{equation}
where $S_{\mathbb{S}}\!\ge 0$ measures the excess kinetic energy over the geodesic cost and $V_{\mathbb{S}}\!\ge 0$ measures the velocity variance. Consequently,
\begin{equation}
    \min_{0\le k\le N_{\mathrm{reflow}}-1}\left[S_{\mathbb{S}}(Z^{k+1})+V_{\mathbb{S}}((Z_0^k,Z_1^k))\right]=\mathcal{O}(1/N_{\mathrm{reflow}}).
\end{equation}
See Appendix~\ref{app::Straightening} for the detailed proof.
\end{theorem}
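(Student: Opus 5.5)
The argument mirrors the Euclidean straightening bound of Liu et al.~(2023), with straight lines replaced by the slerp interpolant and the constant displacement $X_1-X_0$ replaced by the conditional velocity $u_t^{\mathbb{S}}$ of \Cref{eq:cond-vf}, whose norm equals $d_{\mathbb{S}}(X_0,X_1)$ for every $t$. Everything reduces to a telescoping energy identity for one reflow step, which is then summed over $k$.

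For a coupling $(Z_0^k,Z_1^k)$, let $X^k_t=\mathrm{slerp}(Z_0^k,Z_1^k;t)$ and define the rectified velocity as the conditional expectation $v^k(x,t)=\mathbb{E}[\dot X^k_t\mid X^k_t=x]$. Since $\dot X^k_t$ lies in $\mathcal{T}_{X^k_t}\mathbb{S}^{K-1}$, so does $v^k$. Let $Z^{k+1}_t$ be the ODE flow of $v^k$ started at $Z^{k+1}_0=Z^k_0$. The first step is the orthogonal (Pythagorean) decomposition of the conditional expectation:
\begin{equation*}
\mathbb{E}\!\int_0^1\|\dot X^k_t\|^2dt=\mathbb{E}\!\int_0^1\|v^k(X^k_t,t)\|^2dt+V_{\mathbb{S}}((Z_0^k,Z_1^k)),
\end{equation*}
where $V_{\mathbb{S}}=\mathbb{E}\int_0^1\|\dot X^k_t-v^k(X^k_t,t)\|^2dt\ge0$. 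The left side equals $\mathbb{E}[d_{\mathbb{S}}(Z^k_0,Z^k_1)^2]$ because slerp has constant speed $\theta$.

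Next, by \Cref{thm:marginal}, $\mathrm{Law}(Z^{k+1}_t)=\mathrm{Law}(X^k_t)$ for each $t$. The middle term is therefore the kinetic energy $\mathbb{E}\int_0^1\|\dot Z^{k+1}_t\|^2dt$ of the new flow. Define
\begin{equation*}
S_{\mathbb{S}}(Z^{k+1})=\mathbb{E}\!\int_0^1\|\dot Z^{k+1}_t\|^2dt-\mathbb{E}[d_{\mathbb{S}}(Z^{k+1}_0,Z^{k+1}_1)^2].
\end{equation*}
To see $S_{\mathbb{S}}\ge0$, note that the path $Z^{k+1}$ stays on the sphere because its velocity is tangent. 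By Cauchy--Schwarz, $\int_0^1\|\dot Z_t\|^2dt\ge(\mathrm{length})^2$, and the length is at least the geodesic distance between the endpoints; this is exactly the argument behind \Cref{thm:cost_contraction}. Combining the two displays gives the one-step identity
\begin{equation*}
\mathbb{E}[d_{\mathbb{S}}(Z^k_0,Z^k_1)^2]-\mathbb{E}[d_{\mathbb{S}}(Z^{k+1}_0,Z^{k+1}_1)^2]=S_{\mathbb{S}}(Z^{k+1})+V_{\mathbb{S}}((Z^k_0,Z^k_1)).
\end{equation*}
Summing over $k=0,\dots,N_{\mathrm{reflow}}-1$ telescopes. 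Dropping the final nonnegative cost $\mathbb{E}[d_{\mathbb{S}}(Z^{N}_0,Z^{N}_1)^2]$ then yields the claimed bound. The $\mathcal{O}(1/N_{\mathrm{reflow}})$ rate follows because the minimum of nonnegative terms is at most their average, i.e.\ at most $\mathbb{E}[d_{\mathbb{S}}(X_0,X_1)^2]/N_{\mathrm{reflow}}$. Finiteness of this quantity is automatic, since $d_{\mathbb{S}}\le\pi/2$ on the positive orthant.

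The main obstacle I expect is justifying the marginal identification, i.e.\ invoking \Cref{thm:marginal} rigorously. This requires well-posedness of the ODE driven by $v^k$, meaning the ``rectifiable'' hypothesis: enough regularity of $v^k$ for existence and uniqueness, and for the continuity equation on the sphere to have a unique solution. A secondary difficulty is measurability and tangency of the conditional expectation near $\theta=0$, where $\theta/\sin\theta\to1$ causes no singularity. A further point is to check that $Z^{k+1}$ stays in the positive orthant so the coupling remains in $\mathbb{S}^{K-1}_+$. I would first try to obtain this from marginal preservation plus the fact that the support of $\mathrm{Law}(X^k_t)$ lies in the closed orthant, because slerp between points of the orthant stays there.
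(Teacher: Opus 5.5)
Your proposal is correct and follows the paper's proof essentially step for step: the one-step identity $\mathbb{E}[d_{\mathbb{S}}(Z_0^k,Z_1^k)^2]-\mathbb{E}[d_{\mathbb{S}}(Z_0^{k+1},Z_1^{k+1})^2]=S_{\mathbb{S}}(Z^{k+1})+V_{\mathbb{S}}((Z_0^k,Z_1^k))$ comes from constant slerp speed, the conditional-variance decomposition and marginal preservation, and is then telescoped and bounded by min $\le$ average. Your side remarks are accurate points the paper leaves implicit: rectifiability of each coupling must be assumed, as in the formal version; $d_{\mathbb{S}}\le\pi/2$ on the orthant makes the finiteness hypothesis automatic; and marginal preservation keeps the new couplings a.s.\ in the closed orthant.
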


This theoretical framework confirms that combining hyperspherical geometry with iterative reflow explicitly forces the learned vector fields to approximate constant-speed geodesics, providing the rigorous foundation for our highly accurate one-step generation via the exponential map. Theoretical results are empirically validated in Appendix \ref{app::addi}.

\section{Experiments}
\subsection{Experimental setup}

Our proposed ReSFlow models arbitrary distributions over categorical probability measures and enables one-step sampling via geodesic rectification on the statistical manifold. We first validate this property on a synthetic Swiss-roll distribution on the simplex, and then evaluate ReSFlow on real-world discrete generation tasks, including binarized MNIST image generation, character-level Text8 modeling, and conditional promoter DNA sequence design. Our primary comparison is with SFM~\cite{cheng2024categorical}, which achieves state-of-the-art performance on these benchmarks. The additional baselines reported in that work rely on multi-step sampling and do not outperform SFM; we therefore do not repeat them here. To disentangle the effects of geometry and rectification, we compare four controlled variants: \emph{Linear Flow Matching} (LinearFM)~\cite{cheng2024categorical}, which uses Euclidean linear interpolation on the simplex; \emph{Statistical Flow Matching} (SFM), which maps the simplex to the positive orthant of the hypersphere and follows spherical geodesics; \emph{Rectified Linear Flow Matching} (R-LFM), which applies rectification in the Euclidean setting; and \emph{Rectified Statistical Flow} (ReSFlow), our proposed spherical reflow model. This $2 \times 2$ design separates the choice of geometry, Euclidean versus spherical, from the effect of rectification. Moreover, we also compare with SOTA flow-based one-step generation ReDi~\cite{yoo2025redi}, which rectifies the flow trajectories of continuous token embeddings in Euclidean space. The fundamental difference between their setting and ours lies in the data preprocessing requirement. While their method relies on an external pre-trained model to tokenize the input data, our ReSFlow operates as a fully end-to-end framework directly on the raw data space. By bypassing the discrete tokenization step, our approach is highly advantageous for domains and modalities where well-established pre-trained tokenizers are unavailable. Due to the absence of well-established tokenizers for binarized MNIST and promoter DNA sequences, we restrict our comparison with ReDi solely to the Text8, where characters are tokenized using the GPT-2 tokenizer.

We evaluate all methods under different numbers of function evaluations (NFE). For sample-quality metrics such as FID~\cite{heusel2017gans} and SP-MSE, we report both one-step generation with NFE$=1$ and accurate ODE simulation using the adaptive Dormand--Prince 5 (Dopri5) solver~\cite{RKfamily}. For the Swiss-roll benchmark, we report only Dopri5-based results, including ODE-based NLL and Dopri5 sampling time. Likelihood-based metrics are evaluated only with ODE integration: NLL estimation uses the continuous-time change-of-variables formula and therefore requires accurate integration of the learned dynamics, while a single Euler or exponential-map step does not yield reliable likelihoods. For Text8, BPC is computed from the variational integral formulation rather than generated samples, and is therefore not tied to one-step sampling. Thus, one-step sampling measures fast generation quality, whereas ODE-based integration evaluates NLL and BPC. We also report wall-clock sampling time for binarized MNIST, promoter sequence design, and Swiss-roll to quantify practical runtime , but omit timing for Text8, since the reported BPC metric is based on likelihood evaluation rather than sample generation. Detailed architectures and hyperparameters are provided in Appendix~\ref{app:implementation}.

\vspace{-5pt}

\subsection{Toy Example: Swiss Roll on the Simplex}

We first evaluate ReSFlow on a Swiss-roll distribution on the $2$-simplex as a low-dimensional geometric sanity check. Following Cheng et al.~\cite{cheng2024categorical}, we normalize a two-dimensional Swiss roll and embed it into the simplex by setting $\mu_3=1-\mu_1-\mu_2$. All methods use the same time-dependent MLP and training budget. LinearFM and R-LFM use Euclidean interpolation on the simplex; SFM and ReSFlow use the square-root map with spherical geodesic interpolation on $\mathbb{S}^{2}_{+}$. This controlled comparison separates the effect of geometry from the effect of rectification. As shown in Table~\ref{tab:swissroll_results}, ReSFlow achieves an NLL of $-2.25$, improving over LinearFM ($-1.88$) and R-LFM ($-1.87$), and remains competitive with SFM ($-2.38$). It also attains the fastest ODE sampling time among all methods, reducing the runtime from $0.0679$s for SFM to $0.0361$s. This indicates that ReSFlow preserves the geometric advantage of Fisher--Rao geodesics and enabling efficient ODE generation.

The samples in Fig.~\ref{fig:swissroll_samples} further illustrate the role of spherical reflow under one-step sampling. 
Non-rectified LinearFM and SFM fail to recover the Swiss-roll structure in the one-step setting, because their marginal vector fields average over many crossing conditional paths, producing curved trajectories that are poorly approximated by a single update. 
Euclidean reflow improves one-step behavior relative to LinearFM, but it still rectifies paths under the wrong geometry, leading to samples that only partially capture the target structure. 
ReSFlow instead performs reflow directly on the hypersphere, aligning the learned trajectories with Fisher--Rao geodesics and producing the most faithful one-step samples. 
This supports the central claim that geometry and rectification are complementary: the hyperspherical formulation provides the correct transport geometry, while reflow reduces trajectory curvature for accurate one-step sampling.

\begin{table}[t]
\centering
\caption{Swiss-roll results on the 2-simplex. We report ODE-based NLL and ODE sampling time for generating 1,000 samples. Bold indicates the best result, and `$*$' indicates the second-best result.}
\label{tab:swissroll_results}
\small
\begin{tabular}{lllll}
\toprule
Metric & LinearFM & SFM~\cite{cheng2024categorical} & R-LFM & ReSFlow \\
\midrule
NLL $\downarrow$
& -1.88 \,\textpm\, 0.06
& \textbf{-2.38 \,\textpm\, 0.08}
& -1.87 \,\textpm\, 0.06
& -2.25 \,\textpm\, 0.08\secondbest \\
Time (s) $\downarrow$
& 0.1059
& 0.0679
& 0.0512\secondbest
& \textbf{0.0361} \\
\bottomrule
\end{tabular}
\vspace{-1em}
\end{table}

\begin{figure*}[t]
  \centering
  \includegraphics[width=0.9\textwidth,keepaspectratio]{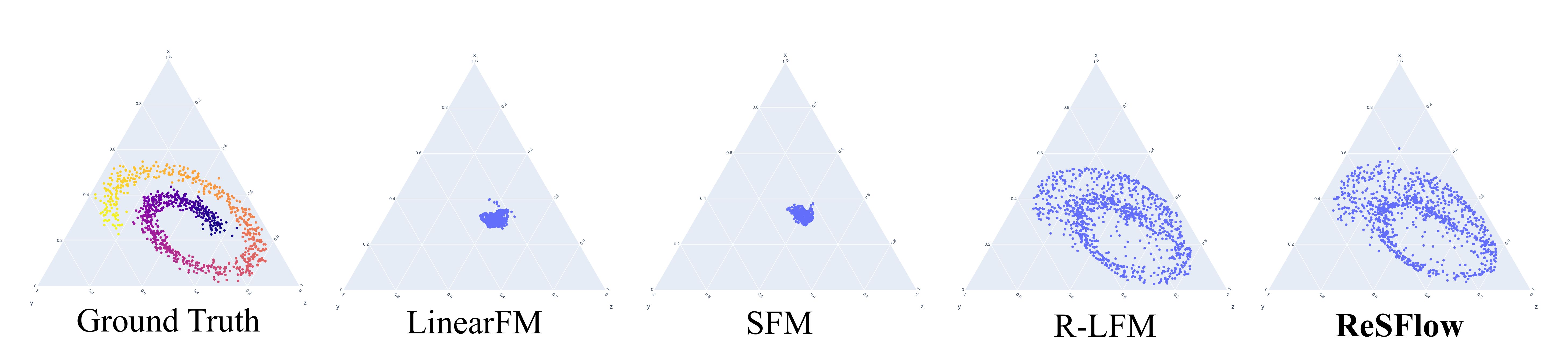}
    \caption{Swiss-roll samples on the simplex under one-step Euler sampling. }
  \label{fig:swissroll_samples}
\end{figure*}

\subsection{Binarized MNIST}

We next evaluate on binarized MNIST to test whether the geometric and rectification effects observed in the Swiss-roll experiment extend to high-dimensional discrete generation. Each image is modeled as a collection of Bernoulli categorical variables, so good samples require both local binary consistency and global digit structure. This benchmark therefore allows us to examine two questions in a realistic setting: whether Fisher--Rao geometry improves categorical transport, and whether reflow makes the learned dynamics accurate under NFE = 1 sampling. Following the evaluation protocol above, we compare adaptive ODE sampling with Euler one-step generation.

Table~\ref{tab:bmnist_results} reports mean FID and wall-clock sampling time over three random seeds. Under adaptive ODE sampling, SFM achieves the best FID, confirming that spherical geodesic interpolation provides a stronger reference geometry than Euclidean interpolation for binary categorical data. ReSFlow is slightly worse than SFM in this full-ODE regime, but remains competitive and is the fastest ODE-based method. Under one-step sampling, non-rectified LinearFM and SFM degrade substantially, indicating that their learned trajectories are too curved to be approximated by a single update. Rectification alleviates this issue, and ReSFlow achieves the best one-step FID by performing reflow under the hyperspherical Fisher--Rao geometry. Compared with SFM under accurate ODE sampling, one-step ReSFlow achieves an orders-of-magnitude improvement in sampling speed ($0.452$ vs.\ $112.81$ seconds per 1,000 samples), while incurring only a mild and acceptable degradation in FID ($5.973$ vs.\ $4.748$). These results show that ReSFlow improves trajectory geometry and enables efficient one-step sampling.

Figure~\ref{fig:bmnist_nfe} further supports the benefit of reflow under the Fisher--Rao geometry. ReSFlow achieves a mean FID of $5.973\pm0.442$ with a single function evaluation, while SFM requires 34 evaluations to obtain a comparable result ($5.922\pm0.262$). This reduction in sampling steps shows that rectification makes the learned dynamics more suitable for one-step generation.

\begin{table}[t]
\centering
\caption{Binarized MNIST results. Mean $\pm$ standard deviation over three random seeds; time is for generating 1,000 samples. Bold indicates the best mean, and $*$ indicates the second-best mean.}
\label{tab:bmnist_results}
\footnotesize
\setlength{\tabcolsep}{3pt}
\begin{tabular}{lllll}
\toprule
\multicolumn{1}{c}{\multirow[c]{2}{*}[-0.5ex]{Method}}
& \multicolumn{2}{c}{ODE (NFE = adaptive)}
& \multicolumn{2}{c}{Euler one-step (NFE = 1)} \\
\cmidrule(lr){2-3} \cmidrule(lr){4-5}
& FID $\downarrow$ 
& Time / 1k samples (s) $\downarrow$
& FID $\downarrow$
& Time / 1k samples (s) $\downarrow$ \\
\midrule
LinearFM~\cite{cheng2024categorical}
& 5.187\,\textpm\,0.129\secondbest
& 87.13\,\textpm\,0.44
& 25.317\,\textpm\,0.517
& \textbf{0.406\,\textpm\,0.178} \\
SFM~\cite{cheng2024categorical}
& \textbf{4.748\,\textpm\,0.092}
& 112.81\,\textpm\,8.20
& 35.640\,\textpm\,0.487
& 0.719\,\textpm\,0.175 \\
R-LFM
& 5.681\,\textpm\,0.252
& 61.19\,\textpm\,0.28\secondbest
& 6.261\,\textpm\,0.379\secondbest
& 0.583\,\textpm\,0.244 \\
ReSFlow
& 5.541\,\textpm\,0.252
& \textbf{36.07\,\textpm\,0.60}
& \textbf{5.973\,\textpm\,0.442}
& 0.452\,\textpm\,0.243\secondbest \\
\bottomrule
\end{tabular}
\end{table}

\subsection{Text8}

We further evaluate ReSFlow on character-level language modeling with Text8 \cite{mahoney2011large}, where each token is a categorical variable and the sequence distribution is substantially more structured than the toy and image settings. This experiment is designed to test whether the proposed manifold-aware rectification remains effective for high-entropy discrete sequences with a larger alphabet and stronger contextual dependencies. Since text generation is typically evaluated by predictive coding efficiency, we report bits per character (BPC), which naturally reflects whether the learned categorical transport is efficient and sufficiently well aligned with the data distribution: a lower BPC indicates that the model assigns higher probability to the observed character sequences and induces less distorted transport over categorical states. Since BPC is evaluated from the continuous-time likelihood, it requires ODE integration; we therefore report only ODE-based BPC in Table~\ref{tab:text8_results}.

Table~\ref{tab:text8_results} shows that the choice of geometry has a pronounced effect on categorical sequence modeling. SFM substantially improves over LinearFM, reducing BPC from $2.154$ to $1.830$, which supports the importance of modeling categorical data under the Fisher--Rao geometry rather than using Euclidean interpolation on the simplex. Moreover, R-LFM performs worse than LinearFM, increasing BPC to $3.267$. This agrees with the failure mode discussed in Section~\ref{sec:why-fail}: rectifying paths in the ambient Euclidean space optimizes the wrong notion of straightness and transport cost. The effect is especially harmful for character-level sequences, where local geometric errors can accumulate over many categorical variables and time steps, ultimately degrading the likelihood. Furthermore, although ReDi leverages pre-trained token embeddings to map discrete data into a continuous space, its Euclidean rectification only yields a BPC of $2.306$, exposing a substantial performance gap compared to our ReSFlow ($1.261$). This sharp contrast underscores that relying on pre-trained continuous representations cannot compensate for geometric mismatches, proving the necessity of optimizing discrete data strictly within its suitable intrinsic space. By performing rectification under the correct manifold geometry, ReSFlow achieves the best overall BPC, confirming the effectiveness of our approach.
Compared with binarized MNIST, Text8 has a larger categorical alphabet and richer sequential dependencies, so accumulated geometric errors are more likely to affect the final likelihood. By combining hyperspherical statistical-manifold modeling with reflow, ReSFlow preserves the intrinsic geometric advantage of SFM while further aligning the learned trajectories with geodesic transport on the manifold. These results support our design choice: the manifold formulation captures the proper geometry of discrete data, and rectification should be performed within that geometry to obtain efficient and reliable categorical generation.
\begin{table}[htbp]
\centering
\caption{Text8 character-level modeling results. ODE-based BPC, reported as mean $\pm$ standard deviation over three random seeds. Bold indicates the best mean, and $*$ indicates the second-best mean.}
\label{tab:text8_results}
\begin{tabular}{ll}
\toprule
Method & BPC $\downarrow$ (adaptive ODE) \\
\midrule
LinearFM~\cite{cheng2024categorical} & 2.15448\,\textpm\,0.00024 \\
SFM~\cite{cheng2024categorical} & 1.82998\,\textpm\,0.00028\secondbest \\
R-LFM & 3.26687\,\textpm\,0.00012 \\
ReDi~\cite{yoo2025redi} & 2.30562\,\textpm\,0.01010 \\
ReSFlow & \textbf{1.26146\,\textpm\,0.00036} \\
\bottomrule
\end{tabular}

\end{table}
\subsection{Promoter Sequence Design}

We finally evaluate on conditional promoter DNA sequence design \cite{avdeyev2023dirichlet}, a long-range biological sequence generation task over the nucleotide alphabet $\{\mathrm{A},\mathrm{C},\mathrm{G},\mathrm{T}\}$. This experiment is designed to test whether the proposed manifold-aware rectification remains effective when the data are both discrete and conditionally structured: unlike binarized images, promoter sequences require preserving position-dependent motifs and global consistency with the transcription initiation signal. Thus, this benchmark stresses the same issue discussed in the previous sections, namely whether a learned categorical flow can be compressed to one step without losing the geometry needed for faithful conditional generation. We use signal proxy mean squared error (SP-MSE) as the evaluation metric, where lower is better. Following the standard protocol, generated promoters are passed to a pre-trained Sei predictor, and SP-MSE measures the discrepancy between the predicted activity profiles of generated and real human promoter sequences. 

We report results under both accurate Dopri5 ODE simulation and one-step sampling in Table~\ref{tab:promoter_results}. 
Under ODE sampling, ReSFlow is nearly twice as fast as LinearFM ($67.94$ vs.\ $119.04$ seconds per 1,000 sequences), while yielding a slightly higher mean SP-MSE ($0.028743$ vs.\ $0.027734$).
More importantly, ReSFlow preserves its performance in the one-step setting: its mean SP-MSE is $0.028718$, compared with $0.028743$ under ODE sampling, while the sampling time decreases from $67.94$ seconds to $0.409$ seconds. 
By comparison, SFM performs competitively with full ODE integration but degrades to $0.035564$ under one-step sampling, showing that manifold-aware interpolation alone is insufficient for reliable single-step generation. 
R-LFM gives substantially worse SP-MSE values, $0.052146$ with ODE and $0.060504$ with Euler one-step sampling, suggesting that rectifying trajectories in the ambient Euclidean geometry can be detrimental for structured categorical sequences. 
These results support the central design of ReSFlow: reflow should be performed under the intrinsic Fisher--Rao geometry so that the resulting trajectories remain suitable for efficient one-step conditional generation.

\begin{table}[htbp]
\centering
\caption{Conditional promoter design results. Mean $\pm$ standard deviation over three random seeds; time is for generating 1,000 sequences of length 1024. Bold indicates the best mean, and $*$ indicates the second-best mean.}
\label{tab:promoter_results}
\footnotesize
\setlength{\tabcolsep}{3pt}
\begin{tabular}{lllll}
\toprule
\multicolumn{1}{c}{\multirow[c]{2}{*}[-0.5ex]{Method}}
& \multicolumn{2}{c}{ODE (NFE=adaptive)}
& \multicolumn{2}{c}{Euler one-step (NFE=1)} \\
\cmidrule(lr){2-3} \cmidrule(lr){4-5}
& SP-MSE $\downarrow$
& Time  $\downarrow$
& SP-MSE $\downarrow$
& Time  $\downarrow$ \\
\midrule
LinearFM~\cite{cheng2024categorical}
& \textbf{0.027734\,\textpm\,0.000897}
& 119.04\,\textpm\,16.57
& 0.029449\,\textpm\,0.001673\secondbest
& 0.410\,\textpm\,0.004\secondbest \\
SFM~\cite{cheng2024categorical}
& 0.028015\,\textpm\,0.000257\secondbest
& 82.18\,\textpm\,4.26
& 0.035564\,\textpm\,0.007768
& 0.410\,\textpm\,0.005 \\
R-LFM
& 0.052146\,\textpm\,0.000781
& \textbf{67.88\,\textpm\,4.17}
& 0.060504\,\textpm\,0.000351
& \textbf{0.409\,\textpm\,0.001} \\
ReSFlow
& 0.028743\,\textpm\,0.000755
& 67.94\,\textpm\,0.35\secondbest
& \textbf{0.028718\,\textpm\,0.000366}
& \textbf{0.409\,\textpm\,0.003} \\
\bottomrule
\end{tabular}
\vspace{-1em}
\end{table}

\section{Conclusion}
We proposed Rectified Statistical Flow (ReSFlow), a principled generative framework for discrete data that performs flow matching directly on the probability simplex under its natural spherical geometry. By proposing a reflow procedure in the spherical space via the square-root parameterization, ReSFlow respects the intrinsic Riemannian structure of categorical distributions while admitting closed-form conditional velocity fields.  We further established a theoretical guarantee on the straightening rate of iterative spherical rectification, and demonstrated the effectiveness of ReSFlow over existing baselines across diverse discrete generation tasks. As future work, the straightening property of ReSFlow naturally lends itself to single-step generation on other spherical data, such as unit-norm embeddings, directional statistics, and rotation-valued representations.

\begin{ack}
We thank the anonymous reviewers for their time, thoughtful comments, and constructive suggestions, which helped improve the clarity and presentation of this work.

This work was supported by the National Natural Science Foundation of China (Grant Nos. 62402263 and 62672259), the Major Science and Technology Special Project Program of the Inner Mongolia Autonomous Region (Grant No. 2026ZD0089), and the Open Competition Project under the Science and Technology Breakthrough Initiative of the Inner Mongolia Autonomous Region (Grant No. 2026KJTW0003). This work was also supported by the Inner Mongolia University High-level Talent Project (Grant No. 10000-A260015/322). Renzhe Xu’s research was supported by the Shanghai Sailing Program (No. 24YF2711600).
\end{ack}

\bibliographystyle{unsrt}
\bibliography{references}

\clearpage
\appendix

\section{Statistical Manifold of Categorical Distributions and the Spherical Projection}
\label{app:statistical_manifold}

To formulate continuous-time generative dynamics for discrete data, a natural approach is to consider the $n$-class categorical distributions as points lying on the $(n-1)$-dimensional probability simplex $\Delta^{n-1}$. Geometrically, $\Delta^{n-1}$ can be treated as a Riemannian manifold, known as the statistical manifold, where the intrinsic geometry is governed by the Fisher Information Metric. 

For a distribution $\mu \in \Delta^{n-1}$ (where $\sum_{i=1}^n \mu_i = 1$ and $\mu_i \geq 0$) and tangent vectors $u, v$ at $\mu$, the Fisher inner product is defined as:
\begin{equation}
    \langle u, v \rangle_{\mu} = \sum_{i=1}^n \frac{u_i v_i}{\mu_i}.
    \label{eq::inner_Fisher}
\end{equation}

\paragraph{The Boundary Illness.} 
While the Fisher metric elegantly captures the information geometry of probability distributions, \Cref{eq::inner_Fisher} reveals a critical flaw for generative modeling: the inner product becomes ill-defined on the boundary of the simplex. As any probability mass $\mu_i \to 0$, the denominator causes the metric to diverge. In discrete generation tasks, the target data points are typically deterministic, represented as one-hot vectors. These one-hot vectors lie exactly on the sharp corners (boundaries) of the simplex. Consequently, performing flow matching or integrating vector fields directly on $\Delta^{n-1}$ using the Fisher metric leads to severe numerical instability and singularities near the targets.

\paragraph{Necessity of the Spherical Projection.}
To circumvent this boundary illness while preserving the intrinsic geometry, we introduce the square-root diffeomorphism:
\begin{equation}
    \pi: \Delta^{n-1} \to \mathbb{S}^{n-1}_+, \quad \mu_i \mapsto x_i = \sqrt{\mu_i}.
\end{equation}
Since $\sum_{i=1}^n \mu_i = 1$, it naturally follows that $\sum_{i=1}^n x_i^2 = 1$. This transformation smoothly maps the original probability simplex to the positive orthant of the unit sphere $\mathbb{S}^{n-1}_+$. Under this mapping, the intrinsic geometry of the statistical manifold is beautifully preserved up to a constant factor. Specifically, the geodesic distance $d_{\text{cat}}$ on the simplex is strictly proportional to the standard great-circle distance $d_S$ on the sphere:
\begin{equation}
    d_{\text{cat}}(\mu, \nu) = 2 d_\mathbb{S}(\pi(\mu), \pi(\nu)),
\end{equation}
where $d_\mathbb{S}(x, y) = \arccos(\sum_{i=1}^n x_i y_i)$ for $x, y \in \mathbb{S}^{n-1}_+$. This indicates that we can seamlessly perform geometric operations on the unit sphere instead of the simplex.

Beyond preserving the distance metric, this transformation fundamentally resolves the boundary illness. For a point $x \in \mathbb{S}^{n-1}_+$, its tangent space is given by $T_x\mathbb{S}^{n-1}_+ = \{ \tilde{u} \in \mathbb{R}^n \mid \sum_{i=1}^n x_i \tilde{u}_i = 0 \}$. For any two tangent vectors $\tilde{u}, \tilde{v} \in T_x\mathbb{S}^{n-1}_+$, the inner product on the sphere simplifies to the standard Euclidean dot product:
\begin{equation}
    \langle \tilde{u}, \tilde{v} \rangle_{x} = \sum_{i=1}^n \tilde{u}_i \tilde{v}_i.
    \label{Eq::inner_S}
\end{equation}
Contrasting \Cref{Eq::inner_S} with \Cref{eq::inner_Fisher} makes the advantage immediately clear: the spherical inner product does not involve the coordinates ($x_i$ or $\mu_i$) in the denominator. Consequently, the metric, vector fields, and geodesics remain perfectly well-defined and numerically stable everywhere. By formulating our continuous dynamics on $\mathbb{S}^{n-1}_+$, our framework elegantly resolves the singularity at the boundary ($x_i = 0$), enabling stable and exact flow matching to deterministic one-hot targets.

\section{Related Work}
\label{app::Related Work}
Flow matching was originally proposed for continuous domains as an alternative to diffusion, learning transport along a prescribed probability path \citep{lipman2023flow,albergo2023building,liu2023flow}. Extending this idea to discrete spaces is more challenging, since the dynamics evolve over categorical distributions rather than Euclidean variables. Related work has modeled discrete denoising in continuous time using continuous-time Markov chains (CTMCs), bridging diffusion-style methods and discrete data \citep{campbell2024generative}. Building on this perspective, Discrete Flow Matching \citep{gat2024discrete} extends flow matching to discrete generation, while subsequent methods further refine the geometry and path design, including Fisher Flow Matching \citep{davis2024fisher} and Dirichlet Flow Matching \citep{stark2024dirichlet}. Compared with discrete diffusion, which typically learns reverse transitions, these approaches emphasize direct probability transport. Among them, Statistical Flow Matching (SFM) \citep{cheng2024categorical} is particularly appealing, as it formulates categorical generation on the simplex as a statistical manifold, better respecting the geometry of categorical distributions while avoiding explicit CTMC jump-rate parameterization. This provides a cleaner and often more stable framework for discrete generation, and thus serves as the foundation of our model.

Flow-based models can generate high-quality discrete samples, but inference is often costly, typically requiring tens to hundreds of function evaluations. Moreover, most one- or few-step generative methods focus on continuous data \citep{liu2023flow,salimans2022progressive,cheng2025riemannian,yin2024one}; in discrete spaces, fast sampling has been explored mainly within diffusion-based frameworks through distillation or specialized acceleration schemes \citep{gulcehre2025beyond,hayakawa2024distillation,liu2024discrete,xu2024energy}. Flow matching is often considered more suitable for low-step generation because it learns transport dynamics rather than local denoising and allows the design of straighter probability paths, e.g., rectified or OT-inspired flows. Its deterministic ODE formulation may therefore be easier to compress into a few updates, although standard flow matching is not inherently one-step. Recently, ReDi~\citep{yoo2025redi} extended rectified flow to discrete spaces via iterative reflow. However, this approach—like many continuous relaxation methods for discrete spaces—heavily requires pre-trained token embeddings to function, restricting its applicability when training purely from scratch. Moreover, the extension is performed entirely in Euclidean space, ignoring the intrinsic Riemannian geometry of the probability simplex. As a result, linear interpolation paths are not true geodesics on the statistical manifold, leading to suboptimal transport costs and trajectories that are inherently harder to straighten through reflow. Consequently, ReDi's practical advantages over standard discrete flow matching remain less clear-cut than those of rectified flow in continuous settings.

\section{Implementation Details}
\subsection{Tangent Space Projection}
\label{sec::Tangent space projection}
The learned $v_\theta(X_t, t)$ must lie in the tangent space $T_{X_t}\mathbb{S}^{K-1}$ at each point to ensure the flow remains on the manifold. We enforce this via tangent space projection:

\begin{equation}
 v_\theta^{\text{tan}}(X_t, t) = v_\theta(X_t, t) - \langle v_\theta(X_t, t), X_t \rangle\, X_t,    
\end{equation}

which removes the normal component of the predicted velocity at each point on the sphere.
\subsection{Algorithm Procedure}

Algorithm~\ref{alg:ReSFlow} summarizes the complete ReSFlow procedure.
For each reflow iteration $k$, we train a velocity field $v_\theta$ on the
current coupling $(Z_0^k, Z_1^k)$. At each training step, a mini-batch of
$B$ triplets $\{(t_j, z_{0,j}, z_{1,j})\}_{j=1}^{B}$ is sampled, where
$t_j \sim \mathcal{U}[0,1]$ and $(z_{0,j}, z_{1,j})$ are drawn from the
current coupling. Geodesic interpolants $z_{t_j}$ and their corresponding
tangent velocities $u_{t_j}$ are computed via spherical linear interpolation
(slerp) on $\mathbb{S}^{K-1}_+$. The network parameters $\theta$ are then
updated by minimizing the mini-batch Monte Carlo estimate of the projected
velocity matching loss:
$\frac{1}{B}\sum_{j=1}^{B}\|v_\theta(z_{t_j}, t_j)
- \langle v_\theta(z_{t_j}, t_j), z_{t_j}\rangle z_{t_j} - u_{t_j}\|^2$.
After training converges, the coupling is rectified by simulating the learned
ODE forward to produce new pairs $(Z_0^{k+1}, Z_1^{k+1})$, progressively
straightening the flow trajectories. At inference time, a single Euler step
via the exponential map yields the final estimate
$\hat{p} \in \Delta^{K-1}$.

\begin{algorithm}[htbp]
\caption{Rectified Statistical Flow (ReSFlow)}
\label{alg:ReSFlow}
\begin{algorithmic}[1]
\REQUIRE Prior $p_0$, target $q$, diffeomorphism $\pi: \Delta^{K-1} \!\to\! \mathbb{S}^{K-1}_+$, reflow iterations $N_{\text{reflow}}$

\STATE \textbf{Initialize:} Sample $(X_0, X_1) = (\pi(\mu), \pi(\nu))$ independently, $\mu \sim p_0$, $\nu \sim q$
\STATE \hspace*{4.2em} Set $(Z_0^{0}, Z_1^{0})=(X_0, X_1)$

\FOR{$k = 0, 1, \ldots, N_{\text{reflow}}$}
    \STATE \textcolor{gray}{// Train: learn velocity field on current coupling}
    \REPEAT
        \STATE Sample mini-batch $\{(t_j,\, z_{0,j},\, z_{1,j})\}_{j=1}^{B}$, \; $t_j \sim \mathcal{U}[0,1]$, \; $(z_{0,j},\, z_{1,j})$ from coupling $(Z_0^{k}, Z_1^{k})$
        \STATE $z_{t_j} \leftarrow \mathrm{slerp}(z_{0,j}, z_{1,j};\, t_j)$, \; $\theta_j = \arccos(z_{0,j}^\top z_{1,j})$ \hfill $\triangleright$ Geodesic interpolation on $\mathbb{S}^{K-1}_+$
        \STATE $u_{t_j} \leftarrow \frac{\theta_j}{\sin\theta_j}\!\left[-\cos((1\!-\!t_j)\theta_j)\, z_{0,j} + \cos(t_j\theta_j)\, z_{1,j}\right]$
        \STATE Update $\theta$ by minimizing $\frac{1}{B}\sum_{j=1}^{B}\|v_\theta(z_{t_j}, t_j) - \langle v_\theta(z_{t_j},t_j), z_{t_j}\rangle z_{t_j} - u_{t_j}\|^2$
    \UNTIL{converged}
    \IF{$k < N_{\text{reflow}}$}
        \STATE \textcolor{gray}{// Reflow: recouple via ODE simulation}
        \STATE $\forall\, Z_0^{k+1} \!\sim\! \pi(p_0)$: \; $Z_1^{k+1} \leftarrow \texttt{ODESolve}(v_{\hat{\theta}^{(k)}},\, Z_0^{k+1},\, 0 \!\to\! 1)$
    \ENDIF
\ENDFOR

\STATE
\STATE \textbf{One-Step Inference:}
\STATE Sample $X_0 \sim \pi(p_0)$; \; $w \leftarrow v_{\hat{\theta}}(X_0, 0)$
\STATE $\hat{X}_1 \leftarrow \cos(\|w\|)\, X_0 + \sin(\|w\|)\, \frac{w}{\|w\|}$ \hfill $\triangleright$ Exponential map on $\mathbb{S}^{K-1}$
\STATE \textbf{return} $\hat{p} = \pi^{-1}(\hat{X}_1) \in \Delta^{K-1}$
\end{algorithmic}
\end{algorithm}

\input{theories}

\section{Experimental Setup}
\label{app:implementation}

In this appendix, we provide dataset-specific implementation details for all
benchmarks. Since the network architectures and training protocols differ
across datasets, we describe the data processing, model architecture, and
optimization setup separately for each benchmark. Unless otherwise specified,
all rectified variants use a single reflow iteration, and all reflow models are
initialized from the previous checkpoint and fine-tuned rather than trained
from scratch. We summarize sampling procedures and evaluation metrics at the
end of this section.

\subsection{Swiss Roll on the Simplex}
\label{app:setup_swissroll}

\paragraph{Dataset and preprocessing.}
We follow the Swiss-roll-on-simplex construction used in prior categorical
flow-matching work. The data lie on the $2$-simplex with $K=3$ categories,
i.e., each sample is a probability vector $\mu \in \Delta^2$. We generate
$1000$ samples and use the resulting fixed empirical distribution as the
target distribution $q$ for this toy experiment. Since this benchmark is used
as a geometric sanity check, we do not use a separate validation or test split.
Reported likelihoods are evaluated on the same fixed empirical distribution and
should be interpreted as relative comparisons among methods rather than as
generalization metrics.

The construction maps a two-dimensional Swiss roll to the simplex as follows.
Let $x$ and $y$ denote the two Swiss-roll coordinates. We first shift each
coordinate by subtracting its batchwise minimum and adding a padding constant
$\mathrm{pad}=1.0$. We then define
\[
    M = \max_i (x_i + y_i) + \mathrm{pad},
    \qquad
    z_i = M - (x_i + y_i),
\]
and obtain the simplex point by normalizing
\[
    \mu_i = \frac{(x_i, y_i, z_i)}{M}.
\]
This construction ensures that all coordinates are nonnegative and sum to one.
There is no additional dataset-level $\epsilon$-smoothing parameter; numerical
stabilization inside the flow model uses $\epsilon=10^{-4}$ where needed.

The prior distribution $p_0$ is sampled on the simplex by drawing independent
exponential random variables, normalizing them to sum to one, applying a small
clamp for numerical stability, and renormalizing. The target distribution $q$
is the empirical distribution over the $1000$ generated Swiss-roll samples.
Unless otherwise stated, we use random seed $42$ for data and model
initialization, and seed $0$ for generating reflow pairs.

\paragraph{Model architecture.}
For this benchmark, all methods use the same MLP vector-field backbone. The
input has per-position dimension $3$ and tensor shape $(B,1,3)$, and the
network outputs a vector field with the same per-position dimension. The hidden
dimension is set to $128$.

The encoder consists of an input branch, a time branch, and a fusion head. The
input branch maps the current state through two linear layers with ReLU
activation. The time branch applies two linear layers to a sinusoidal
time-step embedding. The time embedding dimension is $128$. The resulting
state and time features are concatenated and passed through a fusion MLP with
three linear layers and ReLU activations to produce the final vector-field
prediction. We do not use dropout or normalization layers in this toy
architecture.

All compared methods use this same MLP backbone in the Swiss-roll experiment.
The methods differ only in their geometry and flow construction. LinearFM and
R-LFM use Euclidean interpolation and projection on the simplex, whereas SFM
and ReSFlow use the square-root map and spherical geometry on
$\mathbb{S}^{K-1}_{+}$. Tangent-space projection is applied as described in
Appendix~\ref{sec::Tangent space projection}.

\paragraph{Training protocol.}
For the initial flow-matching model, we train with Adam using learning rate
$10^{-3}$ and no weight decay. Since the dataset contains only $1000$ points,
we use full-batch training with batch size $1000$. Each model is trained for
$2000$ epochs, corresponding to one parameter update per epoch. We do not use
learning-rate scheduling, warmup, gradient clipping, or exponential moving
average for this toy experiment.

For the reflow variants, we use one reflow iteration. In both R-LFM and
ReSFlow, the reflow model is initialized from the previous checkpoint and then
fine-tuned, rather than trained from scratch. At the beginning of the reflow
stage, we generate $10{,}000$ recoupled endpoint pairs in batches of $256$
using ODE sampling from $t=0$ to $t=1$ with solver tolerances
$\mathrm{atol}=10^{-5}$ and $\mathrm{rtol}=10^{-5}$. The resulting model is
trained for $2000$ epochs using mini-batches of $256$ recoupled pairs. The two
methods differ only in the geometry used for reflow: R-LFM performs
Euclidean/straight-line reflow in the simplex geometry, whereas ReSFlow uses
spherical tangent reflow under the square-root map.

\subsection{Binarized MNIST}
\label{app:setup_bmnist}

\paragraph{Dataset and preprocessing.}
We use the static binarized MNIST dataset with the standard
$50{,}000/10{,}000/10{,}000$ train/validation/test split. Each image has
spatial resolution $28 \times 28$. The original data are represented as
$784$ binary variables, and we treat each pixel as a categorical variable with
$K=2$ categories. Concretely, each binary pixel value $x \in \{0,1\}$ is
represented by the two-dimensional simplex coordinate $(x,1-x)$. Thus, the
model input is arranged as a tensor of shape $(28,28,2)$.

The prior distribution $p_0$ is sampled independently at each pixel location by
drawing exponential random variables for the two categories, normalizing them
onto the simplex, applying a small clamp for numerical stability, and
renormalizing. The data are not smoothed or dequantized. Unless otherwise
specified, numerical stabilization inside the flow model uses
$\epsilon=10^{-4}$.

No additional image preprocessing is applied. In the convolutional vector-field
network, the simplex-valued input is internally rescaled from $[0,1]$ to
$[-1,1]$ before the first convolutional layer.

\paragraph{Model architecture.}
For binarized MNIST, we use a convolutional vector-field predictor. The input
and output tensors both have shape $(B,28,28,2)$, where the last dimension
corresponds to the two categorical states of each pixel. Internally, the
categorical dimension is treated as the channel dimension for convolutional
processing.

All methods compared on binarized MNIST use the same convolutional backbone
unless otherwise stated. The differences between methods are in the geometric
parameterization and the reflow procedure. LinearFM and R-LFM use
Euclidean/simplex geometry, whereas SFM and ReSFlow use the square-root map and
spherical geometry on $\mathbb{S}^{K-1}_{+}$. The shared backbone uses four
residual stages, four top-down refinement blocks, ELU activations,
InstanceNorm++ normalization, and additive sinusoidal time embeddings with
embedding dimension $128$.

\paragraph{Training protocol.}
For the base model, we train with Adam using learning rate
$3 \times 10^{-4}$, weight decay $0$, and batch size $256$. Models are trained
for $100{,}000$ optimization steps. We use a reduce-on-plateau learning-rate
schedule with decay factor $0.5$, patience $10$, and minimum learning rate
$10^{-4}$. No warmup is used. We clip the global gradient norm to $100$.
No exponential moving average is used. Unless otherwise specified, the random
seed is set to $42$.

For rectified models, we generate recoupled endpoint pairs from the previously
trained model by ODE sampling from $t=0$ to $t=1$ with solver tolerances
$\mathrm{atol}=10^{-5}$ and $\mathrm{rtol}=10^{-5}$. The rectified model is
initialized from the previous checkpoint and fine-tuned rather than trained
from scratch. For each method, we generate $50{,}000$ recoupled pairs in
batches of $256$ and train the rectified model for $100{,}000$ optimization
steps with batch size $256$ and learning rate $10^{-4}$. ReSFlow uses
spherical tangent reflow, whereas R-LFM uses Euclidean/simplex reflow.
Experiments were run on an NVIDIA GeForce RTX 5090 GPU with CUDA 13.0 and
PyTorch 2.7.0.

\subsection{Text8}
\label{app:setup_text8}

\paragraph{Dataset and preprocessing.}
We use the Text8 character-level language modeling benchmark. We follow the
standard character-level Text8 split: the first $90$M characters are used for
training, the next $5$M for validation, and the final $5$M for testing. The
vocabulary consists of the $26$ lowercase English letters and the space
character, resulting in $27$ symbols. Each character is mapped to an integer
token.

We use a sequence length of $256$. Each training example is a contiguous window
of $256$ characters. Tokens are represented as one-hot vectors, so each sample
has shape $(256,27)$, and a mini-batch has shape $(B,256,27)$. No additional
text normalization, smoothing, or dequantization is applied beyond the
character-level encoding.

The prior distribution $p_0$ is sampled independently at each sequence position
by drawing exponential random variables over the $27$ categories, normalizing
them onto the simplex, applying a small clamp for numerical stability, and
renormalizing. Unless otherwise specified, numerical stabilization inside the
flow model uses $\epsilon=10^{-4}$.

\paragraph{Model architecture.}
For Text8, all methods using the DiT backbone share the same architecture unless
otherwise specified. The vector-field predictor operates on sequences of length
$256$ with vocabulary size $27$, using $12$ transformer blocks with hidden
dimension $768$, $12$ attention heads, MLP expansion ratio $4$, GELU
feed-forward activations, SiLU time-conditioning activations, dropout $0.1$,
layer normalization, and rotary positional embeddings. Time is encoded by a
sinusoidal timestep embedding of size $256$ followed by a two-layer MLP
producing a $128$-dimensional conditioning vector, which is injected into each
block through adaptive layer normalization via shift, scale, and gate
parameters. The final layer outputs a $27$-dimensional vector-field prediction
at each sequence position.

\paragraph{Training protocol.}
In the training stage, the base DiT model is trained with AdamW using learning
rate $3 \times 10^{-4}$, weight decay $0.1$, $\beta_1=0.9$, and
$\beta_2=0.999$. We train for $750{,}000$ optimization steps with global batch
size $512$, using linear warmup for the first $1000$ steps followed by a
reduce-on-plateau learning-rate schedule with minimum learning rate $10^{-5}$.
The global gradient norm is clipped to $1.0$.

In the reflow stage, recoupled endpoint pairs are generated from the previous
model by ODE sampling from $t=0$ to $t=1$ with tolerances
$\mathrm{atol}=10^{-5}$ and $\mathrm{rtol}=10^{-5}$. The reflow model is
initialized from the previous checkpoint and fine-tuned rather than trained
from scratch. We generate $100{,}000$ recoupled pairs in batches of $256$ and
train the reflow model for $40{,}000$ optimization steps with batch size $256$
and learning rate $4 \times 10^{-5}$. ReSFlow uses the spherical
parameterization induced by the square-root map, while R-LFM uses the
Euclidean reflow parameterization. Training was performed on $4$ NVIDIA A800
GPUs and took approximately $6$ days.

\subsection{Promoter Sequence Design}
\label{app:setup_promoter}

\paragraph{Dataset and preprocessing.}
We use the promoter sequence design dataset from the official Zenodo data
package\footnote{\url{https://doi.org/10.5281/zenodo.7943307}}, which includes
the GRCh38 reference genome, FANTOM transcription start site (TSS) annotations,
CAGE signal tracks, blacklist regions, and the pre-trained Sei model for
downstream evaluation. Each example is a $1024$ bp DNA sequence over the
alphabet $\{\mathrm{A},\mathrm{C},\mathrm{G},\mathrm{T}\}$, represented as a
one-hot tensor in $\mathbb{R}^{1024 \times 4}$; ambiguous bases are encoded as
the uniform vector $(1/4,1/4,1/4,1/4)$. The data split is chromosome-based:
after taking the first $100{,}000$ FANTOM TSS entries, chromosomes
$\mathrm{chr8}$ and $\mathrm{chr9}$ are used for testing, $\mathrm{chr10}$ for
validation, and all remaining chromosomes for training.

The conditioning signal is the FANTOM CAGE signal aligned to the same $1024$ bp
window as the DNA sequence. We use the plus-strand CAGE channel as a
per-position scalar condition, giving a conditioning tensor in
$\mathbb{R}^{1024 \times 1}$. During training, the TSS position is randomly
shifted by an offset sampled from $\{-10,\ldots,10\}$ before extracting the
sequence and signal window; no offset is used for validation or testing. The
flow prior $p_0$ is the categorical simplex prior, sampled independently at each
sequence position and normalized to match the $1024 \times 4$ sequence
representation, while the target distribution is the empirical distribution of
one-hot promoter sequences paired with their CAGE conditions.

\paragraph{Model architecture.}
The vector-field predictor for promoter design is a one-dimensional dilated
convolutional residual network. It takes the current categorical state
$x_t \in \mathbb{R}^{1024 \times 4}$ and the CAGE condition
$c \in \mathbb{R}^{1024 \times 1}$, concatenates them into a $5$-channel input,
and maps them to $256$ hidden channels with an initial kernel-size-$9$
convolution. The backbone contains $20$ residual convolutional blocks with
$256$ channels, kernel size $9$, GroupNorm with one group, Swish activations,
and additive time conditioning. The dilation pattern is
$(1,1,4,16,64)$ repeated four times. Time is embedded with Gaussian Fourier
features followed by a linear layer and SiLU activation, producing a
$256$-dimensional embedding that is added to each block after projection. The
CAGE signal is not separately encoded, but is injected only by input
concatenation. A final $1\times1$ convolutional head with GELU activation maps
$256 \to 256 \to 4$, yielding a four-channel vector-field prediction at each
sequence position. No dropout is used. All promoter experiments use this same
backbone. The differences between the base and reflow models come from the flow
geometry and reflow procedure rather than from the sequence encoder.

\paragraph{Training protocol.}
For the base promoter model, we train with Adam using learning rate
$5 \times 10^{-4}$, $\beta_1=0.9$, $\beta_2=0.999$, and no weight decay. The
batch size is $256$, and training runs for $200{,}000$ optimization steps with
gradient clipping at global norm $100$ and random seed $42$. We do not use
learning-rate warmup or exponential moving average. Validation is performed
every $5{,}000$ steps using the corresponding flow-matching objective on the
validation split, and checkpoints are saved periodically.

For ReSFlow, we perform a single reflow stage on top of the base model. The
reflow model is initialized from the base checkpoint and fine-tuned rather than
trained from scratch. We generate $100{,}000$ recoupled endpoint pairs by ODE
sampling from prior samples, integrating the learned vector field from $t=0$ to
$t=1$ with batch size $128$ and tolerances
$\mathrm{atol}=\mathrm{rtol}=10^{-5}$. During reflow pair generation, the CAGE
condition $c$ is kept fixed: for each conditioned prior sample $(x_0,c)$, we
integrate the conditional vector field to obtain the paired endpoint $x_1$.
The resulting recoupled dataset therefore consists of triples $(x_0,x_1,c)$.
The reflow stage is trained for $100{,}000$ optimization steps with batch size
$128$ and learning rate $10^{-4}$.

\subsection{Evaluation Metrics, NFE, and Sampling Protocols}
\label{app:metrics}

We describe the evaluation metrics and sampling protocols used in the main
experiments. Since our model represents categorical variables as points on the
probability simplex and performs flow matching after the square-root embedding,
we distinguish likelihood-based evaluation, sample-quality evaluation, and
task-specific downstream evaluation.

\paragraph{Negative log-likelihood.}
For the Swiss-roll experiment, we report negative log-likelihood (NLL) using
the continuous-time change-of-variables formula associated with the learned
ODE. This evaluation treats the data as samples from a continuous density on
the simplex, or equivalently on the positive orthant of the sphere after the
square-root map. Therefore, unlike discrete cross-entropy, the resulting NLL is
a differential likelihood quantity and is not constrained to be non-negative.
Accurate NLL estimation requires numerical integration of the learned dynamics,
and we therefore evaluate NLL only with adaptive ODE integration rather than
with one-step sampling.

\paragraph{Fréchet Inception Distance.}
For binarized MNIST, we report Fréchet Inception Distance (FID) as a
sample-quality metric. Generated binary images are compared with the reference
data distribution using the standard FID protocol; lower values indicate that
the generated samples better match the real data distribution. We report FID
both under adaptive ODE sampling and under one-step sampling. This allows us to
separate the quality of the learned continuous-time flow from the quality of
the compressed one-step sampler.

\paragraph{Bits per character.}
For Text8, we report bits per character (BPC). Since ReSFlow and the compared
flow-matching baselines are not autoregressive language models, we do not
evaluate BPC by multiplying token-wise autoregressive conditionals. Instead, we
use a variational integral objective that measures how well the learned vector
field predicts the terminal categorical state from an intermediate point along
the probability path.

Let $\mu_1 \sim q$ be a data sequence and $\mu_0 \sim p_0$ be a prior sample.
We map them to the sphere as $X_0=\pi(\mu_0)$ and $X_1=\pi(\mu_1)$, and define
the intermediate point by spherical geodesic interpolation
\[
    X_t = \mathrm{slerp}(X_0,X_1;t), \qquad
    \mu_t = \pi^{-1}(X_t).
\]
Given $X_t$, the model predicts the endpoint by extrapolating the learned
velocity field from time $t$ to time $1$ on the sphere:
\[
    \widehat{X}_1(X_t,t)
    =
    \mathrm{Exp}_{X_t}\!\left((1-t)\,v_\theta(X_t,t)\right),
    \qquad
    \widehat{\mu}_1(\mu_t,t)
    =
    \pi^{-1}\!\left(\widehat{X}_1(X_t,t)\right).
\]
For a sequence of length $L$, the variational negative log-likelihood is
computed as
\[
    \mathcal{L}_{\mathrm{var}}
    =
    -
    \mathbb{E}_{\mu_0\sim p_0,\,\mu_1\sim q}
    \left[
    \int_0^1
    \frac{1}{1-t}
    \sum_{\ell=1}^{L}
    \log
    \left\langle
        \widehat{\mu}_{1,\ell}(\mu_t,t),
        \mu_{1,\ell}
    \right\rangle
    dt
    \right],
\]
where $\langle\cdot,\cdot\rangle$ denotes the standard Euclidean inner product
over the categorical alphabet at each sequence position. Since $\mu_{1,\ell}$
is one-hot for Text8, the inner product selects the probability assigned to the
ground-truth character at position $\ell$.

The factor $1/(1-t)$ makes direct numerical integration near $t=1$ unstable.
Following the standard variational evaluation used for continuous-time
categorical flows, we apply the change of variables
$s=-\log(1-t)$, or equivalently $t=1-e^{-s}$. This gives
\[
    \mathcal{L}_{\mathrm{var}}
    =
    -
    \mathbb{E}_{\mu_0\sim p_0,\,\mu_1\sim q}
    \left[
    \int_0^\infty
    \sum_{\ell=1}^{L}
    \log
    \left\langle
        \widehat{\mu}_{1,\ell}(\mu_t,t),
        \mu_{1,\ell}
    \right\rangle
    ds
    \right],
    \qquad t=1-e^{-s}.
\]
In practice, the integrand becomes negligible for large $s$ because $t$ is then
very close to $1$. We truncate the integral at a finite upper limit and
evaluate it numerically with an adaptive solver. The reported BPC is obtained
by normalizing the variational negative log-likelihood by the sequence length
and converting from nats to bits:
\[
    \mathrm{BPC}
    =
    \frac{\mathcal{L}_{\mathrm{var}}}{L\log 2}.
\]
This metric is likelihood-based and therefore is reported only for ODE-based
evaluation, not for one-step sample generation.

\paragraph{Signal proxy mean squared error.}
For conditional promoter sequence design, we use signal proxy mean squared
error (SP-MSE). Each generated promoter sequence is evaluated by a pre-trained
sequence-to-function predictor, and the predicted promoter activity is compared
with that of real human promoter sequences on the test chromosomes. Denoting by
$r_i$ the predicted activity profile of the real promoter and by
$\widehat{r}_i$ the predicted activity profile of the generated promoter under
the same conditioning signal, SP-MSE is computed as
\[
    \mathrm{SP\mbox{-}MSE}
    =
    \frac{1}{N}
    \sum_{i=1}^{N}
    \left\|
        \widehat{r}_i - r_i
    \right\|_2^2 .
\]
Lower SP-MSE indicates that the generated DNA sequences better preserve the
desired promoter activity implied by the conditioning signal. Unlike FID or
NLL, this is a downstream functional metric: it evaluates whether generated
sequences are consistent with the biological signal rather than only measuring
distributional similarity in sequence space.

\paragraph{Number of function evaluations.}
We use NFE to denote the number of evaluations of the learned vector field
during sampling. Adaptive Dormand--Prince sampling uses a variable number of
function evaluations determined by the ODE solver tolerance. In contrast,
one-step generation uses NFE$=1$. For ReSFlow, one-step sampling evaluates the
velocity field once at $t=0$ and maps the result to the endpoint using the
spherical exponential map:
\[
    \widehat{X}_1
    =
    \mathrm{Exp}_{X_0}\!\left(v_\theta(X_0,0)\right),
    \qquad
    \widehat{\mu}_1 = \pi^{-1}(\widehat{X}_1).
\]
The final discrete sample is obtained from $\widehat{\mu}_1$ by sampling from,
or taking the maximum-probability category of, the predicted categorical
distribution.

\paragraph{Wall-clock sampling time.}
We report wall-clock sampling time for binarized MNIST and promoter sequence
design to quantify practical inference efficiency. Timing is measured for
generating $1{,}000$ samples and includes the sampling procedure only; it does
not include training, checkpoint loading, or metric computation. For promoter
design, the reported time corresponds to generating $1{,}000$ sequences of
length $1024$. We do not report Text8 sampling time in the main table because
the reported BPC is computed from the variational likelihood integral rather
than from generated samples.

\clearpage
\section{Additional Results}
\label{app::addi}
\subsection{FID versus NFE on Binarized MNIST}
\begin{figure}[H]
\centering
\includegraphics[width=0.75\linewidth]{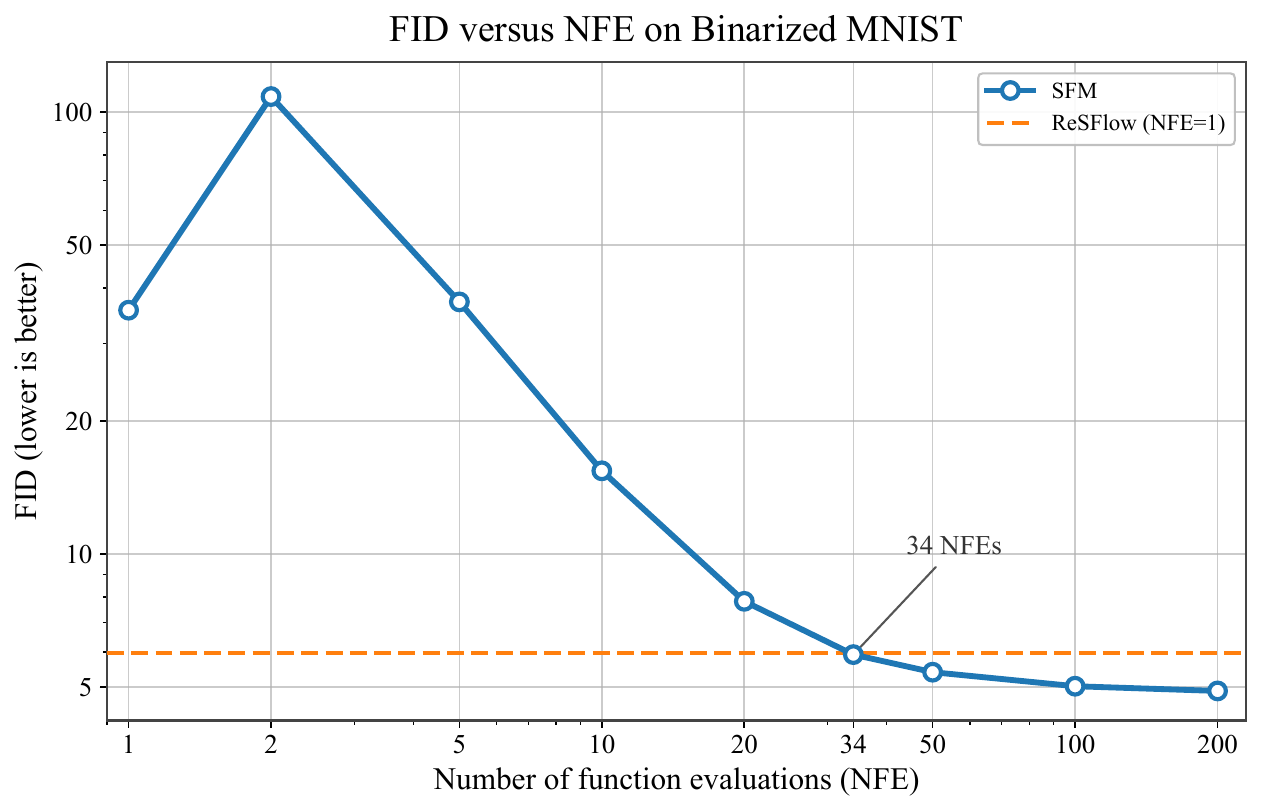}
\caption{Binarized MNIST FID versus NFE using the matched spherical exponential-map Euler sampler, three seeds, and 1,000 generated samples per seed. The solid curve shows the SFM mean; the dashed line marks the one-step ReSFlow mean ($5.973$). SFM first has a lower mean at NFE$=34$.}
\label{fig:bmnist_nfe}
\end{figure}

\subsection{Fisher--Rao Transport Cost Analysis of Reflow Couplings}
\label{subsec:fr_transport_cost_reflow}

\begin{figure}[H]
    \centering
    \includegraphics[width=0.75\linewidth]{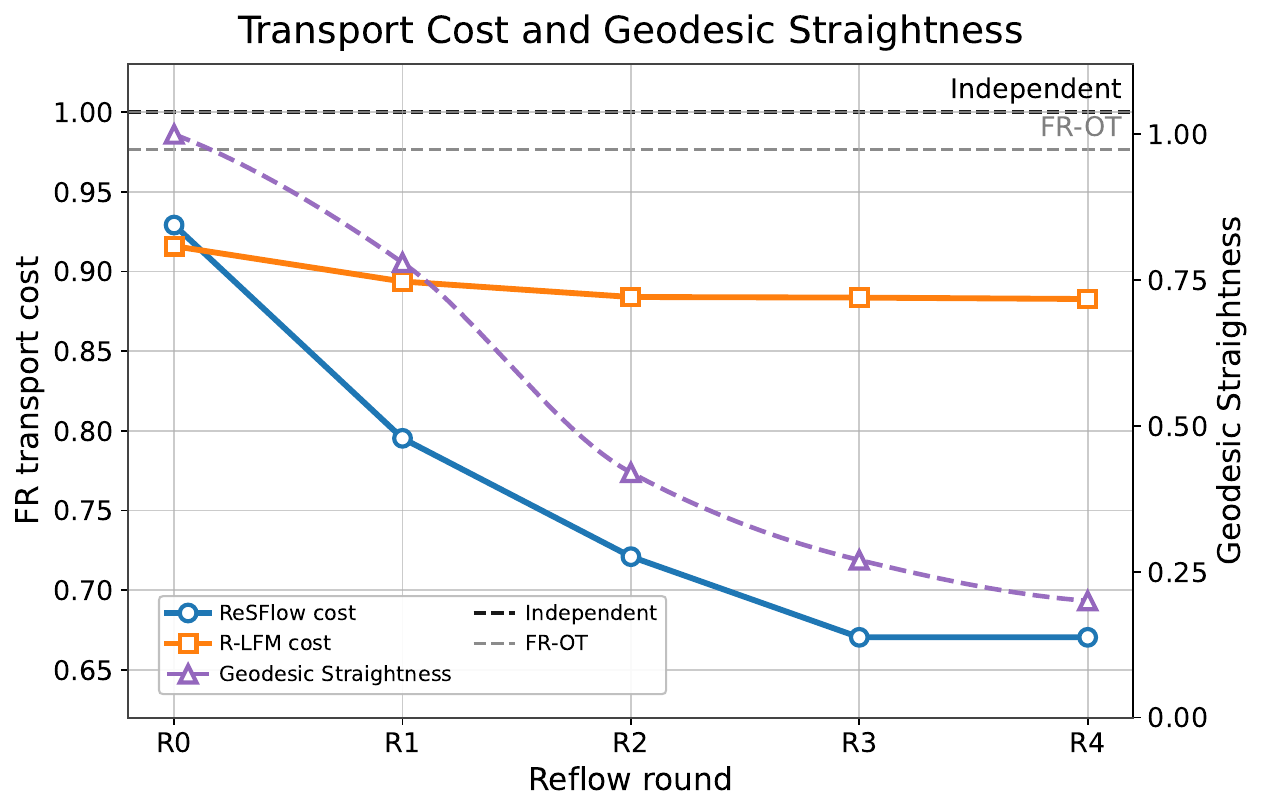}
    \caption{
    Fisher--Rao transport cost and Geodesic Straightness on BMNIST.
    }
    \label{fig:fr_transport_cost_reflow}
\end{figure}

To further examine why spherical reflow improves one-step generation, we analyze the geometry of the couplings produced during reflow on BMNIST. Since each pixel is represented as a categorical distribution on the probability simplex, Euclidean distance is not the appropriate notion of discrepancy. We therefore measure whether reflow produces shorter and more geodesically consistent couplings under the intrinsic Fisher--Rao geometry. This analysis complements the previous sampling results by directly evaluating whether rectification simplifies the transport problem on the simplex.

Following the Fisher--Rao distance defined earlier, for two categorical distributions $\mu,\nu\in\Delta^{C-1}$, we denote this distance by
\begin{equation}
d_{\text{cat}}(\mu,\nu)
=
2\arccos\left(
\sum_{c=1}^{C}\sqrt{\mu_c\nu_c}
\right).
\end{equation}
For a full image with $D$ pixels, the corresponding Fisher--Rao transport cost between a pair $(z_0,z_1)$ is defined as
\begin{equation}
\mathcal{C}_{\mathrm{FR}}(z_0,z_1)
=
\sum_{j=1}^{D}
d_{\text{cat}}(z_{0,j},z_{1,j}),
\end{equation}
where $z_{0,j}$ and $z_{1,j}$ denote the categorical distributions at pixel $j$. We compare the reflow couplings against two baselines shown in Fig.~\ref{fig:fr_transport_cost_reflow}: an independent random coupling, where prior and data samples are paired independently, and a mini-batch Fisher--Rao optimal transport coupling (FR-OT), where pairings are optimized within each mini-batch using the same Fisher--Rao cost. We also report the costs of the couplings produced by ReSFlow and by the Euclidean reflow variant R-LFM.

As shown in Fig.~\ref{fig:fr_transport_cost_reflow}, the independent coupling has the largest transport cost, indicating that arbitrary pairings between prior samples and data samples are geometrically inefficient. FR-OT reduces this cost by explicitly selecting lower-cost assignments within each mini-batch, but the improvement is limited, suggesting that mini-batch optimal transport only provides a locally optimized coupling and cannot substantially simplify the global transport structure. Notably, across reflow rounds, ReSFlow substantially reduces the Fisher--Rao transport cost, whereas R-LFM only achieves a mild reduction and quickly plateaus. This indicates that the improvement of ReSFlow comes from rectifying the coupling in the square-root hyperspherical representation induced by the Fisher--Rao metric, where categorical distributions are transported along intrinsic spherical directions. However, Euclidean reflow updates the coupling in a geometry that is not aligned with the Fisher--Rao structure, leading to much weaker reduction in intrinsic transport cost. These results support our central claim that hyperspherical flow modeling and reflow rectification are complementary for simplifying transport on the probability simplex, empirically corroborating the spherical transport-cost contraction in \Cref{thm:cost_contraction}.

\subsection{Riemannian Geodesic Straightness of Learned Trajectories}
\label{subsec:riemannian_geodesic_straightness}

We further evaluate a complementary trajectory-level property:
\emph{whether the actual ODE trajectories induced by the learned vector field
become closer to geodesics after reflow}. To this end, we design a Riemannian
analogue of the straightness metric: 

\paragraph{Metric design.}
In Euclidean Rectified Flow, trajectory straightness is measured by comparing the learned velocity along an ODE trajectory with the displacement between the trajectory’s initial and terminal points.
On a curved statistical manifold,
however, the Euclidean displacement vector is not intrinsic. Since ReSFlow
operates on the hypersphere $\mathbb{S}^{K-1}_+$ through the square-root map
$\pi$, the natural analogue of a straight line is a spherical geodesic, and the
natural analogue of a displacement vector is the Riemannian logarithmic map.

For the $k$-th reflow checkpoint, let  $v_{\hat{\theta}}^{(k)}$, denote the learned
velocity field. Starting from an initial point
$Z^{(k)}_0 \sim \pi(p_0)$, we generate the actual model trajectory by
integrating
\begin{equation}
    \frac{d Z^{(k)}_t}{dt}
    =
    v_{\hat{\theta}}^{(k)}(Z^{(k)}_t,t),
    \qquad t\in[0,1].
\end{equation}
We denote the terminal point of this integrated trajectory by
$Z^{(k)}_1$. Importantly, $Z^{(k)}_1$ is the endpoint produced by the learned
ODE itself, rather than an externally specified data sample. Therefore, the
metric evaluates the intrinsic straightness of the model trajectory.

At an intermediate point $Z^{(k)}_t$, the tangent vector of the
constant-speed geodesic that reaches the same endpoint $Z^{(k)}_1$ in the
remaining time $1-t$ is
\begin{equation}
    u_{\mathrm{geo}}^{(k)}(Z^{(k)}_t,t)
    =
    \frac{
    \Log_{Z^{(k)}_t}
    \left(
    Z^{(k)}_1
    \right)
    }{1-t}.
\end{equation}
We define the Riemannian geodesic straightness of the $k$-th checkpoint as
\begin{equation}
\label{eq:riem_geodesic_straightness}
    S_{\mathrm{Riem}}
    \left(
    Z^{(k)}
    \right)
    =
    \int_0^1
    \mathbb{E}_{Z^{(k)}_0\sim\pi(p_0)}
    \left[
    \left\|
        v_{\hat{\theta}}^{(k)}(Z^{(k)}_t,t)
        -
        \frac{
        \Log_{Z^{(k)}_t}
        (
        Z^{(k)}_1
        )
        }{1-t}
    \right\|^2_{g}
    \right]
    dt .
\end{equation}
Equivalently, this metric replaces the Euclidean displacement used in the
original straightness measure by the instantaneous tangent vector required to
move from the current point to the trajectory endpoint along a geodesic. If the
learned trajectory itself is a constant-speed geodesic, then
$S_{\mathrm{Riem}}(Z^{(k)})=0$. Hence, a smaller value indicates that the
learned ODE path is more geodesic and more suitable for one-step generation via
the exponential map.

\paragraph{Results on binarized MNIST.}
We evaluate the Riemannian geodesic straightness on binarized MNIST across
successive reflow rounds, as reported in
Figure~\ref{fig:fr_transport_cost_reflow}. The straightness metric decreases
monotonically as the number of reflow rounds increases. This indicates that
reflow progressively straightens the learned trajectories on the statistical
manifold. The decreasing trend directly supports the geometric interpretation
of ReSFlow in the main text and empirically validates the straightening-rate
guarantee in \Cref{thm:straightening_rate}: reflow makes the learned ODE
trajectories increasingly aligned with Fisher--Rao geodesics, thereby
explaining why a single exponential-map update can achieve strong one-step
generation performance.

\subsection{Generated Samples}
\label{app:generated-samples}

\begin{figure*}[t]
  \centering
  \noindent\makebox[\textwidth][c]{%
    \includegraphics[width=0.95\textwidth]{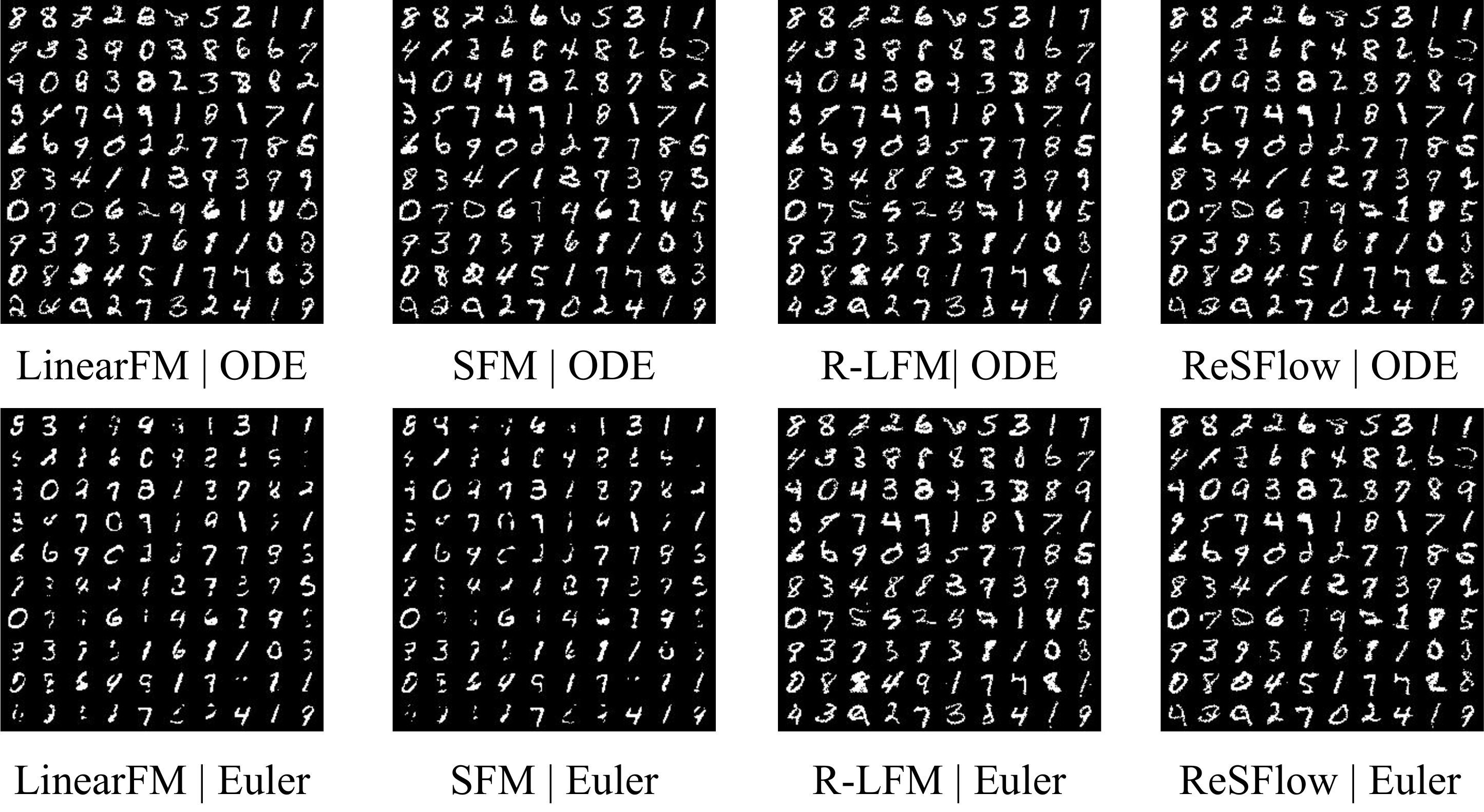}
  }
  \caption{
  \textbf{Generated samples on binarized MNIST.}
  We compare LinearFM, SFM, R-LFM, and ReSFlow under ODE and one-step Euler
  sampling. Non-rectified methods degrade under one-step sampling, whereas
  rectified methods preserve digit structure more effectively. ReSFlow gives
  high-quality one-step samples by rectifying trajectories under the
  Fisher--Rao geometry.
  }
  \label{fig:app_bmnist_samples}
\end{figure*}

\begin{table*}[htbp]
\centering
\caption{
\textbf{Generated Text8 samples.}
We show representative character-level samples generated by each model under
ODE and one-step Euler sampling. Underscores denote spaces in the decoded text.
}
\label{tab:app_text8_samples}
\scriptsize
\begin{tabular}{llp{0.76\textwidth}}
\toprule
Model & Sampler & Generated text \\
\midrule

LinearFM & ODE &
\path|calls_vincenti_or_their_burnt_mochery_a_distinct_writing_artiast_is_also_known_as_the_yodioi_cascoliage_it_is_a_crutter_with_bianaple_it_is_known_as_a_while_green_hole_and_even_when_using_plhuhu_snow_and_wouldistay_is_one_of_several_ancient_nuts_of_early_w| \\

LinearFM & Euler &
\path|_______________________________________________________________e_______________________________________________________________________________________________e___e_______________________e____________________________________________________________________| \\

\midrule

SFM & ODE &
\path|tters_improvements_a_primary_work_to_most_word_genetic_cinematics_who_also_chose_a_word_for_genetic_generation_concurrence_with_a_pt_typical_sporting_the_world_is_often_the_most_for_compatisans_works_such_as_sorcery_and_the_dark_chasen_movements_the_player| \\

SFM & Euler &
\path|__izg_i_ee_e_e__fe______________t_______g_t__e__a_ote_____e____e_________e___d__a__i______t__q__t__tebe_s______e____ee___s__d_i_oe_i__s___ee__ene___o_e__tei__eoe___e_n_____e______a_ea__a__yer_ie__ei_teein__s_n__________e____o___iu___n__eo___f__ag______ho__| \\

\midrule

R-LFM & ODE &
\path|_ners_vincento_wrata_orebry_n_mother_held_at_introducing_artiary_tale_two_beek_in_the_change_casckling_and_is_spyr_operating_fiama_of_rownspart_s_symbols_large_numbers_and_even_opecause_of_his_personalny_would_state_there_in_several_day_baptists_of_eyric_w| \\

R-LFM & Euler &
\path|ieto__uaibe_and_the_early_ver_ion_an__a_aeementary_tounterlaei_eoond_the_forst_medonies_ere_ted_from_greatly_attended_from_the_naen_so_condonati_n_tiliter__head_tittre_ont_weec_sters_tore_tad_speecres_and_which_table_to_remume_expending_that_the_leies_anni| \\

\midrule

ReSFlow & ODE &
\path|another_role_ranged_in_from_one_eight_two_one_le_conseliation_of_louis_nweed_d_law_in_trlits_quits_teberry_in_children_muslid_born_herman_lempenn_one_eight_five_four_whischeuse_leaders_of_several_differences_of_privileges_sion_new_mans_to_the_war_knew_his_| \\

ReSFlow & Euler &
\path|into_swy_no_and_they_are_also_itn_ha_is_also_ttaught_una__fee_be_onf_the_popular_oo_is_lireaand_for_exampard_altended_fr_a_aerenaen_so_land_external_links_the_d_rib_ol_blt_deer_opern_type_tad_seelanes_and_to_cannabse__k_vemuns_i__endite_fere_the_level_ard_| \\

\bottomrule
\end{tabular}
\end{table*}

\begin{table*}[htbp]
\centering
\caption{
\textbf{Generated promoter sequence samples.}
Each generated promoter has length $1024$. For readability, we show only the
first $120$ bases of one selected sample for each model and sampler. Spaces are
inserted every $40$ bases only for line breaking and are not part of the DNA
sequence. Quantitative metrics are computed using the full-length sequences.
}
\label{tab:app_promoter_samples}
\scriptsize
\begin{tabular}{llp{0.72\textwidth}}
\toprule
Model & Sampler & First 120 bases of generated promoter sequence \\
\midrule

LinearFM & ODE &
\texttt{AAGAGCTCGGCCGCGGGCTCGGGTGCCCTCCCCCCGGCCC}
\texttt{ CAAAGCCCCGGGGATTTCCAAGGCCGGCAAGGCCGGGGGG}
\texttt{ CAGGGGGGAAGAGGGGCGGCACTGGGAGGGGGGATGCGAA} \\

\addlinespace[0.6em]

LinearFM & Euler &
\texttt{CTCGCCATCGGGCAGGGGGCTATTTTGCATTCTAACGCCA}
\texttt{ GGCAGTAGTCGTTTGGTCGCATGCAAACTCAGATGCGGGG}
\texttt{ GATTGGGTGGCATTCGGTCTCGGCGTTAGACCGAAACCAG} \\

\midrule

SFM & ODE &
\texttt{AAGAGCTAAGCCGCGGGCTCAGGTGCCCTCCCCCCAGCCC}
\texttt{ CAAAGCCCCGGGGAGCTCCAACGCCACCACCCCCAGGCGG}
\texttt{ CAGACCGGAACACGGGCGCCACTGCCAGGAAGGATGCGAA} \\

\addlinespace[0.6em]

SFM & Euler &
\texttt{AGGGGCCAAGCCCCGGCGGCGCGTGCGGGGCCGTCAGCTT}
\texttt{ CAGCCCCCCAGCGGATCCCGTAGACGCCAGCCCCAGGCGG}
\texttt{ CACTCCGGGACGCGGGCGCCGTCCCGGAGAAGGGCGCGGA} \\

\midrule

R-LFM & ODE &
\texttt{AAGAACTCCGCCGCGGGCTCGGGTGCCCTCCCCCCGGCCC}
\texttt{ CAAAGCCGCGGGGAACGCCACCGCCACCACAGCCGGCCGG}
\texttt{ CAGAAGGGAACAGGGGCGGCCCTAGCAGGCGGGAGGCGAA} \\

\addlinespace[0.6em]

R-LFM & Euler &
\texttt{AAGAACTCGGCCGCGGGCTCAGGTGCACTACCCCCGGCCC}
\texttt{ CAAAGCCGCGGGAAACGCCACCGCCAGCAAAGCCGGGGGG}
\texttt{ CAGAAGGGAACAGGGGCGGCCCTAGGAGGGGGGAGGCGAA} \\

\midrule

ReSFlow & ODE &
\texttt{AAGAGCTAAGCCGCGGGCTCAGGTGCCCTCCCCCCAGCCC}
\texttt{ CACAGCCCCGGGGAGCTCCAACGCCACCACCCCCAGGCGG}
\texttt{ CAGCCCGGAACACGGGCGCCCCTGCCAGGAAGGATGCGAA} \\

\addlinespace[0.6em]

ReSFlow & Euler &
\texttt{AAGACCTAAGCCGCGGGCTCAGGTGCCCTCCCCCCAGCCC}
\texttt{ CACAGCCCCGGGGAGCTCCAACGCCACCACCCCCAGGCGG}
\texttt{ CAGCCCGGAACACGGGCGCCCCTGCCAGGAAGGATGCGAA} \\

\bottomrule
\end{tabular}
\end{table*}

We provide additional qualitative samples for binarized MNIST, Text8, and
conditional promoter sequence design in Fig.~\ref{fig:app_bmnist_samples},
Table~\ref{tab:app_text8_samples}, and Table~\ref{tab:app_promoter_samples},
respectively. These examples complement the quantitative results by comparing
ODE sampling with one-step Euler sampling.

For binarized MNIST, Fig.~\ref{fig:app_bmnist_samples} shows samples generated
by LinearFM, SFM, R-LFM, and ReSFlow. Non-rectified methods produce reasonable
ODE samples but degrade noticeably under one-step sampling, while rectified
methods better preserve digit structure. ReSFlow gives clear one-step samples,
supporting the benefit of performing rectification under the Fisher--Rao
geometry.

For Text8, Table~\ref{tab:app_text8_samples} reports representative generated
character sequences from each model and sampler. The one-step samples highlight
whether the learned flow is sufficiently straight for fast generation. ReSFlow
produces more structured character-level fragments than the non-rectified
baselines, consistent with the BPC results in Table~\ref{tab:text8_results}.

For promoter sequence design, Table~\ref{tab:app_promoter_samples} shows the
first $120$ bases of generated $1024$-bp promoter sequences. Only prefixes are
shown for readability, while all quantitative metrics are computed using the
full-length sequences. ReSFlow maintains stable nucleotide patterns under
one-step sampling, consistent with its SP-MSE performance in
Table~\ref{tab:promoter_results}.
\section{Limitations and Broader Impact}
While our manifold-based formulation successfully captures the intrinsic geometry of discrete data, it relies on model assumptions that may be violated in complex real-world scenarios. For example, the Fisher--Rao geometry inherently assumes semantic equidistance—treating all categorical states symmetrically without prior semantic bias. In reality, discrete tokens (e.g., words or subwords) possess rich, hierarchical semantic proximity (e.g., ``cat'' is conceptually closer to ``dog'' than to ``apple''). While our method prevents the geometric distortion seen in continuous embedding methods (like ReDi), it does not currently incorporate pre-existing semantic priors. The assumption that pure statistical distances are sufficient may limit performance on tasks where capturing pre-trained semantic correlations is crucial.

ReSFlow is a foundational methodological contribution to generative modeling on the probability simplex, and is not tied to any particular deployment. On the positive side, it offers a geometry-aware and sampling-efficient framework that can benefit discrete generative tasks in language, biology (e.g., DNA/protein sequences), and other scientific domains where categorical data are central, while its straightening property reduces inference cost and the associated energy footprint. On the negative side, ReSFlow shares the generic risks of all improved generative models: higher-quality and faster discrete generation could in principle be misused for disinformation or other malicious content. 
\newpage
\input{checklist.tex}

\end{document}

%% file: theories.tex
\section{Spherical Rectified-Flow Proofs}
\label{app:spherical-proofs}

For a random variable \(X\), \(\Law(X)\) denotes its probability distribution.
This appendix follows the structure of the rectified-flow analysis
\cite{liu2023flow}, but replaces the Euclidean linear interpolation by
spherical geodesic interpolation and makes explicit the geometric steps needed
on \(M=\mathbb{S}^{K-1}\).  The first subsection collects the sphere geometry
and conditional-expectation identities used below.  The second subsection proves
the spherical marginal-preservation property.  The third subsection shows that
spherical rectification reduces convex nondecreasing functions of the geodesic
transport distance.  The final subsection states the straightening effect and
the reflow \(O(1/N_{\text{reflow}})\) rate in the same spirit as the original rectified-flow
paper, with the corresponding spherical notions of straightness and energy.

\subsection{Geometric and Probabilistic Preliminaries}

\paragraph{Sphere Geometry.}

Let \(M=\mathbb{S}^{K-1}=\{x\in\mathbb{R}^{K}:\|x\|_2=1\}\).  We use the
Riemannian metric inherited from \(\mathbb{R}^{K}\).  Thus the tangent space and
inner product are
\[
    T_xM=\{u\in\mathbb{R}^{K}:u^\top x=0\}, \qquad
    \langle u,v\rangle_x=u^\top v, \qquad \|u\|_x=\sqrt{u^\top u}.
\]
The geodesic distance is
\[
    d_{\mathbb{S}}(x,y)=\arccos(x^\top y)\in[0,\pi].
\]
The cut locus of \(x\) is the antipodal point \(-x\).  When \(y\neq -x\), the
logarithmic map is single-valued.  Writing
\(\theta=d_{\mathbb{S}}(x,y)\), for \(0<\theta<\pi\),
\begin{equation}
    \log_x(y)=\frac{\theta}{\sin\theta}(y-\cos\theta\,x)\in T_xM,
    \qquad \|\log_x(y)\|_x=\theta .
    \label{eq:app-log-formula}
\end{equation}
For \(y=x\), we set \(\log_x(x)=0\).  The exponential map is
\[
    \exp_x(u)=
    \cos(\|u\|_x)x+\sin(\|u\|_x)\frac{u}{\|u\|_x},
    \qquad u\in T_xM,\ u\neq 0,
\]
with \(\exp_x(0)=x\).

\begin{lemma}[Constant-speed spherical geodesic]
\label{lem:app-geodesic}
Let \(x,y\in M\) with \(y\neq -x\).  The curve
\[
    \gamma_t=\exp_x(t\log_x(y)),\qquad t\in[0,1],
\]
is the minimizing geodesic from \(x\) to \(y\).  It has the slerp form
\begin{equation}
    \gamma_t
    =
    \frac{\sin((1-t)\theta)}{\sin\theta}x
    +
    \frac{\sin(t\theta)}{\sin\theta}y,
    \qquad \theta=d_{\mathbb{S}}(x,y),
    \label{eq:app-slerp}
\end{equation}
and it has constant speed
\[
    \|\dot\gamma_t\|_{\gamma_t}=\theta=d_{\mathbb{S}}(x,y).
\]
\end{lemma}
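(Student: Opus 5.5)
We handle the degenerate case \(y=x\) first and then work in the plane containing \(x\) and \(y\). If \(y=x\), then \(\theta=0\), \(\log_x(x)=0\) and \(\gamma_t\equiv x\). This is the trivial geodesic with speed \(0=\theta\), and \eqref{eq:app-slerp} is read as its limit \(\theta\to 0\), namely \(x\).

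Now take \(0<\theta<\pi\), and set \(u=\log_x(y)=\frac{\theta}{\sin\theta}(y-\cos\theta\,x)\) from \eqref{eq:app-log-formula}. The unit vector \(e=u/\|u\|_x=(y-\cos\theta\,x)/\sin\theta\) is orthogonal to \(x\), and \(\|u\|_x=\theta\). The check is direct: \(\|y-\cos\theta\,x\|^2=1-2\cos^2\theta+\cos^2\theta=\sin^2\theta\).

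\textbf{Slerp form.} Plugging into the exponential map gives
\(\gamma_t=\exp_x(tu)=\cos(t\theta)x+\sin(t\theta)e\). Substituting \(e\) gives coefficients \(\frac{\sin(t\theta)}{\sin\theta}\) on \(y\) and \(\cos(t\theta)-\frac{\sin(t\theta)\cos\theta}{\sin\theta}\) on \(x\). The subtraction formula \(\sin((1-t)\theta)=\sin\theta\cos(t\theta)-\cos\theta\sin(t\theta)\) turns the \(x\)-coefficient into \(\frac{\sin((1-t)\theta)}{\sin\theta}\), which is exactly \eqref{eq:app-slerp}. Evaluating at \(t=0\) and \(t=1\) gives \(\gamma_0=x\) and \(\gamma_1=y\).

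\textbf{Speed and geodesic property.} Because \(\{x,e\}\) is orthonormal, differentiating gives \(\dot\gamma_t=\theta(-\sin(t\theta)x+\cos(t\theta)e)\), so \(\|\dot\gamma_t\|=\theta\) for all \(t\). Differentiating again gives \(\ddot\gamma_t=-\theta^2\gamma_t\). This vector is normal to the sphere at \(\gamma_t\), so the tangential (covariant) acceleration vanishes and \(\gamma\) is a geodesic of the induced metric.

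\textbf{Minimality.} The length of \(\gamma\) is \(\int_0^1\|\dot\gamma_t\|\,dt=\theta\). It remains to show no curve from \(x\) to \(y\) on the sphere is shorter; this is the only step needing an inequality. For a piecewise \(C^1\) curve \(c\) on \(M\), set \(\phi(s)=\arccos(x^\top c(s))\). Decompose \(\dot c\) into its component along the great-circle direction away from \(x\) and a perpendicular component. A short computation gives \(|\dot\phi|\le\|\dot c\|\) wherever \(\phi\in(0,\pi)\). Integrating from \(x\) to \(y\) yields \(\mathrm{length}(c)\ge\phi(1)-\phi(0)=\theta\). Hence \(\gamma\) is minimizing and \(d_{\mathbb{S}}(x,y)=\theta\) is consistent with the arccos formula.

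The main care point is the endpoint behavior of \(\phi\). At \(\phi=0\), \(\arccos\) is not differentiable, so I would either apply the argument on subintervals where \(\phi>0\) or use the distance-function gradient bound \(\|\nabla d_{\mathbb{S}}(x,\cdot)\|\le 1\) away from \(\{x,-x\}\). The condition \(y\neq -x\) ensures \(\log_x y\) is single-valued, so the minimizer is unique.
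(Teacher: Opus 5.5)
Your computation of the slerp form and of the constant speed is the paper's own. Both proofs introduce the unit vector $e=(y-\cos\theta\,x)/\sin\theta\perp x$, expand $\exp_x(t\theta e)=\cos(t\theta)x+\sin(t\theta)e$, substitute for $e$, and differentiate to get $\|\dot\gamma_t\|=\theta$.

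Where you differ is the geodesic and minimality claim. The paper settles it in one sentence: $\gamma$ stays in the plane of $x$ and $y$ and traces the shorter great-circle arc. This tacitly relies on the standard facts that great circles are geodesics and that the shorter arc minimizes length. You instead check the geodesic equation directly ($\ddot\gamma_t=-\theta^2\gamma_t$ is normal to $M$). You then prove minimality via the bound $|\dot\phi|\le\|\dot c\|$ for $\phi=\arccos(x^\top c)$, which is Cauchy--Schwarz against the unit tangent $-(x-\cos\phi\,c)/\sin\phi$.

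Your route is more self-contained, and it supplies something the paper actually needs. The paper introduces $d_{\mathbb{S}}$ as $\arccos(x^\top y)$, yet the proof of Lemma~\ref{lem:app-length-energy} uses that it lower-bounds the length of every curve; that is exactly your inequality.

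Two small points if you write it out in full. First, your subinterval argument should also cover curves that pass through $-x$ (where $\phi=\pi$). This is harmless, since such a curve already has length at least $\pi>\theta$. Second, your closing uniqueness remark implicitly uses that minimizers are geodesics of the form $t\mapsto\exp_x(tv)$. The equality case of your own bound gives uniqueness directly: the perpendicular component of $\dot c$ must vanish and $\phi$ must be monotone. Neither the paper's proof nor the later uses of the lemma depend on uniqueness, however.
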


\begin{proof}
For \(0<\theta<\pi\), define
\[
    e=\frac{y-\cos\theta\,x}{\sin\theta}.
\]
Then \(e^\top x=0\), \(\|e\|=1\), and \(y=\cos\theta\,x+\sin\theta\,e\).  By
\Cref{eq:app-log-formula}, \(\log_x(y)=\theta e\), so
\[
    \gamma_t=\exp_x(t\theta e)=\cos(t\theta)x+\sin(t\theta)e.
\]
Substituting the definition of \(e\) gives \Cref{eq:app-slerp}.  Differentiating
gives \(\dot\gamma_t=\theta[-\sin(t\theta)x+\cos(t\theta)e]\), whose norm is
\(\theta\).  The curve stays in the two-dimensional plane spanned by \(x\) and
\(y\) and traces the shorter great-circle arc, hence it is the minimizing
geodesic.  The case \(x=y\) is the constant curve.
\end{proof}

\begin{lemma}[Path length and energy bounds]
\label{lem:app-length-energy}
If \(z_t\) is any absolutely continuous curve on \(M\), then
\begin{equation}
    d_{\mathbb{S}}(z_0,z_1)
    \le
    \int_0^1 \|\dot z_t\|_{z_t}\,dt
    \label{eq:app-length}
\end{equation}
and
\begin{equation}
    d_{\mathbb{S}}(z_0,z_1)^2
    \le
    \int_0^1 \|\dot z_t\|_{z_t}^2\,dt .
    \label{eq:app-energy}
\end{equation}
\end{lemma}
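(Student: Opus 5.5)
The plan is to reduce both inequalities to the fact that the geodesic distance is the infimum of lengths of admissible curves, and then pass from length to energy with Cauchy--Schwarz.

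\emph{First inequality.} For (\ref{eq:app-length}), recall that $M=\mathbb{S}^{K-1}$ is a complete connected Riemannian manifold. Its Riemannian distance is the infimum of $\int_0^1\|\dot z_t\|_{z_t}\,dt$ over absolutely continuous curves joining the endpoints. By Lemma~\ref{lem:app-geodesic}, this infimum equals $\arccos(z_0^\top z_1)=d_{\mathbb{S}}(z_0,z_1)$.

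To keep the argument self-contained and avoid quoting the identification of the Riemannian distance, I would instead give a direct argument.

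\begin{itemize}
\item Set $\phi(t)=d_{\mathbb{S}}(z_0,z_t)=\arccos(z_0^\top z_t)$.
\item Wherever $z_t\neq \pm z_0$, write $\dot z_t=a\,e+w$. Here $e$ is the unit tangent at $z_t$ pointing away from $z_0$ along the great circle, and $w\perp e$.
\item Differentiating the arccos gives $\dot\phi(t)=a$, so $|\dot\phi(t)|\le\|\dot z_t\|$.
\item The function $\phi$ is absolutely continuous on intervals avoiding the singular set, since $\arccos$ composed with an AC curve is locally AC away from $\pm1$. It is also $1$-Lipschitz with respect to chord-to-arc comparisons.
\item Integrate: $d_{\mathbb{S}}(z_0,z_1)=\phi(1)-\phi(0)\le\int_0^1|\dot\phi|\le\int_0^1\|\dot z_t\|\,dt$.
\end{itemize}

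\emph{Main obstacle.} The hard part is the times where $z_t=\pm z_0$, because there $\arccos$ is not differentiable. I would handle this by working with the Euclidean chord instead. Note that $\phi(t)=2\arcsin(\|z_t-z_0\|/2)$. The chord length $t\mapsto\|z_t-z_0\|$ is AC with derivative bounded by $\|\dot z_t\|$ a.e. Then show that $\phi$ is AC with $|\dot\phi|\le\|\dot z_t\|$ a.e. This works because the arc-length derivative only involves the tangential component, and on the null or isolated bad set the monotone-function argument (Lebesgue) still applies.

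An alternative is to approximate the curve by piecewise geodesic polygons. For these the inequality is just the triangle inequality for $d_{\mathbb{S}}$, and lower semicontinuity of length finishes the argument. I would use the chord-based argument as the primary route.

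\emph{Second inequality.} Inequality (\ref{eq:app-energy}) follows immediately from (\ref{eq:app-length}) by Cauchy--Schwarz (or Jensen) on $[0,1]$:
\[
d_{\mathbb{S}}(z_0,z_1)^2\le\Bigl(\int_0^1\|\dot z_t\|\,dt\Bigr)^2\le\int_0^1\|\dot z_t\|^2\,dt.
\]
If the energy is infinite, the statement is trivial.

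\emph{Equality cases.} Finally, note that equality holds for the constant-speed geodesic of Lemma~\ref{lem:app-geodesic}, where $\|\dot\gamma_t\|=\theta$ for all $t$. This equality is what the later cost-contraction argument will use.
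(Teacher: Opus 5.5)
Your opening observation is the paper's proof. The paper quotes that $d_{\mathbb{S}}$ is the minimum length over curves joining the endpoints, then squares and applies Cauchy--Schwarz, exactly as you do for \eqref{eq:app-energy}. Your primary ``self-contained'' route is genuinely different and worth having: it proves the minimizing property that the paper (and the minimality claim in Lemma~\ref{lem:app-geodesic}) treats as standard, and it handles absolutely continuous curves directly. The regular-point computation $\dot\phi=e^\top\dot z_t$ for $z_t\neq\pm z_0$ is correct.

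The singular-set step, however, does not go through as written.
\begin{itemize}
\item The bad set always contains $t=0$, where $\arccos$ is evaluated at $1$. It need not be null or isolated either, since the curve may rest at $\pm z_0$ on an interval. So local absolute continuity off this set does not by itself give $\phi(1)-\phi(0)=\int_0^1\dot\phi\,dt$.
\item The chord detour does not give the needed bound. From $\phi=2\arcsin(r/2)$ with $r_t=\|z_t-z_0\|$, the chain rule with $|\dot r|\le\|\dot z_t\|$ only yields $|\dot\phi|\le\|\dot z_t\|/\cos(\phi/2)$.
\item Because $2\arcsin(r/2)$ is not Lipschitz near $r=2$, absolute continuity of $r$ also does not transfer to $\phi$ near the antipode.
\end{itemize}

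The fix is to make precise the Lipschitz remark you make in passing. The triangle inequality for the angle metric is an elementary fact that does not involve curve lengths, and it gives
\[
|\phi(t)-\phi(s)|\le d_{\mathbb{S}}(z_t,z_s)=2\arcsin(\|z_t-z_s\|/2)\le\tfrac{\pi}{2}\|z_t-z_s\|.
\]
Hence $\phi$ is absolutely continuous on all of $[0,1]$. At a.e.\ $t\in(0,1)$ both $\phi$ and $z$ are differentiable. If $z_t\neq\pm z_0$, your tangential computation gives $|\dot\phi(t)|\le\|\dot z_t\|$. If $z_t=\pm z_0$, then $\phi(t)\in\{0,\pi\}$ is a global extremum of $\phi$, so $\dot\phi(t)=0$. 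Integrating gives \eqref{eq:app-length}, and no chord argument is needed.

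Separately, your polygon fallback uses lower semicontinuity in the wrong direction. It yields $L(z)\le\liminf_n L(P_n)$, whereas you need the inscribed lengths bounded above by $L(z)$. The working version chains the triangle inequality, the bound $2\arcsin(x/2)\le(1+\varepsilon)x$ for small chords, and $\sum_i\|z_{t_{i+1}}-z_{t_i}\|\le\int_0^1\|\dot z_t\|\,dt$, then lets $\varepsilon\to0$.
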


\begin{proof}
The geodesic distance is the minimum length over all curves connecting the two
endpoints, so it is no larger than the length of \(z_t\), proving
\Cref{eq:app-length}.  Squaring \Cref{eq:app-length} and using
Cauchy--Schwarz,
\[
    \left(\int_0^1 \|\dot z_t\|_{z_t}\,dt\right)^2
    \le
    \left(\int_0^1 1^2dt\right)
    \left(\int_0^1 \|\dot z_t\|_{z_t}^2dt\right),
\]
which gives \Cref{eq:app-energy}.
\end{proof}

\paragraph{Probabilistic Identities.}

All conditional expectations below are taken after embedding tangent vectors in
\(\mathbb{R}^{K}\).  This creates no ambiguity because each tangent space is a
linear subspace of \(\mathbb{R}^{K}\).

\begin{lemma}[Conditional tangent vectors]
\label{lem:app-conditional-tangent}
Let \(X\) be an \(M\)-valued random variable and let \(U\) be a random vector
with \(U\in T_XM\) almost surely and \(\mathbb{E}[\|U\|^2]<\infty\).  Then
\[
    m(X):=\mathbb{E}[U\mid X]\in T_XM \qquad \text{a.s.}
\]
Moreover,
\[
    \mathbb{E}[\|U\|^2]
    =
    \mathbb{E}[\|m(X)\|^2]
    +
    \mathbb{E}[\|U-m(X)\|^2] .
\]
\end{lemma}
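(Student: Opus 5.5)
The plan is to prove the two claims of \Cref{lem:app-conditional-tangent} separately, using only linearity of conditional expectation and the fact that membership in $T_XM$ is a linear constraint that depends measurably on $X$.

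\textbf{Tangency.} Since $U\in T_XM$ almost surely, we have $X^\top U=0$ a.s. The integrability assumption $\mathbb{E}[\|U\|^2]<\infty$, together with $\|X\|=1$, makes $X^\top U$ integrable. Because $X$ is $\sigma(X)$-measurable and bounded, we may pull it out of the conditional expectation:
\[
X^\top m(X)=X^\top\mathbb{E}[U\mid X]=\mathbb{E}[X^\top U\mid X]=0\quad\text{a.s.}
\]
This says $m(X)\in T_XM$ almost surely. The only point needing care is that $m$ is defined up to null sets. I will therefore fix a measurable version of $m$ and note that the identity above holds for that version almost surely; the statement asks for nothing more.

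\textbf{Energy decomposition.} This is the standard $L^2$ Pythagorean identity. I will write $U=m(X)+(U-m(X))$ and expand
\[
\mathbb{E}\|U\|^2=\mathbb{E}\|m(X)\|^2+\mathbb{E}\|U-m(X)\|^2+2\,\mathbb{E}\bigl[m(X)^\top(U-m(X))\bigr].
\]
To kill the cross term, I condition on $X$ and use the tower property:
\[
\mathbb{E}\bigl[m(X)^\top(U-m(X))\bigr]=\mathbb{E}\bigl[m(X)^\top\bigl(\mathbb{E}[U\mid X]-m(X)\bigr)\bigr]=0 .
\]

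\textbf{Main obstacle.} The only step needing real justification is integrability, which licenses pulling $m(X)$ out of the conditional expectation. By conditional Jensen, $\|m(X)\|^2\le\mathbb{E}[\|U\|^2\mid X]$, so $m(X)\in L^2$. Cauchy--Schwarz then shows that $m(X)^\top(U-m(X))$ is integrable. With that in hand, the pull-out property is legitimate and the proof is complete.
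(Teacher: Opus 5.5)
Your proposal is correct and follows essentially the same route as the paper. Tangency comes from pulling the bounded, $\sigma(X)$-measurable $X$ out of $\mathbb{E}[X^\top U\mid X]=0$, and the energy identity comes from expanding the square and killing the cross term via $\mathbb{E}[U-m(X)\mid X]=0$. Your added integrability checks (conditional Jensen giving $m(X)\in L^2$, and Cauchy--Schwarz for the cross term) are what the paper leaves implicit, so they tighten the argument without changing it.
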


\begin{proof}
Since \(U\in T_XM\), we have \(X^\top U=0\) a.s.  Conditioning on \(X\),
\[
    X^\top m(X)=\mathbb{E}[X^\top U\mid X]=0,
\]
so \(m(X)\in T_XM\).  For the decomposition, expand the square:
\[
    \|U\|^2
    =
    \|m(X)\|^2+\|U-m(X)\|^2
    +2\langle m(X),U-m(X)\rangle .
\]
The expectation of the cross term is zero because
\(\mathbb{E}[U-m(X)\mid X]=0\).
\end{proof}

\begin{lemma}[Convexity of radial costs]
\label{lem:app-radial-convex}
If \(\phi:[0,\infty)\to\mathbb{R}\) is convex and nondecreasing, then
\[
    F(w)=\phi(\|w\|)
\]
is convex on every Euclidean vector space.
\end{lemma}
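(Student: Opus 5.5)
We prove convexity of $F(w)=\phi(\|w\|)$ directly from the definition. Fix $w_1,w_2$ in the Euclidean vector space and $\lambda\in[0,1]$. The argument uses two monotone/convex steps in sequence.

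First, the norm is convex by the triangle inequality and homogeneity:
\[
\|\lambda w_1+(1-\lambda)w_2\|\le \lambda\|w_1\|+(1-\lambda)\|w_2\|.
\]
Both sides are nonnegative, so they lie in the domain $[0,\infty)$ of $\phi$. Since $\phi$ is nondecreasing, applying it preserves the inequality:
\[
\phi\bigl(\|\lambda w_1+(1-\lambda)w_2\|\bigr)\le \phi\bigl(\lambda\|w_1\|+(1-\lambda)\|w_2\|\bigr).
\]

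Second, we use the convexity of $\phi$ on $[0,\infty)$ at the points $\|w_1\|,\|w_2\|$. This bounds the right-hand side by $\lambda\phi(\|w_1\|)+(1-\lambda)\phi(\|w_2\|)=\lambda F(w_1)+(1-\lambda)F(w_2)$. Chaining the two inequalities gives $F(\lambda w_1+(1-\lambda)w_2)\le\lambda F(w_1)+(1-\lambda)F(w_2)$, which is convexity.

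There is no real obstacle here. The only point requiring care is that monotonicity of $\phi$ is genuinely needed to pass from the norm's triangle inequality to the composed inequality; without it, composing a convex $\phi$ with the convex norm could fail to be convex. Finiteness is not an issue, since $\phi$ is real-valued. For later use in the cost-contraction argument, we note the consequence we will need: Jensen's inequality for conditional expectations, $F(\mathbb{E}[U\mid X])\le\mathbb{E}[F(U)\mid X]$. This follows from the lemma, since $F$ is convex and finite on $\mathbb{R}^K$, hence continuous.
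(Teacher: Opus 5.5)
Your proposal is correct and is the same argument as the paper's: the triangle inequality gives convexity of the norm, monotonicity of $\phi$ carries that inequality through $\phi$, and convexity of $\phi$ finishes the chain. Your side remarks also hold up: monotonicity is genuinely needed, for instance $\phi(r)=(r-1)^2$ gives a non-convex $F$ in one dimension, and a finite convex function on $\mathbb{R}^K$ is continuous, which is what the conditional Jensen step in the cost-contraction theorem uses.
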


\begin{proof}
For \(a,b\) in a Euclidean vector space and \(\lambda\in[0,1]\), the triangle
inequality gives
\[
    \|\lambda a+(1-\lambda)b\|
    \le
    \lambda\|a\|+(1-\lambda)\|b\|.
\]
Because \(\phi\) is nondecreasing and convex,
\[
    \phi(\|\lambda a+(1-\lambda)b\|)
    \le
    \phi(\lambda\|a\|+(1-\lambda)\|b\|)
    \le
    \lambda\phi(\|a\|)+(1-\lambda)\phi(\|b\|).
\]
\end{proof}

\subsection{Marginal Preservation}

\begin{definition}[Expected spherical velocity]
\label{def:app-expected-spherical-velocity}
For a path-wise continuously differentiable random process
\(X=\{X_t:t\in[0,1]\}\) on \(M\), its expected spherical velocity \(v^X\) is
defined as
\[
    v^X(x,t)=\mathbb{E}[\dot X_t\mid X_t=x]\in T_xM,
    \qquad x\in \operatorname{supp}(X_t).
\]
For \(x\notin\operatorname{supp}(X_t)\), we set \(v^X(x,t)\) arbitrarily as a
measurable tangent vector.  The velocity is tangent by
\Cref{lem:app-conditional-tangent}.
\end{definition}

\begin{definition}[Spherical rectifiability]
\label{def:app-spherical-rectifiable}
We call \(X=\{X_t:t\in[0,1]\}\) \emph{rectifiable} if the velocity field
\(v^X:M\times[0,1]\to TM\) is continuous in \((x,t)\) and locally Lipschitz
in \(x\), uniformly on compact time intervals.  Under this assumption, the
characteristic ODE
\begin{equation}
    \dot Z_t = v^X(Z_t,t),\qquad Z_0=X_0,
    \label{eq:app-spherical-ode}
\end{equation}
admits, by the Cauchy--Lipschitz theorem on the compact manifold \(M\)
\cite[Thm.~9.12]{lee2013smooth}, a unique global flow
\(\Phi:[0,1]\times M\to M\),\, \((t,x)\mapsto \Phi_t(x)\), which is a
\(C^1\) diffeomorphism for each \(t\).  We call
\(Z_t:=\Phi_t(X_0)\) the \emph{spherical rectified flow} induced by \(X\).
\end{definition}

\begin{theorem}[Spherical marginal preservation]
\label{thm:app-marginal-preservation}
Assume \(X=\{X_t:t\in[0,1]\}\) is rectifiable and \(Z\) is its spherical
rectified flow.  Then
\[
    \Law(Z_t)=\Law(X_t),\qquad \forall t\in[0,1].
\]
\end{theorem}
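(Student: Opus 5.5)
The plan is to follow the weak-form argument of \cite{liu2023flow}, transplanted to the compact manifold $M=\mathbb{S}^{K-1}$. Both $\{\Law(X_t)\}$ and $\{\Law(Z_t)\}$ should solve, in the distributional sense on $M$, the continuity equation
\[
\partial_t\rho_t+\operatorname{div}_M\bigl(v^X(\cdot,t)\,\rho_t\bigr)=0,\qquad \rho_0=\Law(X_0).
\]
Uniqueness of solutions to this linear equation, given a continuous velocity field that is locally Lipschitz in space, then forces the two curves of measures to agree for every $t\in[0,1]$.

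\textbf{Step 1: the interpolation process.} Fix a test function $h\in C^\infty(M)$. Since $X$ is pathwise $C^1$ with $\dot X_t\in T_{X_t}M$, the chain rule gives
\[
\frac{d}{dt}\mathbb{E}[h(X_t)]=\mathbb{E}\bigl[\langle\nabla_M h(X_t),\dot X_t\rangle\bigr].
\]
Here $\nabla_M h$ is the tangential projection of the ambient gradient, so the pairing can be computed in $\mathbb{R}^K$. Differentiation under the expectation is justified by dominated convergence. We need $\mathbb{E}\sup_t\|\dot X_t\|<\infty$, which holds for slerp interpolants because $\|\dot X_t\|=d_{\mathbb{S}}(X_0,X_1)\le\pi$; in general I would assume it as part of the rectifiability hypothesis. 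Conditioning on $X_t$ and applying the tower property together with \Cref{def:app-expected-spherical-velocity} yields
\[
\frac{d}{dt}\mathbb{E}[h(X_t)]=\mathbb{E}\bigl[\langle\nabla_M h(X_t),v^X(X_t,t)\rangle\bigr].
\]
\Cref{lem:app-conditional-tangent} guarantees that $v^X$ is tangent, so nothing is lost to normal components. Integrating in $t$ gives the weak form of the continuity equation for $\rho_t=\Law(X_t)$.

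\textbf{Step 2: the flow.} The same computation applies to $Z_t=\Phi_t(X_0)$, now with $\dot Z_t=v^X(Z_t,t)$ exactly, and with no conditioning needed. Hence $\Law(Z_t)$ solves the same weak equation with the same initial datum $\Law(X_0)$.

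\textbf{Step 3: uniqueness (the main obstacle).} For uniqueness I would argue by duality. Fix $T\le 1$ and $g\in C^\infty(M)$, and set $\varphi(s,x)=g(\Phi_{s\to T}(x))$ using the two-time flow of $v^X$. This $\varphi$ solves the backward transport equation
\[
\partial_s\varphi+\langle\nabla_M\varphi,v^X\rangle=0,\qquad \varphi(T,\cdot)=g.
\]
Testing the weak equation for the difference $\rho_t-\Law(Z_t)$ against $\varphi$ then gives $\int g\,d\rho_T=\int g\,d\Law(Z_T)$.

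The difficulty is that $v^X$ is only locally Lipschitz and continuous, so $\varphi$ is merely $C^0$ and Lipschitz, not smooth. I would first handle this by mollifying $g$, and if necessary $v^X$, on the compact manifold. The errors are controlled using compactness of $M$, which upgrades local to global Lipschitz bounds, and Grönwall estimates on the flow.

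As an alternative that avoids the regularity issue, I would cite the superposition principle or Ambrosio's uniqueness theorem for continuity equations with Lipschitz fields on compact Riemannian manifolds, in the spirit of \cite{lee2013smooth}. Either route concludes $\Law(Z_t)=\Law(X_t)$ for all $t\in[0,1]$.
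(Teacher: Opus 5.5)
Your proposal matches the paper's proof. Both arguments show that $\Law(X_t)$ and $\Law(Z_t)=(\Phi_t)_\#\Law(X_0)$ satisfy the same weak continuity equation, and both get uniqueness by testing against the test function transported backward along the two-time flow: the paper uses $h_s=h\circ\Phi_t\circ\Phi_s^{-1}$, and you use $\varphi(s,\cdot)=g\circ\Phi_{s\to T}$.

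Your treatment of regularity is more careful than the paper's. The paper asserts that $\Phi_t$ is a $C^1$ diffeomorphism and hence that $h_s\in C^1(M)$, but a merely locally Lipschitz field does not imply this. It also skips the integrability condition needed for dominated convergence. Your mollification/Gr\"onwall step (or the appeal to classical uniqueness for Lipschitz fields) is what makes the argument rigorous.
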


\begin{proof}
Let \(\mu_t=\Law(X_t)\) and \(\nu_t=\Law(Z_t)\).  We show
\(\mu_t=\nu_t\) by verifying that both satisfy the same continuity equation and applying a duality argument against arbitrary test functions \(h\in C^\infty(M)\).

\smallskip
\noindent\emph{Step 1 (\(\nu_t\) is the pushforward).}
Since \(Z_t=\Phi_t(X_0)\) and \(X_0\sim\mu_0\), the definition of the pushforward measure gives
\[
    \nu_t=(\Phi_t)_\#\mu_0,\qquad t\in[0,1].
\]

\smallskip
\noindent\emph{Step 2 (Weak formulation for \(\mu_t\)).}
For any \(h\in C^\infty(M)\), path-wise differentiability of \(X\) and the
tower property of conditional expectation yield
\begin{equation}
    \frac{d}{dt}\!\int_M h\,d\mu_t
    =\frac{d}{dt}\mathbb{E}[h(X_t)]
    =\mathbb{E}\!\left[\langle\nabla_{\mathbb{S}}h(X_t),\dot X_t\rangle_{X_t}\right]
    =\int_M \langle \nabla_{\mathbb{S}}h(x),v^X(x,t)\rangle_x\,d\mu_t(x).
    \label{eq:app-weak-mu}
\end{equation}

\smallskip
\noindent\emph{Step 3 (Weak formulation for \(\nu_t=(\Phi_t)_\#\mu_0\)).}
Apply the chain rule to \(h(\Phi_t(x))\) and integrate against \(\mu_0\):
\begin{align}
    \frac{d}{dt}\!\int_M h\,d\nu_t
    &= \frac{d}{dt}\!\int_M h(\Phi_t(x))\,d\mu_0(x)
    = \int_M \langle\nabla_{\mathbb{S}}h(\Phi_t(x)),v^X(\Phi_t(x),t)\rangle_{\Phi_t(x)}\,d\mu_0(x)\nonumber\\
    &= \int_M \langle\nabla_{\mathbb{S}}h(x),v^X(x,t)\rangle_x\,d\nu_t(x).
    \label{eq:app-weak-nu}
\end{align}
The interchange of \(d/dt\) and the integral is justified by dominated
convergence, as \(\Phi\) is \(C^1\) and \(M\) is compact.

\smallskip
\noindent\emph{Step 4 (Equality via the transition flow).}
Fix $t \in (0, 1]$. To transport the test function backward from time $t$ to $s \in [0, t]$, we define
\[
    h_s := h \circ \Phi_t \circ \Phi_s^{-1}.
\]
Note that \(h_s \in C^1(M)\), \(h_t = h\), and \(h_0 = h \circ \Phi_t\). By definition, \(h_s(\Phi_s(x)) = h(\Phi_t(x))\), which is identically constant in \(s\). Differentiating this relation with respect to \(s\) using the chain rule gives
\[
    \partial_s h_s(\Phi_s(x)) + \langle \nabla_{\mathbb{S}}h_s(\Phi_s(x)), \partial_s \Phi_s(x) \rangle_{\Phi_s(x)} = 0.
\]
Since \(\partial_s \Phi_s(x) = v^X(\Phi_s(x), s)\), evaluating at \(y = \Phi_s(x)\) yields the backward transport equation:
\begin{equation}
    \partial_s h_s(y) = - \langle \nabla_{\mathbb{S}}h_s(y), v^X(y,s) \rangle_y.
    \label{eq:app-transport-of-h}
\end{equation}

Now, consider the difference \(F_h(s) := \int_M h_s\,d\mu_s - \int_M h_s\,d\nu_s\). Applying the weak formulations \eqref{eq:app-weak-mu} and \eqref{eq:app-weak-nu} extended to time-dependent test functions, we obtain:
\begin{align*}
    \tfrac{d}{ds}\!\int_M h_s\,d\mu_s
    &= \int_M \partial_s h_s(x)\,d\mu_s(x) + \int_M \langle\nabla_{\mathbb{S}}h_s(x),v^X(x,s)\rangle_x\,d\mu_s(x)\\
    &= \int_M \Big( -\langle\nabla_{\mathbb{S}}h_s(x),v^X(x,s)\rangle_x + \langle\nabla_{\mathbb{S}}h_s(x),v^X(x,s)\rangle_x \Big)\,d\mu_s(x) = 0,
\end{align*}
and identically \(\frac{d}{ds}\int_M h_s\,d\nu_s = 0\). Thus, \(F_h(s)\) is constant for \(s \in [0,t]\).

Evaluating at \(s=0\), since \(\mu_0 = \nu_0\), we have
\[
    F_h(0) = \int_M (h \circ \Phi_t)\,d\mu_0 - \int_M (h \circ \Phi_t)\,d\nu_0 = 0.
\]
Evaluating at \(s=t\), we get
\[
    F_h(t) = \int_M h\,d\mu_t - \int_M h\,d\nu_t = 0.
\]
Since this holds for every \(h \in C^\infty(M)\), we conclude \(\mu_t = \nu_t\).
\end{proof}

\subsection{Spherical Transport-Cost Contraction}
\label{sec::Spherical Trans}
\begin{definition}[Spherically rectifiable coupling]
\label{def:app-spherical-rectifiable-coupling}
A coupling \((X_0,X_1)\) on \(M\) is called spherically rectifiable if
\(X_0\neq -X_1\) almost surely and its spherical geodesic interpolation process
\[
    X=\left\{
        \exp_{X_0}\!\bigl(t\log_{X_0}(X_1)\bigr):t\in[0,1]
    \right\}
\]
is rectifiable.  In this case, the process \(Z=\{Z_t:t\in[0,1]\}\) in
\Cref{eq:app-spherical-ode} is called the spherical rectified flow of the
coupling \((X_0,X_1)\), denoted as
\(Z=\texttt{RectFlow}_{\mathbb{S}}((X_0,X_1))\), and \((Z_0,Z_1)\) is called
the spherical rectified coupling of \((X_0,X_1)\), denoted as
\((Z_0,Z_1)=\texttt{RectFlow}_{\mathbb{S}}((X_0,X_1))\).
\end{definition}

The Euclidean rectified-flow theorem controls arbitrary convex costs of the
displacement vector \(X_1-X_0\).  On a curved manifold there is no globally
consistent displacement vector, so the coordinate-free replacement is a convex
nondecreasing function of the geodesic distance.

\begin{theorem}[Contraction of convex distance costs]
\label{thm:app-distance-cost}
Let \((X_0,X_1)\) be spherically rectifiable and let
\((Z_0,Z_1)=\texttt{RectFlow}_{\mathbb{S}}((X_0,X_1))\).  For every convex nondecreasing
\(\phi:[0,\infty)\to\mathbb{R}\),
\begin{equation}
    \mathbb{E}\!\left[\phi\!\left(d_{\mathbb{S}}(Z_0,Z_1)\right)\right]
    \le
    \mathbb{E}\!\left[\phi\!\left(d_{\mathbb{S}}(X_0,X_1)\right)\right].
    \label{eq:app-distance-cost}
\end{equation}
In particular,
\[
    \mathbb{E}\!\left[d_{\mathbb{S}}(Z_0,Z_1)^2\right]
    \le
    \mathbb{E}\!\left[d_{\mathbb{S}}(X_0,X_1)^2\right] .
\]
\end{theorem}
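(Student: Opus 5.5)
The plan is to follow the Euclidean rectified-flow argument of \cite{liu2023flow}, replacing the displacement vector \(X_1-X_0\) by speeds along curves. There are three links. The geodesic distance between the rectified endpoints is bounded by the length of the \(Z\)-trajectory. The speed of \(Z\) at time \(t\) is the norm of a conditional expectation of the interpolation velocity. The interpolation velocity has constant norm \(d_{\mathbb{S}}(X_0,X_1)\) by \Cref{lem:app-geodesic}.

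\textbf{Step 1 (lengths).} Since \(Z\) solves \Cref{eq:app-spherical-ode}, \Cref{lem:app-length-energy} gives
\[
d_{\mathbb{S}}(Z_0,Z_1)\le\int_0^1\|v^X(Z_t,t)\|\,dt .
\]
Because \(\phi\) is nondecreasing and convex, Jensen's inequality on \([0,1]\) then yields
\[
\phi\bigl(d_{\mathbb{S}}(Z_0,Z_1)\bigr)\le\int_0^1\phi\bigl(\|v^X(Z_t,t)\|\bigr)\,dt .
\]
Taking expectations and applying Fubini gives
\[
\mathbb{E}\bigl[\phi(d_{\mathbb{S}}(Z_0,Z_1))\bigr]\le\int_0^1\mathbb{E}\bigl[\phi(\|v^X(Z_t,t)\|)\bigr]\,dt .
\]
If \(\phi\) is unbounded below or not integrable, first replace \(\phi\) by \(\max(\phi,c)\) or use monotone convergence. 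Since \(\phi\) is bounded on \([0,\pi]\), integrability is in fact automatic.

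\textbf{Step 2 (marginals).} By \Cref{thm:app-marginal-preservation}, \(\Law(Z_t)=\Law(X_t)\). Since \(\|v^X(\cdot,t)\|\) is a deterministic function of the state, this gives
\[
\mathbb{E}\bigl[\phi(\|v^X(Z_t,t)\|)\bigr]=\mathbb{E}\bigl[\phi(\|v^X(X_t,t)\|)\bigr].
\]

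\textbf{Step 3 (conditional Jensen).} By definition, \(v^X(X_t,t)=\mathbb{E}[\dot X_t\mid X_t]\), with all tangent vectors viewed in \(\mathbb{R}^K\) as in \Cref{lem:app-conditional-tangent}. \Cref{lem:app-radial-convex} shows that \(w\mapsto\phi(\|w\|)\) is convex on \(\mathbb{R}^K\). Conditional Jensen therefore gives
\[
\phi(\|v^X(X_t,t)\|)\le\mathbb{E}\bigl[\phi(\|\dot X_t\|)\mid X_t\bigr].
\]
By \Cref{lem:app-geodesic}, \(\|\dot X_t\|=d_{\mathbb{S}}(X_0,X_1)\) for every \(t\); the hypothesis \(X_0\neq-X_1\) a.s.\ is used here. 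Taking expectations and integrating over \(t\) produces exactly \(\mathbb{E}[\phi(d_{\mathbb{S}}(X_0,X_1))]\), which closes the chain and proves \Cref{eq:app-distance-cost}. The case \(\phi(r)=r^2\) gives the stated corollary.

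\textbf{Main obstacle.} The main difficulty is technical justification rather than the inequality chain. First, Jensen in Step 1 needs \(\phi\) nondecreasing, so that the length bound can be pushed through \(\phi\) before Jensen is applied in time. Second, the conditional Jensen step needs integrability of \(\dot X_t\). This holds since \(\|\dot X_t\|\le\pi\), so everything is bounded. Third, the pointwise-in-\(t\) identity \(\dot Z_t=v^X(Z_t,t)\) must be compatible with \(v^X\) being defined only \(\mu_t\)-a.e. This is handled because \(Z_t\) has law \(\mu_t\), so evaluation off the support is a null event, and because rectifiability provides a continuous version of \(v^X\).
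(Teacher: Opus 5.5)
Your proposal is correct and follows the paper's proof step for step: the length bound with monotonicity of $\phi$ and Jensen in time, marginal preservation to replace $Z_t$ by $X_t$, conditional Jensen via \Cref{lem:app-radial-convex}, the tower property, and the constant-speed identity from \Cref{lem:app-geodesic}. Your extra remarks (boundedness of $\phi$ on $[0,\pi]$, the bound $\|\dot X_t\|\le\pi$, and the a.e.-definition of $v^X$) are valid and supply justifications that the paper leaves implicit.
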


\begin{proof}
For each ODE trajectory, \Cref{lem:app-length-energy} gives
\[
    d_{\mathbb{S}}(Z_0,Z_1)
    \le
    \int_0^1 \|\dot Z_t\|_{Z_t}\,dt
    =
    \int_0^1 \|v^X(Z_t,t)\|_{Z_t}\,dt .
\]
For each fixed \(t\), \Cref{lem:app-radial-convex} shows that
\(w\mapsto \phi(\|w\|)\) is convex on the tangent space \(T_{X_t}M\).  Using
\(v^X(X_t,t)=\mathbb{E}[\dot X_t\mid X_t]\) and Jensen's inequality in the
tangent space,
\[
\begin{aligned}
    \phi\!\left(\|v^X(X_t,t)\|_{X_t}\right)
    &=
    \phi\!\left(\|\mathbb{E}[\dot X_t\mid X_t]\|_{X_t}\right) \\
    &\le
    \mathbb{E}\!\left[\phi(\|\dot X_t\|_{X_t})\mid X_t\right].
\end{aligned}
\]
Therefore,
\begin{align*}
    \mathbb{E}\!\left[\phi(d_{\mathbb{S}}(Z_0,Z_1))\right]
    &\le
    \mathbb{E}\!\left[
        \phi\!\left(\int_0^1 \|v^X(Z_t,t)\|_{Z_t}\,dt\right)
    \right]
    &&\text{(by \Cref{lem:app-length-energy} and monotonicity of \(\phi\))}
    \\
    &\le
    \int_0^1
    \mathbb{E}\!\left[
        \phi\!\left(\|v^X(Z_t,t)\|_{Z_t}\right)
    \right]\,dt
    &&\text{(by Jensen's inequality in time)}
    \\
    &=
    \int_0^1
    \mathbb{E}\!\left[
        \phi\!\left(\|v^X(X_t,t)\|_{X_t}\right)
    \right]\,dt
    &&\text{(by \Cref{thm:app-marginal-preservation})}
    \\
    &\le
    \int_0^1
    \mathbb{E}\!\left[
        \mathbb{E}\!\left[\phi(\|\dot X_t\|_{X_t})\mid X_t\right]
    \right]\,dt
    &&\text{(by conditional Jensen's inequality)}
    \\
    &=
    \int_0^1
    \mathbb{E}\!\left[\phi(\|\dot X_t\|_{X_t})\right]\,dt
    &&\text{(by the law of total expectation)}
    \\
    &=
    \int_0^1
    \mathbb{E}\!\left[\phi(d_{\mathbb{S}}(X_0,X_1))\right]\,dt
    &&\text{(by \Cref{lem:app-geodesic})}
    \\
    &=
    \mathbb{E}\!\left[\phi(d_{\mathbb{S}}(X_0,X_1))\right]
    &&\text{(since the interval has length one)}
\end{align*}
which is \Cref{eq:app-distance-cost}.
\end{proof}

\subsection{The Straightening Effect}
\label{app::Straightening}

A spherically rectifiable coupling \((X_0,X_1)\) is called straight if it is a
fixed point of spherical rectification, that is, if
\((Z_0,Z_1)=\texttt{RectFlow}_{\mathbb{S}}((X_0,X_1))=(X_0,X_1)\).

For a spherically rectifiable coupling \((X_0,X_1)\), let
\[
    X_t=\exp_{X_0}\!\bigl(t\log_{X_0}(X_1)\bigr), \qquad t\in[0,1],
\]
be its spherical geodesic interpolation.  Define
\[
    V_{\mathbb{S}}((X_0,X_1))
    =
    \int_0^1
    \mathbb{E}\!\left[
        \|\dot X_t-v^X(X_t,t)\|_{X_t}^2
    \right]dt .
\]
For \(Z=\texttt{RectFlow}_{\mathbb{S}}((X_0,X_1))\), define
\[
    S_{\mathbb{S}}(Z)
    =
    \int_0^1
    \mathbb{E}\!\left[\|\dot Z_t\|_{Z_t}^2\right]dt
    -
    \mathbb{E}\!\left[d_{\mathbb{S}}(Z_0,Z_1)^2\right].
\]

\begin{theorem}[Spherical straight couplings]
\label{thm:app-straight-coupling}
Assume \((X_0,X_1)\) is spherically rectifiable, let \(X_t\) be its spherical
geodesic interpolation, and let
\(Z=\texttt{RectFlow}_{\mathbb{S}}((X_0,X_1))\).  Then \((X_0,X_1)\) is straight
iff the following equivalent statements hold:
\begin{enumerate}
    \item The squared spherical transport cost is unchanged:
    \[
        \mathbb{E}\!\left[d_{\mathbb{S}}(Z_0,Z_1)^2\right]
        =
        \mathbb{E}\!\left[d_{\mathbb{S}}(X_0,X_1)^2\right].
    \]
    \item The coupling is a fixed point of spherical rectification:
    \[
        (Z_0,Z_1)=(X_0,X_1).
    \]
    \item The rectified flow coincides with the spherical geodesic interpolation:
    \(Z_t=X_t\) almost surely for all \(t\in[0,1]\).
    \item \(V_{\mathbb{S}}((X_0,X_1))=0\).
\end{enumerate}
\end{theorem}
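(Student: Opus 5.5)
The plan is to prove the cycle of implications $3\Rightarrow 2\Rightarrow 1\Rightarrow 4\Rightarrow 3$. Since being straight is by definition statement 2, this shows that straightness is equivalent to each of the four statements.

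The first two implications are immediate. If $Z_t=X_t$ almost surely for all $t$, then in particular $(Z_0,Z_1)=(X_0,X_1)$. If $(Z_0,Z_1)=(X_0,X_1)$, the two squared spherical transport costs coincide.

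\emph{Step $1\Rightarrow 4$.} I will refine the chain of inequalities in the proof of \Cref{thm:app-distance-cost}, taking $\phi(r)=r^2$ and keeping track of the slack. All quantities are finite because $d_{\mathbb{S}}\le\pi$ and $\|\dot X_t\|=d_{\mathbb{S}}(X_0,X_1)\le\pi$. The chain has four parts.
\begin{itemize}
\item By \Cref{lem:app-length-energy} applied pathwise to $Z$, $\mathbb{E}[d_{\mathbb{S}}(Z_0,Z_1)^2]\le\int_0^1\mathbb{E}\|v^X(Z_t,t)\|^2\,dt$.
\item By \Cref{thm:app-marginal-preservation}, the right-hand side equals $\int_0^1\mathbb{E}\|v^X(X_t,t)\|^2\,dt$.
\item By the Pythagorean decomposition of \Cref{lem:app-conditional-tangent}, applied with $U=\dot X_t$, this equals $\int_0^1\mathbb{E}\|\dot X_t\|^2\,dt-V_{\mathbb{S}}((X_0,X_1))$.
\item By \Cref{lem:app-geodesic}, $\int_0^1\mathbb{E}\|\dot X_t\|^2\,dt=\mathbb{E}[d_{\mathbb{S}}(X_0,X_1)^2]$.
\end{itemize}
Combining these gives
\[
  0\le V_{\mathbb{S}}((X_0,X_1))\le \mathbb{E}\bigl[d_{\mathbb{S}}(X_0,X_1)^2\bigr]-\mathbb{E}\bigl[d_{\mathbb{S}}(Z_0,Z_1)^2\bigr].
\]
Statement 1 makes the right-hand side zero, so $V_{\mathbb{S}}=0$.

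\emph{Step $4\Rightarrow 3$.} This is where I expect the main technical care to be needed. The condition $V_{\mathbb{S}}=0$ says that $\dot X_t=v^X(X_t,t)$ holds $\mathbb{P}\otimes dt$-almost everywhere. By Fubini, for almost every sample path the identity holds for Lebesgue-almost every $t$.

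On such a path, $t\mapsto\dot X_t$ is continuous. The assumption $X_0\neq -X_1$ almost surely guarantees that the slerp formula of \Cref{lem:app-geodesic} is smooth in $t$. The map $t\mapsto v^X(X_t,t)$ is also continuous, because rectifiability (\Cref{def:app-spherical-rectifiable}) makes $v^X$ jointly continuous. Two continuous functions that agree almost everywhere agree everywhere, so the path $X_t$ is a $C^1$ solution of the characteristic ODE \eqref{eq:app-spherical-ode} with initial value $X_0$.

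The Cauchy--Lipschitz uniqueness invoked in \Cref{def:app-spherical-rectifiable} then forces $X_t=\Phi_t(X_0)=Z_t$ for all $t$, almost surely. This gives statement 3 and closes the cycle.

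The delicate point is this passage from an $L^2$ identity to a pathwise ODE, through Fubini plus continuity. I would verify explicitly that the measurability and continuity hypotheses of rectifiability suffice. If joint continuity were only available on $\operatorname{supp}(X_t)$, I would note that $X_t\in\operatorname{supp}(X_t)$ almost surely, so the argument is unaffected.
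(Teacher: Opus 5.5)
Your proposal is correct and follows essentially the same route as the paper: the cycle $3\Rightarrow2\Rightarrow1\Rightarrow4\Rightarrow3$, the same $\phi(r)=r^2$ chain from \Cref{thm:app-distance-cost} for $1\Rightarrow4$, and Fubini plus Cauchy--Lipschitz uniqueness for $4\Rightarrow3$. The paper records the slack exactly as $\mathbb{E}[d_{\mathbb{S}}(X_0,X_1)^2]-\mathbb{E}[d_{\mathbb{S}}(Z_0,Z_1)^2]=S_{\mathbb{S}}(Z)+V_{\mathbb{S}}((X_0,X_1))$ rather than as your inequality, and it reaches the ODE through the integral form $X_t=X_0+\int_0^t v^X(X_s,s)\,ds$ where you use continuity of both sides; under the rectifiability assumption these differences are purely cosmetic.
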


\begin{proof}
The implications \(3\Rightarrow2\Rightarrow1\) are immediate.  To show
\(1\Rightarrow4\), take \(\phi(r)=r^2\) in the proof of
\Cref{thm:app-distance-cost}.  The same calculation gives
\[
\begin{aligned}
    \mathbb{E}\!\left[d_{\mathbb{S}}(X_0,X_1)^2\right]
    &=
    \int_0^1
    \mathbb{E}\!\left[\|\dot X_t\|_{X_t}^2\right]dt \\
    &=
    V_{\mathbb{S}}((X_0,X_1))
    +
    \int_0^1
    \mathbb{E}\!\left[\|v^X(X_t,t)\|_{X_t}^2\right]dt \\
    &=
    V_{\mathbb{S}}((X_0,X_1))
    +
    \int_0^1
    \mathbb{E}\!\left[\|\dot Z_t\|_{Z_t}^2\right]dt ,
\end{aligned}
\]
Here the first equality uses the constant speed of the spherical geodesic in
\Cref{lem:app-geodesic}, the second equality uses the conditional variance
decomposition in \Cref{lem:app-conditional-tangent}, and the last equality uses
marginal preservation.  Therefore
\begin{equation}
    \mathbb{E}\!\left[d_{\mathbb{S}}(X_0,X_1)^2\right]
    -
    \mathbb{E}\!\left[d_{\mathbb{S}}(Z_0,Z_1)^2\right]
    =
    S_{\mathbb{S}}(Z)+V_{\mathbb{S}}((X_0,X_1)).
    \label{eq:app-straightening-identity}
\end{equation}
Both terms on the right are nonnegative: \(V_{\mathbb{S}}\ge0\) by definition,
and \(S_{\mathbb{S}}(Z)\ge0\) by \Cref{lem:app-length-energy}.  Therefore
equality of the squared costs implies \(V_{\mathbb{S}}((X_0,X_1))=0\).

It remains to show \(4\Rightarrow3\).  If \(V_{\mathbb{S}}((X_0,X_1))=0\), then
\[
    \dot X_t=v^X(X_t,t) \qquad \text{a.s. for a.e. } t.
\]
By Fubini's theorem, after removing a single probability-zero set of sample
paths, this equality holds for Lebesgue-a.e. \(t\) along each remaining path.
Since the paths are path-wise continuously differentiable, the fundamental
theorem of calculus in the ambient Euclidean space gives, for every
\(t\in[0,1]\),
\[
    X_t=X_0+\int_0^t v^X(X_s,s)\,ds .
\]
Thus \(X\) satisfies the same characteristic equation as \(Z\) with the same
initial condition.  By the uniqueness assumed in spherical rectifiability,
\(X=Z\).
\end{proof}

\begin{theorem}[Spherical reflow straightening rate]
\label{thm:app-reflow-rate}
Let \((X_0,X_1)\) be a coupling on \(M\), let
\(N_{\text{reflow}}\ge1\) be an integer, and set
\((Z_0^0,Z_1^0)=(X_0,X_1)\).  Given \((Z_0^k,Z_1^k)\), let
\[
    Z^{k+1}=\texttt{RectFlow}_{\mathbb{S}}((Z_0^k,Z_1^k)),
\]
and write \((Z_0^{k+1},Z_1^{k+1})\) for its endpoint coupling.
Assume each \((Z_0^k,Z_1^k)\) is spherically rectifiable for
\(k=0,\ldots,N_{\text{reflow}}-1\).  Then
\begin{equation}
    \sum_{k=0}^{N_{\text{reflow}}-1}
    \left[
        S_{\mathbb{S}}(Z^{k+1})
        +
        V_{\mathbb{S}}((Z_0^k,Z_1^k))
    \right]
    \le
    \mathbb{E}\!\left[d_{\mathbb{S}}(X_0,X_1)^2\right].
    \label{eq:app-reflow-sum}
\end{equation}
Consequently, if
\(\mathbb{E}[d_{\mathbb{S}}(X_0,X_1)^2]<+\infty\), then
\[
    \min_{0\le k\le N_{\text{reflow}}-1}
    \left[
        S_{\mathbb{S}}(Z^{k+1})
        +
        V_{\mathbb{S}}((Z_0^k,Z_1^k))
    \right]
    =
    O(1/N_{\text{reflow}}).
\]
\end{theorem}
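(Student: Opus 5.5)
The argument telescopes the one-step identity \eqref{eq:app-straightening-identity} from the proof of \Cref{thm:app-straight-coupling}. For each $k=0,\ldots,N_{\text{reflow}}-1$, the coupling $(Z_0^k,Z_1^k)$ is spherically rectifiable by assumption, and $Z^{k+1}=\texttt{RectFlow}_{\mathbb{S}}((Z_0^k,Z_1^k))$. Applying that identity with $(X_0,X_1)$ replaced by $(Z_0^k,Z_1^k)$ gives
\[
    \mathbb{E}\!\left[d_{\mathbb{S}}(Z_0^k,Z_1^k)^2\right]
    -
    \mathbb{E}\!\left[d_{\mathbb{S}}(Z_0^{k+1},Z_1^{k+1})^2\right]
    =
    S_{\mathbb{S}}(Z^{k+1})+V_{\mathbb{S}}((Z_0^k,Z_1^k)).
\]
The identity was derived from three facts: the constant speed of geodesics (\Cref{lem:app-geodesic}), the variance decomposition (\Cref{lem:app-conditional-tangent}), and marginal preservation (\Cref{thm:app-marginal-preservation}). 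All three are available at every stage under the rectifiability assumption.

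Summing over $k$ from $0$ to $N_{\text{reflow}}-1$ telescopes the left-hand side to
\[
    \mathbb{E}[d_{\mathbb{S}}(X_0,X_1)^2]-\mathbb{E}[d_{\mathbb{S}}(Z_0^{N_{\text{reflow}}},Z_1^{N_{\text{reflow}}})^2].
\]
The subtracted term is nonnegative, so dropping it yields \eqref{eq:app-reflow-sum}.

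For the rate, each summand is nonnegative: $V_{\mathbb{S}}\ge 0$ by definition, and $S_{\mathbb{S}}\ge 0$ by the energy bound \eqref{eq:app-energy} of \Cref{lem:app-length-energy}, applied pathwise and then in expectation. The minimum of $N_{\text{reflow}}$ nonnegative numbers is at most their average. The minimum is therefore at most $\mathbb{E}[d_{\mathbb{S}}(X_0,X_1)^2]/N_{\text{reflow}}$, which is $O(1/N_{\text{reflow}})$ when the initial cost is finite. On the sphere the cost is in fact automatically bounded by $\pi^2$.

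The main obstacle is ensuring that the identity is well defined, with no $\infty-\infty$ step, when subtracting. Each term is finite, since all distances lie in $[0,\pi]$ and $v^X$ is continuous on the compact set $M\times[0,1]$. The kinetic energy $\int_0^1\mathbb{E}\|\dot Z_t^{k+1}\|^2\,dt$ is therefore finite, and the rearrangement is legitimate. A second point to check is that the $S_{\mathbb{S}}(Z^{k+1})$ appearing in the identity is exactly the quantity in the theorem. It is: $Z^{k+1}$ is the flow whose speed equals $v^{Z^k}$ evaluated along it, and the last equality in the earlier derivation, by marginal preservation, matches precisely that energy term.
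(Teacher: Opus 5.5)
Your proposal is correct and matches the paper's proof: apply the one-step identity \eqref{eq:app-straightening-identity} to each coupling $(Z_0^k,Z_1^k)$, telescope, drop the nonnegative terminal cost $\mathbb{E}[d_{\mathbb{S}}(Z_0^{N_{\text{reflow}}},Z_1^{N_{\text{reflow}}})^2]$, and bound the minimum by the average. Your extra checks (finiteness via $d_{\mathbb{S}}\le\pi$, and nonnegativity of $S_{\mathbb{S}}$ and $V_{\mathbb{S}}$) are fine, with one notational fix: the field driving $Z^{k+1}$ is the expected velocity of the geodesic interpolation of $(Z_0^k,Z_1^k)$, not $v^{Z^k}$ of the flow $Z^k$ itself, which would be the previous stage's field.
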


\begin{proof}
Apply \Cref{eq:app-straightening-identity} to each rectification step:
\[
    \mathbb{E}\!\left[d_{\mathbb{S}}(Z_0^k,Z_1^k)^2\right]
    -
    \mathbb{E}\!\left[d_{\mathbb{S}}(Z_0^{k+1},Z_1^{k+1})^2\right]
    =
    S_{\mathbb{S}}(Z^{k+1})
    +
    V_{\mathbb{S}}((Z_0^k,Z_1^k)).
\]
Summing over \(k=0,\ldots,N_{\text{reflow}}-1\) telescopes and yields
\[
    \sum_{k=0}^{N_{\text{reflow}}-1}
    \left[
        S_{\mathbb{S}}(Z^{k+1})
        +
        V_{\mathbb{S}}((Z_0^k,Z_1^k))
    \right]
    =
    \mathbb{E}\!\left[d_{\mathbb{S}}(Z_0^0,Z_1^0)^2\right]
    -
    \mathbb{E}\!\left[d_{\mathbb{S}}(Z_0^{N_{\text{reflow}}},
    Z_1^{N_{\text{reflow}}})^2\right].
\]
Since the second term on the right is nonnegative and
\((Z_0^0,Z_1^0)=(X_0,X_1)\), \Cref{eq:app-reflow-sum} follows.  The
\(O(1/N_{\text{reflow}})\) statement follows from the explicit bound
\[
    \min_{0\le k\le N_{\text{reflow}}-1}
    \left[
        S_{\mathbb{S}}(Z^{k+1})
        +
        V_{\mathbb{S}}((Z_0^k,Z_1^k))
    \right]
    \le
    \frac{
        \mathbb{E}\!\left[d_{\mathbb{S}}(X_0,X_1)^2\right]
    }{N_{\text{reflow}}}.
\]
\end{proof}

\

%% file: checklist.tex
\section*{NeurIPS Paper Checklist}

\begin{enumerate}

\item {\bf Claims}
    \item[] Question: Do the main claims made in the abstract and introduction accurately reflect the paper's contributions and scope?
    \item[] Answer: \answerYes{} 
    \item[] Justification: The main claims in the abstract and introduction are aligned with the proposed method, theoretical motivation, and empirical results. We clearly state the scope of the work, including that the proposed rectification strategy is designed for categorical distributions on the probability simplex and is evaluated on the considered datasets.
    \item[] Guidelines:
    \begin{itemize}
        \item The answer \answerNA{} means that the abstract and introduction do not include the claims made in the paper.
        \item The abstract and/or introduction should clearly state the claims made, including the contributions made in the paper and important assumptions and limitations. A \answerNo{} or \answerNA{} answer to this question will not be perceived well by the reviewers. 
        \item The claims made should match theoretical and experimental results, and reflect how much the results can be expected to generalize to other settings. 
        \item It is fine to include aspirational goals as motivation as long as it is clear that these goals are not attained by the paper. 
    \end{itemize}

\item {\bf Limitations}
    \item[] Question: Does the paper discuss the limitations of the work performed by the authors?
    \item[] Answer: \answerYes{}
    \item[] Justification: The paper discusses the limitations of the proposed manifold-based formulation in the ``Limitations and Broader Impact'' section. In particular, it notes that the Fisher--Rao geometry treats categorical states symmetrically and does not incorporate pre-existing semantic priors, which may limit performance in tasks where hierarchical semantic relationships among tokens are important.
    \item[] Guidelines:
    \begin{itemize}
        \item The answer \answerNA{} means that the paper has no limitation while the answer \answerNo{} means that the paper has limitations, but those are not discussed in the paper. 
        \item The authors are encouraged to create a separate ``Limitations'' section in their paper.
        \item The paper should point out any strong assumptions and how robust the results are to violations of these assumptions (e.g., independence assumptions, noiseless settings, model well-specification, asymptotic approximations only holding locally). The authors should reflect on how these assumptions might be violated in practice and what the implications would be.
        \item The authors should reflect on the scope of the claims made, e.g., if the approach was only tested on a few datasets or with a few runs. In general, empirical results often depend on implicit assumptions, which should be articulated.
        \item The authors should reflect on the factors that influence the performance of the approach. For example, a facial recognition algorithm may perform poorly when image resolution is low or images are taken in low lighting. Or a speech-to-text system might not be used reliably to provide closed captions for online lectures because it fails to handle technical jargon.
        \item The authors should discuss the computational efficiency of the proposed algorithms and how they scale with dataset size.
        \item If applicable, the authors should discuss possible limitations of their approach to address problems of privacy and fairness.
        \item While the authors might fear that complete honesty about limitations might be used by reviewers as grounds for rejection, a worse outcome might be that reviewers discover limitations that aren't acknowledged in the paper. The authors should use their best judgment and recognize that individual actions in favor of transparency play an important role in developing norms that preserve the integrity of the community. Reviewers will be specifically instructed to not penalize honesty concerning limitations.
    \end{itemize}

\item {\bf Theory assumptions and proofs}
    \item[] Question: For each theoretical result, does the paper provide the full set of assumptions and a complete (and correct) proof?
    \item[] Answer: \answerYes{} 
    \item[] Justification: The paper states the assumptions required for the theoretical results and provides complete proofs in the appendix. The main text presents the key formulations and proof intuitions, while the formal derivations are given with the corresponding theorem and lemma statements properly numbered and cross-referenced.
    \item[] Guidelines:
    \begin{itemize}
        \item The answer \answerNA{} means that the paper does not include theoretical results. 
        \item All the theorems, formulas, and proofs in the paper should be numbered and cross-referenced.
        \item All assumptions should be clearly stated or referenced in the statement of any theorems.
        \item The proofs can either appear in the main paper or the supplemental material, but if they appear in the supplemental material, the authors are encouraged to provide a short proof sketch to provide intuition. 
        \item Inversely, any informal proof provided in the core of the paper should be complemented by formal proofs provided in appendix or supplemental material.
        \item Theorems and Lemmas that the proof relies upon should be properly referenced. 
    \end{itemize}

    \item {\bf Experimental result reproducibility}
    \item[] Question: Does the paper fully disclose all the information needed to reproduce the main experimental results of the paper to the extent that it affects the main claims and/or conclusions of the paper (regardless of whether the code and data are provided or not)?
    \item[] Answer: \answerYes{} 
    \item[] Justification: The paper provides the information needed to reproduce the main experimental results, including the datasets, evaluation metrics, model configurations, training procedures, and baseline methods. Additional implementation and experimental details are provided in the appendix.
    \item[] Guidelines:
    \begin{itemize}
        \item The answer \answerNA{} means that the paper does not include experiments.
        \item If the paper includes experiments, a \answerNo{} answer to this question will not be perceived well by the reviewers: Making the paper reproducible is important, regardless of whether the code and data are provided or not.
        \item If the contribution is a dataset and\slash or model, the authors should describe the steps taken to make their results reproducible or verifiable. 
        \item Depending on the contribution, reproducibility can be accomplished in various ways. For example, if the contribution is a novel architecture, describing the architecture fully might suffice, or if the contribution is a specific model and empirical evaluation, it may be necessary to either make it possible for others to replicate the model with the same dataset, or provide access to the model. In general. releasing code and data is often one good way to accomplish this, but reproducibility can also be provided via detailed instructions for how to replicate the results, access to a hosted model (e.g., in the case of a large language model), releasing of a model checkpoint, or other means that are appropriate to the research performed.
        \item While NeurIPS does not require releasing code, the conference does require all submissions to provide some reasonable avenue for reproducibility, which may depend on the nature of the contribution. For example
        \begin{enumerate}
            \item If the contribution is primarily a new algorithm, the paper should make it clear how to reproduce that algorithm.
            \item If the contribution is primarily a new model architecture, the paper should describe the architecture clearly and fully.
            \item If the contribution is a new model (e.g., a large language model), then there should either be a way to access this model for reproducing the results or a way to reproduce the model (e.g., with an open-source dataset or instructions for how to construct the dataset).
            \item We recognize that reproducibility may be tricky in some cases, in which case authors are welcome to describe the particular way they provide for reproducibility. In the case of closed-source models, it may be that access to the model is limited in some way (e.g., to registered users), but it should be possible for other researchers to have some path to reproducing or verifying the results.
        \end{enumerate}
    \end{itemize}

\item {\bf Open access to data and code}
    \item[] Question: Does the paper provide open access to the data and code, with sufficient instructions to faithfully reproduce the main experimental results, as described in supplemental material?
    \item[] Answer: \answerYes{} 
    \item[] Justification: The code has been submitted as a compressed archive in the supplementary material. We include instructions for reproducing the main experimental results, and the code will be made publicly available upon acceptance.
    \item[] Guidelines:
    \begin{itemize}
        \item The answer \answerNA{} means that paper does not include experiments requiring code.
        \item Please see the NeurIPS code and data submission guidelines (\url{https://neurips.cc/public/guides/CodeSubmissionPolicy}) for more details.
        \item While we encourage the release of code and data, we understand that this might not be possible, so \answerNo{} is an acceptable answer. Papers cannot be rejected simply for not including code, unless this is central to the contribution (e.g., for a new open-source benchmark).
        \item The instructions should contain the exact command and environment needed to run to reproduce the results. See the NeurIPS code and data submission guidelines (\url{https://neurips.cc/public/guides/CodeSubmissionPolicy}) for more details.
        \item The authors should provide instructions on data access and preparation, including how to access the raw data, preprocessed data, intermediate data, and generated data, etc.
        \item The authors should provide scripts to reproduce all experimental results for the new proposed method and baselines. If only a subset of experiments are reproducible, they should state which ones are omitted from the script and why.
        \item At submission time, to preserve anonymity, the authors should release anonymized versions (if applicable).
        \item Providing as much information as possible in supplemental material (appended to the paper) is recommended, but including URLs to data and code is permitted.
    \end{itemize}

\item {\bf Experimental setting/details}
    \item[] Question: Does the paper specify all the training and test details (e.g., data splits, hyperparameters, how they were chosen, type of optimizer) necessary to understand the results?
    \item[] Answer: \answerYes{}
    \item[] Justification:The paper specifies the experimental settings, including datasets, data preprocessing, train/test protocols, model architectures, hyperparameters, optimizers, and evaluation metrics. These details are provided in the experimental section and appendix.
    \item[] Guidelines:
    \begin{itemize}
        \item The answer \answerNA{} means that the paper does not include experiments.
        \item The experimental setting should be presented in the core of the paper to a level of detail that is necessary to appreciate the results and make sense of them.
        \item The full details can be provided either with the code, in appendix, or as supplemental material.
    \end{itemize}

\item {\bf Experiment statistical significance}
    \item[] Question: Does the paper report error bars suitably and correctly defined or other appropriate information about the statistical significance of the experiments?
    \item[] Answer: \answerYes{} 
    \item[] Justification: We report error bars for the main experiments where repeated runs are computationally feasible. For experiments without explicit error bars, especially sample-based evaluations, we reduce estimation variance by increasing the number of generated samples and apply the same evaluation protocol across all methods.
    \item[] Guidelines:
    \begin{itemize}
        \item The answer \answerNA{} means that the paper does not include experiments.
        \item The authors should answer \answerYes{} if the results are accompanied by error bars, confidence intervals, or statistical significance tests, at least for the experiments that support the main claims of the paper.
        \item The factors of variability that the error bars are capturing should be clearly stated (for example, train/test split, initialization, random drawing of some parameter, or overall run with given experimental conditions).
        \item The method for calculating the error bars should be explained (closed form formula, call to a library function, bootstrap, etc.)
        \item The assumptions made should be given (e.g., Normally distributed errors).
        \item It should be clear whether the error bar is the standard deviation or the standard error of the mean.
        \item It is OK to report 1-sigma error bars, but one should state it. The authors should preferably report a 2-sigma error bar than state that they have a 96\% CI, if the hypothesis of Normality of errors is not verified.
        \item For asymmetric distributions, the authors should be careful not to show in tables or figures symmetric error bars that would yield results that are out of range (e.g., negative error rates).
        \item If error bars are reported in tables or plots, the authors should explain in the text how they were calculated and reference the corresponding figures or tables in the text.
    \end{itemize}

\item {\bf Experiments compute resources}
    \item[] Question: For each experiment, does the paper provide sufficient information on the computer resources (type of compute workers, memory, time of execution) needed to reproduce the experiments?
    \item[] Answer: \answerYes{} 
    \item[] Justification: The paper reports the compute resources used for the experiments, including the type of hardware, GPU memory, and approximate training or evaluation time. These details are provided to facilitate reproducibility of the reported results.
    \item[] Guidelines:
    \begin{itemize}
        \item The answer \answerNA{} means that the paper does not include experiments.
        \item The paper should indicate the type of compute workers CPU or GPU, internal cluster, or cloud provider, including relevant memory and storage.
        \item The paper should provide the amount of compute required for each of the individual experimental runs as well as estimate the total compute. 
        \item The paper should disclose whether the full research project required more compute than the experiments reported in the paper (e.g., preliminary or failed experiments that didn't make it into the paper). 
    \end{itemize}
    
\item {\bf Code of ethics}
    \item[] Question: Does the research conducted in the paper conform, in every respect, with the NeurIPS Code of Ethics \url{https://neurips.cc/public/EthicsGuidelines}?
    \item[] Answer: \answerYes{}
    \item[] Justification: The research conforms to the NeurIPS Code of Ethics. The work is methodological in nature, uses standard benchmark datasets, and does not involve personally identifiable information, human-subject experiments, or unsafe data collection.
    \item[] Guidelines:
    \begin{itemize}
        \item The answer \answerNA{} means that the authors have not reviewed the NeurIPS Code of Ethics.
        \item If the authors answer \answerNo, they should explain the special circumstances that require a deviation from the Code of Ethics.
        \item The authors should make sure to preserve anonymity (e.g., if there is a special consideration due to laws or regulations in their jurisdiction).
    \end{itemize}

\item {\bf Broader impacts}
    \item[] Question: Does the paper discuss both potential positive societal impacts and negative societal impacts of the work performed?
    \item[] Answer: \answerYes{}
    \item[] Justification: The paper discusses both positive and negative broader impacts in the ``Limitations and Broader Impact'' section. It highlights potential benefits for efficient discrete generative modeling in language, biology, and scientific domains, while also noting possible misuse risks associated with improved generative models, such as disinformation or malicious content generation.
    \item[] Guidelines:
    \begin{itemize}
        \item The answer \answerNA{} means that there is no societal impact of the work performed.
        \item If the authors answer \answerNA{} or \answerNo, they should explain why their work has no societal impact or why the paper does not address societal impact.
        \item Examples of negative societal impacts include potential malicious or unintended uses (e.g., disinformation, generating fake profiles, surveillance), fairness considerations (e.g., deployment of technologies that could make decisions that unfairly impact specific groups), privacy considerations, and security considerations.
        \item The conference expects that many papers will be foundational research and not tied to particular applications, let alone deployments. However, if there is a direct path to any negative applications, the authors should point it out. For example, it is legitimate to point out that an improvement in the quality of generative models could be used to generate Deepfakes for disinformation. On the other hand, it is not needed to point out that a generic algorithm for optimizing neural networks could enable people to train models that generate Deepfakes faster.
        \item The authors should consider possible harms that could arise when the technology is being used as intended and functioning correctly, harms that could arise when the technology is being used as intended but gives incorrect results, and harms following from (intentional or unintentional) misuse of the technology.
        \item If there are negative societal impacts, the authors could also discuss possible mitigation strategies (e.g., gated release of models, providing defenses in addition to attacks, mechanisms for monitoring misuse, mechanisms to monitor how a system learns from feedback over time, improving the efficiency and accessibility of ML).
    \end{itemize}
    
\item {\bf Safeguards}
    \item[] Question: Does the paper describe safeguards that have been put in place for responsible release of data or models that have a high risk for misuse (e.g., pre-trained language models, image generators, or scraped datasets)?
    \item[] Answer: \answerNA{}
    \item[] Justification: The paper does not release high-risk pretrained generative models, scraped datasets, or deployment-ready systems that would require additional safeguards. The contribution is a foundational methodological framework evaluated on standard categorical datasets.
    \item[] Guidelines:
    \begin{itemize}
        \item The answer \answerNA{} means that the paper poses no such risks.
        \item Released models that have a high risk for misuse or dual-use should be released with necessary safeguards to allow for controlled use of the model, for example by requiring that users adhere to usage guidelines or restrictions to access the model or implementing safety filters. 
        \item Datasets that have been scraped from the Internet could pose safety risks. The authors should describe how they avoided releasing unsafe images.
        \item We recognize that providing effective safeguards is challenging, and many papers do not require this, but we encourage authors to take this into account and make a best faith effort.
    \end{itemize}

\item {\bf Licenses for existing assets}
    \item[] Question: Are the creators or original owners of assets (e.g., code, data, models), used in the paper, properly credited and are the license and terms of use explicitly mentioned and properly respected?
    \item[] Answer: \answerYes{} 
    \item[] Justification: The paper properly credits the existing datasets, codebases, and software libraries used in the experiments. The corresponding references, versions, and license information are provided where available.
    \item[] Guidelines:
    \begin{itemize}
        \item The answer \answerNA{} means that the paper does not use existing assets.
        \item The authors should cite the original paper that produced the code package or dataset.
        \item The authors should state which version of the asset is used and, if possible, include a URL.
        \item The name of the license (e.g., CC-BY 4.0) should be included for each asset.
        \item For scraped data from a particular source (e.g., website), the copyright and terms of service of that source should be provided.
        \item If assets are released, the license, copyright information, and terms of use in the package should be provided. For popular datasets, \url{paperswithcode.com/datasets} has curated licenses for some datasets. Their licensing guide can help determine the license of a dataset.
        \item For existing datasets that are re-packaged, both the original license and the license of the derived asset (if it has changed) should be provided.
        \item If this information is not available online, the authors are encouraged to reach out to the asset's creators.
    \end{itemize}

\item {\bf New assets}
    \item[] Question: Are new assets introduced in the paper well documented and is the documentation provided alongside the assets?
    \item[] Answer: \answerYes{}
    \item[] Justification: The paper releases the implementation of the proposed method as a new code asset, together with documentation and instructions for reproducing the main experimental results.
    \item[] Guidelines:
    \begin{itemize}
        \item The answer \answerNA{} means that the paper does not release new assets.
        \item Researchers should communicate the details of the dataset\slash code\slash model as part of their submissions via structured templates. This includes details about training, license, limitations, etc. 
        \item The paper should discuss whether and how consent was obtained from people whose asset is used.
        \item At submission time, remember to anonymize your assets (if applicable). You can either create an anonymized URL or include an anonymized zip file.
    \end{itemize}

\item {\bf Crowdsourcing and research with human subjects}
    \item[] Question: For crowdsourcing experiments and research with human subjects, does the paper include the full text of instructions given to participants and screenshots, if applicable, as well as details about compensation (if any)? 
    \item[] Answer: \answerNA{} 
    \item[] Justification: The paper does not involve crowdsourcing experiments or research with human subjects.
    \item[] Guidelines:
    \begin{itemize}
        \item The answer \answerNA{} means that the paper does not involve crowdsourcing nor research with human subjects.
        \item Including this information in the supplemental material is fine, but if the main contribution of the paper involves human subjects, then as much detail as possible should be included in the main paper. 
        \item According to the NeurIPS Code of Ethics, workers involved in data collection, curation, or other labor should be paid at least the minimum wage in the country of the data collector. 
    \end{itemize}

\item {\bf Institutional review board (IRB) approvals or equivalent for research with human subjects}
    \item[] Question: Does the paper describe potential risks incurred by study participants, whether such risks were disclosed to the subjects, and whether Institutional Review Board (IRB) approvals (or an equivalent approval/review based on the requirements of your country or institution) were obtained?
    \item[] Answer: \answerNA{}
    \item[] Justification: The paper does not involve crowdsourcing or human-subject research, and therefore IRB approval or equivalent review is not applicable.
    \item[] Guidelines:
    \begin{itemize}
        \item The answer \answerNA{} means that the paper does not involve crowdsourcing nor research with human subjects.
        \item Depending on the country in which research is conducted, IRB approval (or equivalent) may be required for any human subjects research. If you obtained IRB approval, you should clearly state this in the paper. 
        \item We recognize that the procedures for this may vary significantly between institutions and locations, and we expect authors to adhere to the NeurIPS Code of Ethics and the guidelines for their institution. 
        \item For initial submissions, do not include any information that would break anonymity (if applicable), such as the institution conducting the review.
    \end{itemize}

\item {\bf Declaration of LLM usage}
    \item[] Question: Does the paper describe the usage of LLMs if it is an important, original, or non-standard component of the core methods in this research? Note that if the LLM is used only for writing, editing, or formatting purposes and does \emph{not} impact the core methodology, scientific rigor, or originality of the research, declaration is not required.
    \item[] Answer: \answerNA{} 
    \item[] Justification: The core method development in this paper does not involve LLMs as an important, original, or non-standard component. Any use of LLMs, if any, was limited to writing, editing, or formatting assistance and did not affect the methodology, experiments, or scientific claims.
    \item[] Guidelines:
    \begin{itemize}
        \item The answer \answerNA{} means that the core method development in this research does not involve LLMs as any important, original, or non-standard components.
        \item Please refer to our LLM policy in the NeurIPS handbook for what should or should not be described.
    \end{itemize}

\end{enumerate}